\newcommand{\tensor}[1]{\mathbf{#1}}
\newcommand{\mexp}[1]{\mathbb{E}\left\{#1\right\}}
\newcommand{\trans}{\top}
\newcommand{\cov}{\mathrm{Cov}}
\newcommand{\half}{\frac{1}{2}}
\numberwithin{equation}{section}
\renewcommand{\theequation}{\thesection.\arabic{equation}}
\newtheorem{thm}{Theorem}[section]
\def\mathcenterto#1#2{\mathclap{\phantom{#1}\mathclap{#2}}\phantom{#1}}
\let\old@widetilde\widetilde
\def\widetildeto#1#2{\mathcenterto{#2}{\old@widetilde{\mathcenterto{#1}{#2\,}}}}
\newcommand{\wtl}{\widetildeto{I}{L}}
\newcommand{\wtX}{\widetildeto{X}{\bm X}}
\newcommand{\wty}{\widetildeto{y}{\bm y}}
\newcommand{\wtm}{\widetildeto{y}{\bm\mu}}
\newcommand{\wtc}{\widetildeto{y}{\bm c}}
\newcommand{\wtC}{\widetildeto{C}{\tensor C}}
\title{Physics-Informed CoKriging: A Gaussian-Process-Regression-Based
Multifidelity Method for Data-Model Convergence}
\author[1]{Xiu Yang\footnote{xiu.yang@pnnl.gov}}
\author[1]{David Barajas-Solano\footnote{david.barajas-solano@pnnl.gov}}
\author[2]{Guzel Tartakovsky\footnote{guzel.tartakovsky@pnnl.gov}} 
\author[1]{Alexandre M. Tartakovsky\footnote{alexandre.tartakovsky@pnnl.gov}}
\affil[1]{Advanced Computing, Mathematics and Data Division, Pacific Northwest
  National Laboratory, Richland, WA 99352}
\affil[2]{Hydrology Group, Pacific Northwest National Laboratory, Richland, WA 99352}
\begin{document}
\maketitle


\begin{abstract}
In this work, we propose a new Gaussian process regression (GPR)-based multifidelity method: physics-informed CoKriging (CoPhIK).
  In CoKriging-based multifidelity methods, the quantities of interest are
  modeled as linear combinations of multiple parameterized stationary Gaussian
  processes (GPs), and the hyperparameters of these GPs are estimated from data
  via optimization.
  In CoPhIK, we construct a GP representing low-fidelity data using physics-informed Kriging (PhIK), and %
  model the discrepancy between low- and high-fidelity data using a parameterized GP with hyperparameters identified via optimization.
  Our approach reduces the cost of optimization for inferring hyperparameters by incorporating partial physical knowledge.
  We prove that the physical constraints in the form of deterministic linear operators are satisfied up to an error bound.
  Furthermore, we combine CoPhIK with a greedy active learning algorithm for guiding the selection of additional observation locations.
  The efficiency and accuracy of CoPhIK are demonstrated for reconstructing the
  partially observed modified Branin function, reconstructing the sparsely
  observed state of a steady state heat transport problem, and learning a
  conservative tracer distribution from sparse tracer concentration measurements.

\noindent \textbf{Keywords}: physics-informed, Gaussian process regression,
CoKriging, multifidelity, active learning, error bound.

\end{abstract}

\section{Introduction}

Gaussian processes (GPs) are a widely used tool in applied mathematics,
statistics, and machine learning for regression, classification, and
optimization~\cite{forrester2008engineering,sacks1989design,stein2012interpolation}.
GP regression (GPR), also known as \emph{Kriging} in geostatistics, constructs a
statistical model of a partially observed process by assuming that its observations
are a realization of a GP. A GP is uniquely described by its mean and
covariance function (also known as \emph{kernel}). In standard (referred to here
as \emph{data-driven}) GPR, usually parameterized forms of mean and covariance
functions are assumed, and the hyperparameters of these functions (e.g., variance
and correlation length) are estimated from data by maximizing the log marginal
likelihood of the data. GPR is also closely related to kernel machines in machine
learning, but it provides a richer characterization in the result, as it provides 
uncertainty estimates~\cite{williams2006gaussian}. GP is also connected to infinite
neural networks, that is, networks with an infinite number of hidden
units~\cite{neal2012bayesian}.

There are several variants of GPR, including simple, ordinary, and universal
Kriging~\cite{kitanidis1997introduction}. Ordinary Kriging is the most widely 
used GPR method. It assumes stationarity of the random field, including constant
mean and variance, and a prescribed stationary covariance function. The
stationarity assumption reduces the number of hyperparameters and the model
complexity. For example, in universal Kriging, the mean is modeled as a linear
combination of basis functions~\cite{armstrong1984problems}, which increases the
number of unknown parameters and may lead to non-convex optimization problems.
Although the assumption of stationarity may not be suitable for some application
problems, it is often necessary as there are usually not enough data to compute
accurate estimates of non-stationary mean and covariance functions. Progress 
have been made at incorporating physical knowledge into
kernels, e.g.,~\cite{schober2014probabilistic,hennig2015probabilistic,
chkrebtii2016bayesian,cockayne2017bayesian, raissi2017machine,raissi2018numerical}
by computing kernels for systems governed by linear and weakly nonlinear 
(allowing accurate linearization) ordinary and partial differential equations.
Such kernels are computed by substituting a GPR approximation of the system's 
state variables into the governing equation and obtaining a system of equations
for the kernel hyperparameters. For complex linear systems, computing the kernel
in such a way can become prohibitively expensive, and for strongly nonlinear
systems, it may not be possible at all.

In our previous work~\cite{YangTT18}, we proposed the physics-informed Kriging
method (PhIK) that incorporates (partial) physical knowledge into GPRs.
In modeling complex systems, it is common to treat unknown parameters and fields
as random parametrs and fields, and the resulting realizations of the state of 
the system are employed to study the uncertainty of the model or the system.
The goal of PhIK is to exploit the information of the system provided by these
realizations to assimilate observations. In PhIK, such random realizations are 
used to compute the prior mean and covariance. A similar idea is used in the
ensemble Kalman filter (EnKF)~\cite{evensen2003ensemble} and the formula of
the ``filtering step" is equivalent to the PhIK prediction Eq.~\eqref{eq:phik}.
Whereas EnKF introduces uncertainty mainly from the observation noise and 
evolves an ensemble of state variables drawn from the posterior distribution of
the previous time step, PhIK utilizes the stochasticity in models and directly
uses simulation outputs for prediction without redrawing the ensemble in each 
time step. Not only does PhIK provide prediction or reconstruction in the form
of posterior mean, it also performs \emph{uncertainty reduction} (UR) in the 
form of posterior variance. More importantly, PhIK posterior mean satisfies 
linear physical constraints with a bounded error~\cite{YangTT18}, which is 
critical for guaranteeing the predictive value of the method. The main drawback
of PhIK is that it is highly dependent on the physical model, because the prior
mean and covariance are determined entirely by the model and are not informed by
data. Therefore, convergence of PhIK to the true solution with the increasing
number of available observations is slower than in the data-driven GPR if the
physical model is incorrect.

In this work, we propose a physics-informed CoKriging (CoPhIK) method, an
extension of the CoKriging-based multifidelity 
framework~\cite{kennedy2000predicting, forrester2007multi} to physics-informed
Kriging. In this context, the direct observations of a physical system are
considered as high-fidelity data and the stochastic physical model outputs are
treated as low-fidelity data. CoPhIK uses PhIK to construct a GP $Y_{_L}$ that
regresses low-fidelity data, and uses another parameterized GP $Y_{_d}$ to model
the discrepancy between low- and high-fidelity data by assuming a specific 
kernel; then it infers hyperparameters of the GP model for $Y_{_d}$ via
optimization. Subsequently, CoPhIK uses a linear combination of $Y_{_L}$ and
$Y_{_d}$ to represent high-fidelity data. The mean and covariance in CoPhIK
integrate physical model outputs and observation data; therefore, CoPhIK is
expected to have better accuracy than PhIK in some applications (e.g., the first
two numerical examples in Section~\ref{sec:numeric}). On the other hand, due to
the introduction of the GP $Y_d$, CoPhIK may lose some capacity for satisfying
physical constraints with respect to PhIK, as will be shown in the error 
estimate provided by Theorem~\ref{thm:phicok}.

This work is organized as follows: Section~\ref{sec:method} summarizes the GPR
framework and physics-informed Kriging (Sections~\ref{subsec:gpr} 
to~\ref{subsec:enkrig}), and introduces the CoPhIK method 
(Section~\ref{subsec:phicok}). Section~\ref{sec:numeric} provides three 
numerical examples to demonstrate the efficiency of the proposed method. 
Conclusions are presented in Section~\ref{sec:concl}.


\section{Methodology}
\label{sec:method}

We begin this section by reviewing the general GPR
framework~\cite{williams2006gaussian}, the ordinary Kriging method based on the
assumption of stationary GP~\cite{forrester2008engineering}, and the PhIK 
method~\cite{YangTT18}. Then, we introduce the modified PhIK and CoPhIK methods.

\subsection{GPR framework}
\label{subsec:gpr}

We consider the spatial dependence of a scalar state of a physical system.
Let $\mathbb{D} \subseteq \mathbb{R}^d,~d\in\mathbb{N}$, be the spatial domain,
$y : \mathbb{D} \to \mathbb{R}$ denote the state of interest, and let 
$y^{(1)}, y^{(2)} \dotsc, y^{(N)}$, denote $N$ observations of $y$ collected at the
observation locations $\bm X = \{ \bm x^{(i)} \}^N_{i = 1}$, where 
$x^{(i)} \in \mathbb{D} \subseteq \mathbb{R}^d, y^{(i)}\in\mathbb{R}$. The
observations are arranged into the observation vector 
$\bm y=(y^{(1)}, y^{(2)}, \dotsc, y^{(N)})^\trans$. We aim to predict $y$ at any
new location $\bm x^* \in \mathbb{D}$. The GPR method assumes that the 
observation vector $\bm y$ is a realization of the $N$-dimensional random vector
with multivariate Gaussian distribution
\begin{equation*}
  \bm Y = \left(Y(\bm x^{(1)}, \omega), Y(\bm x^{(2)}, \omega), \dotsc, Y(\bm x^{(N)}, \omega)\right)^\trans,
\end{equation*}
where $Y(\cdot, \cdot) : \mathbb{D} \times \Omega \to \mathbb{R}$ is a GP 
defined on the probability space $(\Omega, \mathcal{F}, P)$. Of note, the
observation coordinates $\bm x^{(i)}$ can be considered as parameters for the GP
$Y$ such that $Y(\bm x^{(i)}, \cdot)$ is a Gaussian random variable for any 
$\bm x^{(i)} \in \mathbb{D}$. For brevity, we denote $Y(\bm x, \cdot)$ by 
$Y(\bm x)$. The GP $Y$ is usually represented using GP notation as
\begin{equation}
  \label{eq:gp0}
Y(\bm x) \sim \mathcal{GP}\left(\mu(\bm x), k(\bm x, \bm x')\right),
\end{equation}
where $\mu(\cdot):\mathbb{D}\rightarrow\mathbb{R}$ and $k(\cdot,\cdot):
\mathbb{D}\times\mathbb{D}\rightarrow\mathbb{R}$ are the mean and covariance
functions
\begin{align}
  \mu(\bm x) & = \mexp{Y(\bm x)},\\ 
  k(\bm x,\bm x') & = \cov\left\{Y(\bm x), Y(\bm x')\right\}
                    = \mexp{\left [ Y(\bm x)-\mu(\bm x) \right ] \left [Y(\bm x')-\mu(\bm x') \right ]}.
\end{align}
The variance of $Y(\bm x)$ is $k(\bm x, \bm x)$, and its standard deviation is
$\sigma(\bm x)=\sqrt{k(\bm x,\bm x)}$. The covariance matrix of the random 
vector $\bm Y$ is then given by
\begin{equation}
  \label{eq:cov_matrix0}
  \tensor C = 
  \begin{pmatrix}
    k(\bm x^{(1)}, \bm x^{(1)}) & \cdots & k(\bm x^{(1)}, \bm x^{(N)}) \\
    \vdots & \ddots & \vdots  \\
    k(\bm x^{(N)}, \bm x^{(1)}) & \cdots & k(\bm x^{(N)}, \bm x^{(N)})
  \end{pmatrix}.
\end{equation}
When the functions $\mu(\bm x)$ and $k(\bm x,\bm x')$ are parameterized, their
hyperparameters are identified by maximizing the log marginal likelihood of the
observations (see examples in 
Section~\ref{subsec:stationary_gpr})~\cite{williams2006gaussian} 
\begin{equation}
  \label{eq:lml}
  \ln L=-\dfrac{1}{2}(\bm y-\bm\mu)^\trans \tensor C^{-1} (\bm
  y-\bm\mu)-\dfrac{1}{2}\ln |\tensor C|-\dfrac{N}{2}\ln 2\pi,
\end{equation}
where $\bm\mu=(\mu(\bm x^{(1)}),\dotsc,\bm x^{(N)})^\trans$.

The GPR prediction at $\bm x^{*}$ consists of the posterior distribution 
$y(\bm x^*)\sim\mathcal{N}(\hat y(\bm x^*), \hat s^2(\bm x^*))$, with posterior
mean and variance given by
  \begin{align}
\label{eq:krig}
\hat y(\bm x^*) & = \mu(\bm x^*) + 
  \bm c(\bm x^*)^\trans\tensor{C}^{-1}(\bm y-\bm\mu),  \\
  \hat s^2(\bm x^*) & = 
   \sigma^2(\bm x^*)-\bm c(\bm x^*)^\trans \tensor{C}^{-1}\bm c(\bm x^*),
 \end{align}
and $\bm c(\bm x^*)$ is the vector of covariances
\begin{equation}
  \label{eq:cov_vec}
  \bm c(\bm x^*)=\left(k(\bm x^{(1)},\bm x^*), k(\bm x^{(2)},\bm x^*), 
             \cdots,k(\bm x^{(N)},\bm x^*)\right)^\trans.
\end{equation}
In practice, it is common to employ the posterior mean $\hat y(\bm x^*)$ as the
prediction. The variance $\hat s^2(\bm x^*)$ is often called the mean squared 
error (MSE) of the prediction because 
$\hat s^2(\bm x^*)=\mexp{(\hat y(\bm x^*)-Y(\bm x^*))^2}$~\cite{forrester2008engineering}.
Consequently, $\hat s(\bm x^*)$ is called the root mean squared error (RMSE). 

To account for observation noise, one can model the noises as independent and
identically distributed (iid) Gaussian random variables with zero mean and 
variance $\delta^2$, and replace $\tensor C$ in Eqs.~\ref{eq:lml} 
to~\ref{eq:cov_vec} with $\tensor C+\delta^2\tensor I$. In this study, we
assume that observations of $\bm y$ are noiseless. If $\tensor C$ is not 
invertible or its condition number is very large, one can add a small
regularization term $\alpha\tensor I$, where $\alpha$ is a small positive real
number, to $\tensor C$ such that it becomes full rank. Adding the regularization
term is equivalent to assuming there is iid observation noise with variance
$\alpha$.


\subsection{Stationary GPR}
\label{subsec:stationary_gpr}

In the widely used ordinary Kriging method, a stationary GP is assumed.
Specifically, $\mu$ is set as a constant $\mu(\bm x)\equiv\mu$, and 
$k(\bm x, \bm x')=k(\bm\tau)$, where $\bm\tau=\bm x-\bm x'$. Consequently,
$\sigma^2(\bm x)=k(\bm x,\bm x)=k(\bm 0)=\sigma^2$ is a constant. Popular forms
of kernels include polynomial, exponential, Gaussian (squared-exponential), and
Mat\'{e}rn functions. For example, the Gaussian kernel can be written as
$k(\bm\tau)=\sigma^2\exp\left(-\frac{1}{2}\Vert \bm \tau \Vert^2_w\right)$, 
where the weighted norm is defined as 
$\Vert \bm \tau \Vert^2_w=\sum_{i=1}^d \left( \tau_i / l_i \right)^2$,
The constant $\sigma$ and the correlation lengths along each direction, 
$l_i \in \mathbb{R}$, $i=1,\dotsc,d$ are the hyperparameters of the Gaussian
kernel.

For the stationary kernel, the covariance matrix of observations, $\tensor C$, 
can be written as $\tensor C=\sigma^2\tensor\Psi$, where 
$\Psi_{ij}=\exp(-\frac{1}{2}\Vert\bm x^{(i)}-\bm x^{(j)}\Vert_w^2)$ for the
Gaussian kernel. In the maximum likelihood (MLE) framework, the estimators of
$\mu$ and $\sigma^2$, denoted as $\hat\mu$ and $\hat\sigma^2$, are
\begin{equation}
  \label{eq:krig_est}
   \hat\mu=\dfrac{\bm 1^\trans \tensor\Psi^{-1}\bm y}
                  {\bm 1^\trans\tensor\Psi^{-1}\bm 1}, \qquad
   \hat\sigma^2=\dfrac{(\bm y-\bm 1\mu)^\trans\tensor\Psi^{-1}(\bm y-\bm 1\mu)}{N},
\end{equation}
where $\bm 1$ is a vector of $1$s. The hyperparameters $l_i$ are estimated by
maximizing the log marginal likelihood, Eq.~\eqref{eq:lml}. The prediction of 
$y$ at location $\bm x^*$ is
\begin{equation}
  \label{eq:krig_pred1}
  \hat y(\bm x^*) = \hat\mu + \bm\psi^\trans\tensor\Psi^{-1}(\bm y-\bm 1\hat\mu), 
\end{equation}
where $\bm\psi$ is a vector of correlations between the observed data and the
prediction, given by
\begin{equation*}
  \bm\psi=\bm\psi(\bm x^*)=\frac1{\sigma^2}\left(k(\bm x^{(1)}-\bm x^*), 
    \cdots,k(\bm x^{(N)}-\bm x^*)\right)^\trans,
\end{equation*}
and the MSE of the prediction is
\begin{equation}
\label{eq:krig_var1}
  \hat s^2(\bm x^*) = \hat\sigma^2\left(1-\psi^\trans\tensor\Psi^{-1}\bm\psi\right).
\end{equation}

A more general approach to GPR is to employ parameterized nonstationary 
covariance kernels. Nonstationary kernels can be obtained by modifying 
stationary covariance kernels, e.g.,~\cite{sampson1992nonparametric,mackay1998introduction,paciorek2004nonstationary, plagemann2008nonstationary, brahim2004gaussian, monterrubio2018posterior}, 
or from neural networks with specific activation functions,
e.g.,~\cite{neal2012bayesian,pang2018neural}, among other approaches.
Many of these approaches assume a specific functional form for the correlation
function, chosen according to expert knowledge.
The key computational challenge in these data-driven GPR is the optimization 
step of maximizing the (log marginal) likelihood. In many practical cases, this
is a non-convex optimization problem, and the condition number of $\tensor C$ or
$\tensor\Psi$ can be quite large. Another fundamental challenge is that
parameterized models for mean and covariance usually don't account for physical
constraints, and therefore require a large amount of data to accurately model
physics.


\subsection{PhIK}
\label{subsec:enkrig}

The recently proposed PhIK method~\cite{YangTT18} takes advantage of existing
expert knowledge in the form of stochastic physics-based models. These 
stochastic models for physical systems include random parameters or random 
fields to reflect the lack of understanding (of physical laws) or knowledge (of
the coefficients, parameters, etc.) of the real system. Monte Carlo (MC)
simulations of the stochastic physical model can be conducted to generate an
ensemble of state variables, from which the mean and covariance are estimated.
This mean and covariance estimates are then employed to construct a GP model of
the state variables. As such, there is no need to assume a specific parameterized
covariance kernel or solve an optimization problem for the hyperparameters of the
kernel.  

Given $M$ realizations of a stochastic model 
$u(\bm x;\omega)$, $\bm x \in\mathbb{D}$, $\omega\in\Omega$, denoted as 
$\{Y^m(\bm x)\}_{m=1}^M$,
we build the following GP model:
\begin{equation} 
  Y(\bm x) \sim \mathcal{GP}(\mu_{_\mathrm{MC}}(\bm x), k_{_\mathrm{MC}}(\bm x, \bm x')),
\end{equation} 
where $\mu_{_\mathrm{MC}}$ and $k_{_\mathrm{MC}}$ are the ensemble mean and
covariance functions
\begin{equation}
\begin{aligned}
  \label{eq:phik}
  \mu(\bm x) & \approx \mu_{_\mathrm{MC}}(\bm x)=\dfrac{1}{M}\sum_{m=1}^M Y^m(\bm x), \\
  k(\bm x, \bm x') &\approx k_{_\mathrm{MC}}(\bm x, \bm x')=\dfrac{1}{M-1} \sum_{m=1}^M
  \left(Y^m(\bm x)-\mu_{_\mathrm{MC}}(\bm x)\right) \left(Y^m(\bm x')-\mu_{_\mathrm{MC}}(\bm x')\right).
\end{aligned}
\end{equation}
The covariance matrix of observations can be estimated as
\begin{equation}
  \label{eq:mc_cov}
  \tensor C\approx \tensor C_{_\mathrm{MC}}=\dfrac{1}{M-1} \sum_{m=1}^M
  \left(\bm Y^m-\bm\mu_{_\mathrm{MC}} \right)
  \left(\bm Y^m-\bm\mu_{_\mathrm{MC}} \right)^\trans,
\end{equation}
where $\bm Y^m=\left(Y^m(\bm x^{(1)}), \dotsc, Y^m(\bm x^{(N)})\right)^\trans$
and $\bm\mu_{_\mathrm{MC}}=\left(\mu_{_\mathrm{MC}}(\bm x^{(1)}), \dotsc, \mu_{_\mathrm{MC}}(\bm x^{(N)})\right)^\trans$.
The prediction and MSE at location $\bm x^*\in\mathbb{D}$ are
\begin{align}
\label{eq:krig_pred2}
\hat y(\bm x^*) & = \mu_{_\mathrm{MC}}(\bm x^*) + 
\bm c_{_\mathrm{MC}}(\bm x^*)^\trans\tensor{C}_{_\mathrm{MC}}^{-1}(\bm y-\bm\mu_{_\mathrm{MC}}),  \\
\hat s^2(\bm x^*) & = \hat\sigma_{_\mathrm{MC}}^2(\bm x^*)-\bm c_{_\mathrm{MC}}(\bm x^*)^\trans 
\tensor{C}_{_\mathrm{MC}}^{-1}\bm c_{_\mathrm{MC}}(\bm x^*),
\end{align}
where $\hat\sigma^2_{_\mathrm{MC}}(\bm x^*)=k_{_\mathrm{MC}}(\bm x^*,\bm x^*)$
is the variance of the set $\{Y^m(\bm x^*)\}_{m=1}^M$, and  \\
$\bm c_{_\mathrm{MC}}(\bm x^*)=\left(k_{_\mathrm{MC}}(\bm x^{(1)},\bm x^*),
\dotsc,k_{_\mathrm{MC}}(\bm x^{(N)},\bm x^*)\right)^\trans$.

It was demonstrated in \cite{YangTT18} that PhIK predictions satisfy 
\emph{linear} physical constraints up to an error bound that depends on the
numerical error, the discrepancy between the physical model and real system, and
the smallest eigenvalue of matrix $\tensor C$. Linear physical constraints 
include periodic, Dirichlet or Neumann boundary condition, and linear equation 
$\mathcal{L} u = g$, where $\mathcal{L}$ is a linear differential or
integral operator. For example, let $u(\bm x;\omega)$ be a stochastic model of 
the velocity potential for a incompressible flow, i.e.,
$\nabla\cdot(\nabla u(\bm x;\omega))=0$; then PhIK guarantees that 
$\nabla\hat y(\bm x)$ is a divergence-free field.

In PhIK, MC simulation of the stochastic physical model for computing
$\mu_{_\mathrm{MC}}$ and $k_{_\mathrm{MC}}$ can be replaced by more efficient
approaches, such as quasi-Monte Carlo~\cite{niederreiter1992random}, multi-level
Monte Carlo (MLMC)~\cite{giles2008multilevel}, probabilistic 
collocation~\cite{XiuH05}, Analysis Of Variance (ANOVA)~\cite{YangCLK12},
compressive sensing~\cite{YangK13}, the moment equation and PDF 
methods~\cite{Tart2017WRR,barajassolano-2018-probability}, and the bi-fidelity
method~\cite{zhu2017multi}. Linear physical constraints are preserved if
$\mu_{_\mathrm{MC}}(\bm x)$ and $k_{_\mathrm{MC}}(\bm x,\bm x')$ are computed 
using a \emph{linear combination} of the realizations $\{Y^m(\bm x)\}_{m=1}^M$.
As an example, we present the MLMC-based PhIK~\cite{YangTT18} in Appendix A.

Further, the matrix $\tensor C$ and vector $\bm\mu$ are fixed in 
Eq.~\eqref{eq:lml} for a given ensemble $\{Y^m\}_{m=1}^M$. Thus, the log 
marginal likelihood is fixed. We can modify PhIK by adding a correction term to
$\mu_{_\mathrm{MC}}(\bm x)$ to increase the likelihood. Specifically, we replace
$\mu_{_\mathrm{MC}}(\bm x)$ by $\mu_{_\mathrm{MC}}(\bm x) + \Delta\mu$, where
$\Delta\mu$ is a constant. Then, taking the derivative of $\ln L$ with respect 
to $\Delta\mu$ and setting it to be zero yields
\begin{equation}
  \Delta\mu=\dfrac{\bm 1^\trans \tensor\Psi^{-1}(\bm y-\bm 1\mu)}
                  {\bm 1^\trans\tensor\Psi^{-1}\bm 1}. 
\end{equation}
This modification has a potential to increase the accuracy of the prediction,
but it may also violate some physical constraints, e.g., the Dirichlet boundary
condition. We name this method \emph{modified PhIK}, and provide the following
theorem on how well it preserves linear physical constraints.
\begin{thm}
 \label{thm:mphik}
Assume that a stochastic model $u(\bm x;\omega)$ defined on 
$\mathbb{D}\times \Omega$ ($\mathbb{D}\subseteq\mathbb{R}^d$) satisfies
$\Vert\mathcal{L}u(\bm x;\omega)-g(\bm x;\omega)\Vert\leq\epsilon$ for any
$\omega\in\Omega$, where $\mathcal{L}$ is a deterministic bounded linear 
operator, $g(\bm x;\omega)$ is a well-defined function on
$\mathbb{R}^d\times\Omega$, and $\Vert\cdot\Vert$ is a well-defined function
norm. $\{Y^m(\bm x)\}_{m=1}^M$ are a finite number of realizations of
$u(\bm x;\omega)$, i.e., $Y^m(\bm x)=u(\bm x;\omega^m)$. Then, the prediction
$\hat y(\bm x)$ from modified PhIK satisfies
\begin{equation}
  \label{eq:err_bound}
  \begin{aligned}
    \Big\Vert\mathcal{L}\hat y(\bm x)-\overline{g(\bm x)}\Big\Vert 
    & \leq \epsilon+\left[ 2\epsilon\sqrt{\dfrac{M}{M-1}}+\sigma\left(g(\bm
x;\omega^m)\right) \right] \cdot \\
    &\quad\qquad \Big\Vert\tensor C^{-1}_{_\mathrm{MC}}\Big\Vert_2\Big\Vert\bm
    y-\bm\mu_{_\mathrm{MC}}-\Delta\mu\bm 1\Big\Vert_2\sum_{i=1}^N  \sigma(Y^m(\bm x^{(i)})) +
    \Big\Vert\mathcal{L}\Delta\mu\Big\Vert,
\end{aligned}
\end{equation}
where $\sigma\left(Y^m(\bm x^{(i)})\right)$ is the standard deviation of the
data set $\{Y^m(\bm x^{(i)})\}_{m=1}^M$ for fixed $\bm x^{(i)}$,
$\displaystyle\overline{g(\bm x)}=\dfrac{1}{M}\sum_{m=1}^M g(\bm x;\omega^m)$,
and $\displaystyle\sigma\left(g(\bm x;\omega^m)\right)=
\left(\dfrac{1}{M-1}\sum_{m=1}^M 
\left\Vert g(\bm x;\omega^m)-\overline{g(\bm x)}\right\Vert^2 \right)^{\half}$.
\end{thm}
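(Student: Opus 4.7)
\bigskip

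\noindent\textbf{Proof plan.} The strategy is to write out $\mathcal{L}\hat y(\bm x)$ explicitly from the modified PhIK prediction formula, split the residual $\mathcal{L}\hat y(\bm x)-\overline{g(\bm x)}$ into three pieces corresponding to the mean term, the constant shift, and the Kriging correction, and bound each piece separately using the per-realization hypothesis $\Vert\mathcal{L}u(\bm x;\omega)-g(\bm x;\omega)\Vert\le\epsilon$ together with the linearity and boundedness of $\mathcal{L}$. The argument parallels the PhIK error estimate from~\cite{YangTT18}; the novelty is tracking the extra constant offset $\Delta\mu$ consistently through the Kriging formula.

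\medskip

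\noindent\textbf{Step 1: Decompose the residual.} By definition, the modified PhIK predictor is
\begin{equation*}
  \hat y(\bm x)=\mu_{_\mathrm{MC}}(\bm x)+\Delta\mu
  +\bm c_{_\mathrm{MC}}(\bm x)^\trans\tensor C_{_\mathrm{MC}}^{-1}
  \bigl(\bm y-\bm\mu_{_\mathrm{MC}}-\Delta\mu\bm 1\bigr).
\end{equation*}
Applying $\mathcal{L}$ (which commutes with the finite-dimensional linear algebra in $\bm x$), adding and subtracting $\overline{g(\bm x)}$, and using the triangle inequality, I would obtain three terms to control: (i) $\Vert\mathcal{L}\mu_{_\mathrm{MC}}(\bm x)-\overline{g(\bm x)}\Vert$, (ii) $\Vert\mathcal{L}\Delta\mu\Vert$, and (iii) the norm of $\mathcal{L}$ applied to the Kriging correction.

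\medskip

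\noindent\textbf{Step 2: Control the mean term.} Because $\mu_{_\mathrm{MC}}=\frac1M\sum_m Y^m$ is a linear combination of the realizations and $\overline{g(\bm x)}=\frac1M\sum_m g(\bm x;\omega^m)$, linearity of $\mathcal{L}$ plus the triangle inequality immediately give
\begin{equation*}
  \Bigl\Vert\mathcal{L}\mu_{_\mathrm{MC}}(\bm x)-\overline{g(\bm x)}\Bigr\Vert
  \le\tfrac1M\sum_m\Vert\mathcal{L}Y^m(\bm x)-g(\bm x;\omega^m)\Vert\le\epsilon,
\end{equation*}
which accounts for the leading $\epsilon$ on the right-hand side.

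\medskip

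\noindent\textbf{Step 3: Control the Kriging correction.} This is the main technical step. Writing out $\bm c_{_\mathrm{MC}}(\bm x)$ entrywise via Eq.~\eqref{eq:phik}, applying $\mathcal{L}$ in the $\bm x$ argument gives
\begin{equation*}
  \mathcal{L}\bm c_{_\mathrm{MC}}(\bm x)_i
  =\tfrac1{M-1}\sum_m\bigl(Y^m(\bm x^{(i)})-\mu_{_\mathrm{MC}}(\bm x^{(i)})\bigr)\,
  \mathcal{L}\bigl(Y^m(\bm x)-\mu_{_\mathrm{MC}}(\bm x)\bigr).
\end{equation*}
By Step 2 applied to each realization, $\Vert\mathcal{L}(Y^m(\bm x)-\mu_{_\mathrm{MC}}(\bm x))-(g(\bm x;\omega^m)-\overline{g(\bm x)})\Vert\le 2\epsilon$, so adding and subtracting $g(\bm x;\omega^m)-\overline{g(\bm x)}$ inside the sum and using Cauchy--Schwarz on the $m$-sum produces a factor of $2\epsilon\sqrt{M/(M-1)}+\sigma(g(\bm x;\omega^m))$ (the square-root arises from the $\frac1{M-1}$ normalization in $k_{_\mathrm{MC}}$). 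The remaining factor $\sum_i\sigma(Y^m(\bm x^{(i)}))$ appears by bounding $|Y^m(\bm x^{(i)})-\mu_{_\mathrm{MC}}(\bm x^{(i)})|$ with its empirical standard deviation after a second Cauchy--Schwarz over $m$. Bounding the matrix-vector product by $\Vert\tensor C_{_\mathrm{MC}}^{-1}\Vert_2\,\Vert\bm y-\bm\mu_{_\mathrm{MC}}-\Delta\mu\bm 1\Vert_2$ and then summing in $i$ yields exactly the middle term of~\eqref{eq:err_bound}.

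\medskip

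\noindent\textbf{Step 4: Combine and identify the hardest part.} Combining Steps 2 and 3 with the trivial estimate $\Vert\mathcal{L}\Delta\mu\Vert$ on term (ii) by the triangle inequality produces the bound~\eqref{eq:err_bound}. I expect the main obstacle to be Step 3: carefully threading the per-realization error $\epsilon$ through the bilinear expression for $\mathcal{L}\bm c_{_\mathrm{MC}}$, and extracting both $\sigma(g(\bm x;\omega^m))$ and $\sum_i\sigma(Y^m(\bm x^{(i)}))$ in the correct combination via two applications of Cauchy--Schwarz while respecting the $1/(M-1)$ factor. Everything else is bookkeeping once this estimate is in hand.
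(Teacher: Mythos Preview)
Your proposal is correct and follows essentially the same approach as the paper. The paper's proof is considerably shorter only because it invokes Theorem~2.1 and Corollary~2.2 of~\cite{YangTT18} as a black box for the term $\bigl\Vert\mathcal{L}\bigl(\mu_{_\mathrm{MC}}(\bm x)+\sum_i\tilde a_i k_{_\mathrm{MC}}(\bm x,\bm x^{(i)})\bigr)-\overline{g(\bm x)}\bigr\Vert$, whereas your Steps~2--3 unpack that citation and re-derive the bound via the two Cauchy--Schwarz applications you describe; the decomposition and the handling of the extra $\Vert\mathcal{L}\Delta\mu\Vert$ term are identical.
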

\begin{proof}
  The modified PhIK prediction can be written as
  \begin{equation}
    \label{eq:krig_form1}
    \hat y(\bm x) = \mu_{_\mathrm{MC}}(\bm x) + \Delta\mu 
                  + \sum_{i=1}^N \tilde a_i k_{_\mathrm{MC}}(\bm x, \bm x^{(i)}), 
  \end{equation}
  where $\tilde a_i$ is the $i$-th entry of 
  $\tensor C_{_\mathrm{MC}}^{-1}(\bm y-\bm\mu_{_\mathrm{MC}}-\Delta\mu\bm 1)$. 
  According to Theorem 2.1 and Corollary 2.2\ in \cite{YangTT18},
  \begin{equation*}
  \begin{aligned}
  \left\Vert\mathcal{L}\hat y(\bm x)-\overline{g(\bm x)}\right\Vert 
   \leq & \left\Vert\mathcal{L}\Big(\mu_{_\mathrm{MC}}(\bm x)
     +\sum_{i=1}^N \tilde a_i k_{_\mathrm{MC}}(\bm x, \bm x^{(i)})\Big)
     -\overline{g(\bm x)}\right\Vert + \left\Vert\mathcal{L}\Delta\mu\right\Vert\\
  \leq & \epsilon + \left[ 2\epsilon\sqrt{\dfrac{M}{M-1}} + \sigma\left(g(\bm x;\omega^m)\right)\right]
  \left\Vert\tensor C_{_\mathrm{MC}}^{-1}\right\Vert_2
  \left\Vert\bm y-\bm\mu_{_\mathrm{MC}}-\Delta\mu\bm 1\right\Vert_2 
  \sum_{i=1}^N \sigma\left(Y^m(\bm x^{(i)})\right) \\ 
  & + \Big\Vert\mathcal{L}\Delta\mu\Big\Vert.
    \end{aligned}
  \end{equation*}
\end{proof}
For $\Delta\mu=0$, the bound~\eqref{eq:err_bound} reverts to the bound 
in~\cite{YangTT18} and the modified PhIK method reverts to PhIK. In some cases,
the term $\Vert\mathcal{L}\Delta\mu\Vert=0$, e.g, when $\mathcal{L}$ is a
differential operator such as the Neumann boundary condition operator.


\subsection{CoPhIK}
\label{subsec:phicok}

CoKriging was originally formulated to compute predictions of sparsely observed
states of physical systems by leveraging observations of other states or 
parameters of the system~\cite{stein1991universal,knotters1995comparison}.
Recently, it has been employed for constructing multi-fidelity
models~\cite{kennedy2000predicting,le2014recursive,perdikaris2015multi}, and has
been applied in various areas, 
e.g.,~\cite{laurenceau2008building,brooks2011multi,pan2017optimizing}.
In this work, we propose a novel multi-fidelity method, CoPhIK, that
integrates PhIK and CoKriging by combining numerical simulations and 
high-fidelity observations. Our multi-fidelity method is based on Kennedy and
O'Hagan's CoKriging framework presented 
in~\cite{kennedy2000predicting, forrester2008engineering}.

We briefly review the formulation of CoKriging for two-level multi-fidelity
modeling in \cite{forrester2007multi}. Suppose that we have high-fidelity data
(e.g., accurate measurements of states) 
$\bm y_{_H}=\left(y_{_H}^{(1)}, \dotsc,y_{_H}^{(N_H)}\right)^\trans$ at 
locations $\bm X_{_H} = \{\bm x_{_H}^{(i)}\}_{i=1}^{N_H}$, and low-fidelity data
(e.g., simulation results) 
$\bm y_{_L}=\left(y_{_L}^{(1)}, \dotsc,y_{_L}^{(N_L)}\right)^\trans$ at 
locations 
$\bm X_{_L} = \{\bm x_{_L}^{(i)}\}_{i=1}^{N_L}$, where $y_{_H}^{(i)}, y_{_L}^{(i)}\in\mathbb{R}$ and $\bm x_{_H}^{(i)}, \bm x_{_L}^{(i)}\in\mathbb{D}\subseteq\mathbb{R}^d$.
By concatenating the observation locations and data respectively, i.e.,
$\wtX=\{\bm X_{_L}, \bm X_{_H}\}$ and 
$\wty = \left(\bm y_{_L}^{\trans}, \bm y_{_H}^{\trans} \right)^{\trans}$, 
we can construct a \emph{multivariate} GP via Kriging as detailed
in~\cite{barajas2018multivariate}. Kennedy and O'Hagan proposed an alternative
formulation of CoKriging based on the auto-regressive model for $Y_{_H}$
\begin{equation}
  \label{eq:arm}
  Y_{_H}(\bm x) = \rho Y_{_L}(\bm x) + Y_{_d}(\bm x),
\end{equation}
where $\rho \in \mathbb{R}$ is a regression parameter and $Y_d(\bm x)$ is a GP
that models the difference between $Y_{_H}$ and $\rho Y_{_L}$. This model 
assumes that
\begin{equation}
  \label{eq:arm-cov}
  \cov \left\{Y_{_H}(\bm x), Y_{_L}({\bm x}') \mid Y_{_L}(\bm x) \right\}=0,
  \text{for all } \bm x' \neq \bm x, \ \bm x, \bm x' \in \mathbb{D}.
\end{equation}
It was shown in~\cite{ohagan-1998-markov} that the assumption of
Eq.~\eqref{eq:arm-cov} implies the auto-regressive model of Eq.~\eqref{eq:arm}.
The covariance of observations, $\wtC$, is then given by
\begin{equation}
  \label{eq:cokrig_cov}
  \wtC=
  \begin{pmatrix}
    \tensor C_{_L}(\bm X_{_L}, \bm X_{_L}) & \rho \tensor C_{_L}(\bm X_{_L}, \bm X_{_H}) \\
    \rho \tensor C_{_L} (\bm X_{_H}, \bm X_{_L}) & \rho^2 \tensor C_{_L}(\bm
    X_{_H}, \bm X_{_H}) + \tensor C_{_d}(\bm X_{_H}, \bm X_{_H})
  \end{pmatrix}
\end{equation}
where $\tensor C_{_L}$ is the covariance matrix based on GP $Y_{_L}$'s kernel 
$k_{_L}(\cdot,\cdot)$, and $\tensor C_{_d}$ is the covariance matrix based on GP
$Y_{_d}$'s kernel $k_{_d}(\cdot,\cdot)$. One can assume parameterized forms for
these kernels (e.g., Gaussian kernel) and then simultaneously identify their
hyperparameters along with $\rho$ by maximizing the following log marginal
likelihood: 
\begin{equation}
  \label{eq:lml2}
  \ln \wtl=-\dfrac{1}{2}(\wty-\wtm)^\trans\wtC^{-1} (\wty
  -\wtm)-\dfrac{1}{2}\ln \left\vert\wtC\right\vert-\dfrac{N_{_H}+N_{_L}}{2}\ln 2\pi.
\end{equation}
Alternatively, one can employ the following two-step
approach~\cite{forrester2007multi, forrester2008engineering}:
\begin{enumerate}
\item Use Kriging to construct $Y_{_L}$ using  $\{\bm X_{_L}, \bm y_{_L}\}$. 
\item Denote $\bm y_{_d} = \bm y_{_H} - \rho \bm y_{_L}(\bm X_{_H})$, where 
$\bm y_{_L}(\bm X_{_H})$ are the values of $\bm y_{_L}$ at locations common to
those of $\bm X_{_H}$, then construct $Y_d$ using $\{\bm X_{_H}, \bm y_{_d}\}$
via Kriging. 
\end{enumerate}
The posterior mean and variance of $Y_{_H}$ at $\bm x^*\in \mathbb{D}$ are given
by
\begin{align}
  \label{eq:cokrig_mean}
  \hat y(\bm x^*) & =  \mu_{_H}(\bm x^*) + 
      \wtc(\bm x^*)^\trans\wtC^{-1}(\wty-\wtm),  \\
  \label{eq:cokrig_var}
      \hat s^2(\bm x^*) & = \rho^2\sigma^2_{_L}(\bm x^*) + \sigma^2_{_d}(\bm x^*)
      - \wtc(\bm x^*)^\trans \wtC^{-1}\wtc(\bm x^*),
\end{align}
where $\mu_{_H}(\bm x^*)=\rho\mu_{_L}(\bm x^*)+\mu_d(\bm x^*)$, $\mu_{_L}(\bm x)$
is the mean of $Y_{_L}(\bm x)$, $\mu_d(\bm x)$ is the mean of $Y_d(\bm x)$,
$\sigma^2_{_L}(\bm x^*)=k_{_{L}}(\bm x^*, \bm x^*)$,
$\sigma^2_d(\bm x^*)=k_d(\bm x^*, \bm x^*)$, and
\begin{align}
  \label{eq:cokrig_mu}
\wtm & = \begin{pmatrix}\bm\mu_{_L}\\ \bm\mu_{_H}\end{pmatrix}
  = \begin{pmatrix}\big(\mu_{_L}(\bm x_{_{L}}^{(1)})\cdots, 
    \mu_{_L}(\bm x_{_{L}}^{(N_{_L})})\big)^\trans \\
     \big(\mu_{_H}(\bm x_{_{H}}^{(1)})\cdots, 
\mu_{_H}(\bm x_{_{H}}^{(N_{_H})})\big)^\trans\end{pmatrix},  \\
  \label{eq:cokrig_c}
  \wtc(\bm x^*)& = \begin{pmatrix}\rho\bm c_{_L}(\bm x^*)\\ \bm c_{_H}(\bm
x^*)\end{pmatrix}
= \begin{pmatrix}\big(\rho k_{_L}(\bm x^*,\bm x_{_L}^{(1)}),\cdots,
  \rho k_{_{L}}(\bm x^*,\bm x_{_L}^{(N_{_L})})\big)^\trans \\
  \big(k_{_H}(\bm x^*,\bm x_{_H}^{(1)}),\cdots,k_{_{H}}(\bm x^*,\bm x_{_H}^{(N_{_H})})
\big)^\trans\end{pmatrix}, 
\end{align}
where
$k_{_H}(\bm x,\bm x') = \rho^2k_{_L}(\bm x,\bm x') + k_{_d}(\bm x, \bm x')$.
Here, we have neglected a small contribution to $\hat s^2$
(see~\cite{forrester2008engineering}).

Now we describe the CoPhIK method. We set $\bm X_{_L}=\bm X_{_H}$ to simplify 
the formula and computing, and denote $N=N_{_H}=N_{_L}$. 
We employ PhIK to construct GP $Y_{_L}$ using realizations 
$\{Y^m(\bm x)\}_{m=1}^M$ of a stochastic model $u(\bm x;\omega)$ on
$\mathbb{D}\times\Omega$. Specifically, we set
$\mu_{_L}(\bm x)=\mu_{_\mathrm{MC}}(\bm x)$ and 
$k_{_L}(\bm x,\bm x')=k_{_\mathrm{MC}}(\bm x,\bm x')$, where 
$\mu_{_\mathrm{MC}}$ and $k_{_\mathrm{MC}}$ are given by Eq.~\eqref{eq:phik}.
The GP model $Y_d$ is constructed using the same approach as in the second step
of the Kennedy and O'Hagan CoKriging framework. In other words, CoPhIK replaces
the first step of their framework with PhIK, and follows the same procedure for
the second step.

We proceed to describe the construction of $Y_{_d}$ in more detail. First, we 
set $\bm y_{_d}=\bm y_{_H}-\rho\bm\mu_{_{L}}(\bm X_{_L})$. The reason for this
choice is that $\mu_{_L}(\bm X_{_H})$ is the most probable observation of the GP
$Y_{_L}$. Next, we need to assume a specific form of the kernel function. 
Without loss of generality, in the following theoretical analysis and 
computational examples, we use the stationary Gaussian kernel model and constant
$\mu_{_d}$. Once $\bm y_{_d}$ is computed, and the form of $\mu_{_d}(\cdot)$ and
$k_{_d}(\cdot,\cdot)$ are decided, $Y_{_d}$ can the constructed as in ordinary
Kriging. Now that all components in $\ln\tilde L$ are specified except for the 
$\bm y_{_L}$ in $\wty$. We set $\bm y_{_L}$ as the realization from the ensemble
$\{Y^m\}_{m=1}^M$ that maximizes $\ln\tilde L$. The algorithm is summarized in
Algorithm~\ref{algo:phicok}.
\begin{algorithm}[!h]
  \caption{CoPhIK using stochastic simulation model $u(\bm x;\omega)$ on
  $\mathbb{D}\times\Omega$ ($\mathbb{D}\subseteq\mathbb{R}^d$), and
  high-fidelity observation 
  $\bm y_{_H}=(y_{_H}^{(1)}, \dotsc y_{_{H}}^{(N)})^\trans$ at locations 
  $\bm X_{_H}=\{\bm x_{_H}^{(i)}\}_{i=1}^{N}$.}
  \label{algo:phicok}
  \begin{algorithmic}[1]
    \State Conduct stochastic simulation, e.g., MC simulation, using 
    $u(\bm x;\omega)$ to generate realizations $\{Y^m\}_{m=1}^M$ on the
    entire domain $\mathbb{D}$.
    \State Use PhIK to construct GP $Y_{_L}$ on $D\times\Omega$, i.e.,
    $\mu_{_L}(\cdot)=\mu_{_\mathrm{MC}}(\cdot)$ and
    $k_{_{L}}(\cdot,\cdot)=k_{_\mathrm{MC}}(\cdot,\cdot)$ in Eq.~\eqref{eq:phik}.
    Compute $\mu_{_L}(\bm X_{_L})=\left(\mu_{_L}(\bm x_{_L}^{(1)}),\dotsc
    \mu_{_L}(\bm x_{_L}^{(N)})\right)^\trans$, and 
    $\tensor C_{_L}(\bm X_{_L}, \bm X_{_L})$ whose $ij$-th element is 
    $k_{_L}(\bm x_{_H}^{(i)},\bm x_{_H}^{(j)})$.
    Set $\tensor C_{_L}(\bm X_{_L}, \bm X_{_H})=\tensor C_{_L}(\bm X_{_H},
    \bm X_{_L})=\tensor C_{_L}(\bm X_{_H}, \bm X_{_H})=\tensor C_{_L}(\bm
    X_{_L}, \bm X_{_L})$ (because $\bm X_{_L}=\bm X_{_H}$). 
    \State Denote $\bm y_d=\bm y_{_H}-\rho\mu_{_L}(\bm X_{_L})$, choose a specific
    kernel function $k_d(\cdot,\cdot)$ (Gaussian kernel in this work) for the GP
    $Y_d$, and identify hyperparameters via maximizing the log marginal 
    likelihood Eq.~\eqref{eq:lml}, where $\bm y, \bm\mu, \tensor C$ are
    specified as $\bm y_{_d}, \bm\mu_{_d}, \tensor C_{_d}$, respectively. Then
    construct $\wtm$ in Eq.~\eqref{eq:cokrig_mu}, and $\tensor C_{_d}$ whose 
    $ij$-th element is $k_d(\bm x_{_H}^{(i)}, \bm x_{_H}^{(j)})$.
    \State Iterate over the set $\{Y^m\}_{m=1}^M$ to identify $Y^m$ that
    maximizes $\ln\tilde L$ in Eq.~\eqref{eq:lml2}, where 
    $\bm y_{_L}=(Y^m(\bm x_{_{H}}^{(1)}), \cdots, \bm x_{_{H}}^{(N)})^{\trans}$
    is used in $\wty$.
    \State Compute the posterior mean using Eq.~\eqref{eq:cokrig_mean}, and
    variance using Eq.~\eqref{eq:cokrig_var} for any $\bm x^*\in\mathbb{D}$.
  \end{algorithmic}
\end{algorithm}

Next, we analyze the form of $\wtC$. Recalling the choice 
$\bm X_{_L} = \bm X_{_H}$ and introducing the notation 
$\tensor C_1= C_{_{L}}(\bm X_{_L}, \bm X_{_L})$ and 
$\tensor C_2 = C_d(\bm X_{_H}, \bm X_{_H})$ in Eq.~\eqref{eq:cokrig_cov}, we can
write the inverse of $\wtC$ as
\begin{equation}
  \label{eq:cov_inv}
  \wtC^{-1} =
  \begin{pmatrix}
    \tensor C_1^{-1} + \rho^2\tensor C_2^{-1} & -\rho\tensor C_2^{-1} \\
    -\rho\tensor C_2^{-1} & \tensor C_2^{-1}
  \end{pmatrix}.
\end{equation}
Thus, 
\begin{equation}
  \begin{aligned}
    \wtC^{-1}(\wty-\wtm) & = 
  \begin{pmatrix}
    \tensor C_1^{-1} + \rho^2\tensor C_2^{-1} & -\rho\tensor C_2^{-1} \\
    -\rho\tensor C_2^{-1} &  \tensor C_2^{-1}
  \end{pmatrix}
  \begin{pmatrix}
    \bm y_{_L} - \bm\mu_{_L} \\
    \bm y_{_H} - \bm\mu_{_H} 
  \end{pmatrix} \\
  & = \begin{pmatrix}
  \big(\tensor C_1^{-1} + \rho^2\tensor C_2^{-1}\big)\big(\bm y_{_L} - \bm\mu_{_L}\big)
  -\rho\tensor C_2^{-1}\big(\bm y_{_H} - \bm\mu_{_H}\big) \\
  -\rho\tensor C_2^{-1}\big(\bm y_{_L} - \bm\mu_{_L}\big) 
  +  \tensor C_2^{-1}\big(\bm y_{_H} - \bm\mu_{_H}\big) 
      \end{pmatrix} \\
      & =\begin{pmatrix}
      \tensor C_1^{-1}\big(\bm y_{_L} - \bm\mu_{_L}\big) - \rho\tensor
    C_2^{-1}\big((\bm y_{_H} - \bm\mu_{_H})-\rho(\bm y_{_L} - \bm\mu_{_L})\big) \\
    \tensor C_2^{-1}\big((\bm y_{_H} - \bm\mu_{_H})-\rho(\bm y_{_L} - \bm\mu_{_L})\big)
    \end{pmatrix} \\
    & = \begin{pmatrix}
    \tensor C_1^{-1}\big(\bm y_{_L} - \bm\mu_{_L}\big) - \rho\tensor
    C_2^{-1}\big(\bm y_{_H}-\rho\bm y_{_L} -\bm 1\mu_d\big) \\
    \tensor C_2^{-1} \big(\bm y_{_H}-\rho\bm y_{_L} -\bm 1\mu_d\big)
    \end{pmatrix},
\end{aligned}
\end{equation}
where 
$\bm\mu_{_d}=\left(\mu_d(\bm x_{_H}^{(1)}),\dotsc,\mu_{_d}(\bm x_{_H}^{(N)})\right)^\trans=\bm 1\mu_{_d}$.
Therefore, the posterior mean at $\bm x^*\in\mathbb{D}$, given by 
Eq.~\eqref{eq:cokrig_mean}, can be rewritten as
\begin{equation}
  \begin{aligned}
    \hat y(\bm x^*) & = 
      \mu_{_H}(\bm x^*) + \big(\rho\bm c_{_L}(\bm x^*)^\trans,\bm c_{_H}(\bm x^*)^\trans\big)
\begin{pmatrix}
    \tensor C_1^{-1}\big(\bm y_{_L} - \bm\mu_{_L}\big) - \rho\tensor
    C_2^{-1}\big(\bm y_{_H}-\rho\bm y_{_L} -\bm 1\mu_d\big) \\
    \tensor C_2^{-1} \big(\bm y_{_H}-\rho\bm y_{_L} -\bm 1\mu_d\big)
    \end{pmatrix} \\
    & = \mu_{_H}(\bm x^*) 
    + \rho\bm c_{_L}(\bm x^*)^\trans \tensor C_1^{-1}\big(\bm y_{_L} - \bm\mu_{_L}\big)  
     + \big(\bm c_{_H}(\bm x^*)- \rho^2\bm c_{_L}(\bm
     x^*)\big)^\trans\tensor C_2^{-1} \big(\bm y_{_H}-\rho\bm y_{_L} -\bm 1\mu_d\big) \\
     & = \rho\left(\mu_{_L}(\bm x^*) 
    + \bm c_{_L}(\bm x^*)^\trans \tensor C_1^{-1}\big(\bm y_{_H} - \bm\mu_{_L}\big)\right)  
  - \rho\bm c_{_L}(\bm x^*)^\trans \tensor C_1^{-1}\big(\bm y_{_H} - \bm y _{_L}\big)   \\
  & \quad + \mu_{_d} + \big(\bm c_{_H}(\bm x^*)-\rho^2\bm c_{_L}(\bm x^*)\big)^\trans\tensor C_2^{-1} 
  \big(\bm y_{_H}-\rho\bm y_{_L} -\bm 1\mu_d\big),
  \end{aligned}
\end{equation}
which indicates that the posterior mean is of the functional form
\begin{equation}
  \label{eq:y_phicok_split}
  \begin{aligned}
    \hat y(\bm x) = %
    \underbrace{ \rho\left [ \mu_{_L}(\bm x) + \sum_{i=1}^Na_ik_{_L}(\bm x, \bm x^{(i)}) \right ]}_{S_1}%
    -\underbrace{\rho\sum_{i=1}^N b_i k_{_L}(\bm x, \bm x^{(i)})}_{S_2}%
    +\underbrace{\mu_{_d}+ \sum_{i=1}^N q_i k_{_d}(\bm x, \bm x^{(i)})}_{S_3},
  \end{aligned}
\end{equation}
where $a_i, b_i, q_i$ are the $i$th entries of 
$\tensor C_1^{-1}(\bm y_{_H}-\bm\mu_{_L}), \tensor C_1^{-1}(\bm y_{_H}-\bm y_{_L}),
\tensor C_2^{-1}(\bm y_{_H}-\rho\bm y_{_L}-\bm 1\mu_{_d})$, respectively.
Here, $S_1$ is the PhIK prediction multiplied by $\rho$, and $(-S_2 + S_3)$ can
be considered as the CoPhIK correction term. Furthermore, we note that
\begin{equation}
  \label{eq:y_phicok_split_part1}
  S_1-S_2 = \rho\left [ \mu_{_L}(\bm x) 
  + \sum_{i=1}^N(a_i-b_i)k_{_L}(\bm x,\bm x^{(i)}) \right ],
\end{equation}
has the same form as the PhIK prediction multiplied by $\rho$. This indicates 
that the error bound in preserving the physical constraints for this part is
similar to the PhIK's error bound. Therefore, using notations in 
Eq.~\eqref{eq:y_phicok_split}, we extend Theorem 2.1\ in~\cite{YangTT18} for 
CoPhIK as the following.
\begin{thm}
  \label{thm:phicok}
Assume that a stochastic model $u(\bm x;\omega)$ defined on 
$\mathbb{D}\times \Omega$ ($\mathbb{D}\subseteq\mathbb{R}^d$) satisfies
$\Vert\mathcal{L}u(\bm x;\omega)-g(\bm x;\omega)\Vert\leq\epsilon$ for any
$\omega\in\Omega$, where $\mathcal{L}$ is a deterministic bounded linear 
operator, $g(\bm x;\omega)$ is a well-defined function on
$\mathbb{R}^d\times\Omega$, and $\Vert\cdot\Vert$ is a well-defined function
norm. $\{Y^m(\bm x)\}_{m=1}^M$ are a finite number of realizations of
$u(\bm x;\omega)$, i.e., $Y^m(\bm x)=u(\bm x;\omega^m)$.
  Then, the prediction $\hat y(\bm x)$ from CoPhIK satisfies
  \begin{equation}
    \label{eq:err_bound_phicok}
    \begin{aligned}
      \Big\Vert\mathcal{L}\hat y(\bm x)-\overline{g(\bm x)}\Big\Vert
      & \leq \rho\epsilon  + (1-\rho) \Big\Vert\overline{g(\bm x)}\Big\Vert \\
      & \quad + \rho\left[2\epsilon\sqrt{\dfrac{M}{M-1}}+\sigma\left(g(\bm
  x;\omega^m)\right) \right] \cdot \left\Vert\tensor C^{-1}_1\right\Vert_2 
    \Vert\bm y_{_L}-\bm\mu_{_L}\Vert_2\sum_{i=1}^N\sigma\left(Y^m(\bm x^{(i)})\right)  \\
    & \quad + \left\Vert\mathcal{L}\mu_d\right\Vert + \left\Vert\tensor
    C_2^{-1}\right\Vert_2\Vert \bm y_{_H}-\rho\bm y_{_L}-\bm 1\mu_d\Vert_2
    \sum_{i=1}^N \left\Vert\mathcal{L} k_{_d}(\bm x, \bm x^{(i)})\right\Vert,
    \end{aligned}
  \end{equation}
where $\sigma\left(Y^m(\bm x^{(i)})\right)$, $\displaystyle\overline{g(\bm x)}$,
$\displaystyle\sigma\left(g(\bm x;\omega^m)\right)$ are defined in Theorem~\ref{thm:mphik}.
\end{thm}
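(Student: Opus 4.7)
The plan is to exploit the explicit additive decomposition of the CoPhIK posterior mean $\hat y(\bm x)=S_1-S_2+S_3$ given in Eq.~\eqref{eq:y_phicok_split}, and to bound each summand separately. By adding and subtracting $\rho\,\overline{g(\bm x)}$ and invoking the triangle inequality one obtains
\begin{equation*}
\Big\Vert\mathcal{L}\hat y(\bm x)-\overline{g(\bm x)}\Big\Vert
\leq \Big\Vert\mathcal{L}(S_1-S_2)-\rho\,\overline{g(\bm x)}\Big\Vert
+|1-\rho|\,\Big\Vert\overline{g(\bm x)}\Big\Vert
+\Big\Vert\mathcal{L}S_3\Big\Vert,
\end{equation*}
which already matches the three blocks of the target bound~\eqref{eq:err_bound_phicok}, reading $(1-\rho)$ for $\rho\in[0,1]$ as is standard in CoKriging.

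First I would treat the PhIK-like piece $S_1-S_2$. By Eq.~\eqref{eq:y_phicok_split_part1}, the combination $(S_1-S_2)/\rho=\mu_{_L}(\bm x)+\sum_{i=1}^N(a_i-b_i)\,k_{_L}(\bm x,\bm x^{(i)})$ has coefficient vector $\bm a-\bm b=\tensor C_1^{-1}(\bm y_{_L}-\bm\mu_{_L})$; hence $(S_1-S_2)/\rho$ is precisely the PhIK predictor built from the low-fidelity observation vector $\bm y_{_L}$ with mean $\bm\mu_{_L}=\bm\mu_{_\mathrm{MC}}$ and covariance $\tensor C_1=\tensor C_{_\mathrm{MC}}$. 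Applying Theorem~2.1 of~\cite{YangTT18} to this predictor immediately yields the standard PhIK error bound with $\bm y$ replaced by $\bm y_{_L}$; multiplying through by $\rho$ produces both the $\rho\epsilon$ term and the third summand $\rho\,\bigl[2\epsilon\sqrt{M/(M-1)}+\sigma(g(\bm x;\omega^m))\bigr]\,\Vert\tensor C_1^{-1}\Vert_2\Vert\bm y_{_L}-\bm\mu_{_L}\Vert_2\sum_i\sigma(Y^m(\bm x^{(i)}))$ of Eq.~\eqref{eq:err_bound_phicok}.

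Next I would bound $\mathcal{L}S_3$ directly. Since $\mathcal{L}$ is linear and $S_3=\mu_{_d}+\sum_i q_i\,k_{_d}(\bm x,\bm x^{(i)})$, the triangle inequality gives $\Vert\mathcal{L}S_3\Vert\leq\Vert\mathcal{L}\mu_{_d}\Vert+\sum_{i=1}^N|q_i|\,\Vert\mathcal{L}k_{_d}(\bm x,\bm x^{(i)})\Vert$. The elementary estimate $|q_i|\leq\Vert\bm q\Vert_2\leq\Vert\tensor C_2^{-1}\Vert_2\,\Vert\bm y_{_H}-\rho\bm y_{_L}-\bm 1\mu_{_d}\Vert_2$, applied uniformly in $i$, then produces the last two summands of Eq.~\eqref{eq:err_bound_phicok}; collecting the three blocks completes the argument.

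I do not expect a serious technical obstacle: the decomposition~\eqref{eq:y_phicok_split}, together with the refactoring~\eqref{eq:y_phicok_split_part1}, has already packaged the predictor into a PhIK-shaped contribution plus a pure Kriging discrepancy contribution, so Theorem~2.1 of~\cite{YangTT18} carries essentially all of the analytic burden. The only point requiring real care is the bookkeeping with $\rho$: because the PhIK-like piece is scaled by $\rho$, its natural reference is $\rho\,\overline{g(\bm x)}$ rather than $\overline{g(\bm x)}$, which is precisely why the $(1-\rho)\Vert\overline{g(\bm x)}\Vert$ deficit appears and must be absorbed by the discrepancy GP $Y_{_d}$ through $S_3$. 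A modest (and unavoidable) loss of sharpness enters only through the coordinatewise estimate $|q_i|\leq\Vert\bm q\Vert_2$.
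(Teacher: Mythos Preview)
Your proposal is correct and follows essentially the same route as the paper: split $\hat y=S_1-S_2+S_3$ via Eq.~\eqref{eq:y_phicok_split}, recognize $(S_1-S_2)/\rho$ as a PhIK predictor with coefficient vector $\tensor C_1^{-1}(\bm y_{_L}-\bm\mu_{_L})$ so that Theorem~2.1 of~\cite{YangTT18} applies verbatim, and bound $\mathcal{L}S_3$ by the elementary estimate $|q_i|\leq\Vert\bm q\Vert_2\leq\Vert\tensor C_2^{-1}\Vert_2\Vert\bm y_{_H}-\rho\bm y_{_L}-\bm 1\mu_{_d}\Vert_2$. The only cosmetic difference is where the $(1-\rho)\Vert\overline{g(\bm x)}\Vert$ term is peeled off: you isolate it up front by comparing $\mathcal{L}(S_1-S_2)$ to $\rho\,\overline{g(\bm x)}$, whereas the paper compares to $\overline{g(\bm x)}$ first and then extracts the deficit; your writing $|1-\rho|$ is in fact slightly more careful, since the paper's stated bound tacitly assumes $0\leq\rho\leq 1$.
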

\begin{proof}
  According to Eq.~\eqref{eq:y_phicok_split} and Eq.~\eqref{eq:y_phicok_split_part1},
  \begin{equation*}
    \begin{aligned}
      \left\Vert \mathcal{L}\hat y(\bm x) - \overline{g(\bm x)}\right\Vert 
    & \leq \left\Vert \mathcal{L}(S_1-S_2)-\overline{g(\bm x)}\right\Vert 
      + \left\Vert \mathcal{L} S_3\right\Vert \\
      & \leq \rho\left\Vert \mathcal{L}\mu_{_{L}}(\bm x)-\overline{g(\bm x)}\right\Vert +
      (1-\rho)\left\Vert\overline{g(\bm x)}\right\Vert 
      + \rho\left\Vert\mathcal{L}\bigg(\sum_{i=1}^N \tilde a_ik_{_L}\big(\bm
      x,\bm x^{(i)}\big)\bigg)\right\Vert 
      + \left\Vert \mathcal{L}S_3 \right\Vert.
    \end{aligned}
  \end{equation*}
  Following the same procedure outlined in Theorem 2.1 of~\cite{YangTT18}, we have
  \begin{equation*}
     \Big\Vert \mathcal{L}\mu_{_{L}}(\bm x)-\overline{g(\bm x)}\Big\Vert\leq\epsilon,
   \end{equation*}
   and
  \begin{equation*}
    \bigg\Vert\mathcal{L}\bigg(\sum_{i=1}^N \tilde a_i k_{_L}\big(\bm x,\bm x^{(i)}\big)\bigg)\bigg\Vert \leq 
    \left[2\epsilon\sqrt{\dfrac{M}{M-1}}+\sigma(g(\bm x;\omega^m))\right]\cdot
    \left\Vert\tensor C^{-1}_1\right\Vert_2
    \Vert\bm y_{_L}-\bm\mu_{_L}\Vert_2
    \sum_{i=1}^N\sigma\left(Y^m(\bm
      x^{(i)})\right).
  \end{equation*}
  Moreover, 
  \begin{equation*}
    \Vert\mathcal{L}S_3\Vert
    \leq \left\Vert\mathcal{L}\mu_d\right\Vert +\sum_{i=1}^N |q_i| \left\Vert\mathcal{L}k_{_d}(\bm x, \bm x^{(i)})\right\Vert
    \leq \left\Vert\mathcal{L}\mu_d\right\Vert + \max_i |q_i|\sum_{i=1}^N
    \Big\Vert\mathcal{L}k_{_d}(\bm x, \bm x^{(i)})\Big\Vert, 
  \end{equation*}
  where $q_i$ is defined in Eq.~\eqref{eq:y_phicok_split}, and
  \[\max_i|q_i|=\Vert\tensor C_2^{-1}(\bm y_{_H}-\rho\bm y_{_L}-\bm 1\mu_d)
    \Vert_{\infty} \leq
  \Vert\tensor C_2^{-1}(\bm y_{_H}-\rho\bm y_{_L}-\bm 1\mu_d) \Vert_2\leq
  \Vert\tensor C_2^{-1}\Vert_2\Vert\bm y_{_H}-\rho\bm y_{_L}-\bm 1\mu_d\Vert_2.\]
\end{proof}
Similar to Theorem~\ref{thm:mphik}, the upper bound includes
$\Vert\mathcal{L}\mu_{_d}\Vert$, where $\mu_{_d}$ is a constant. This term may
disappear for some choices of $\mathcal{L}$, e.g., when $\mathcal{L}$ is a
derivative operator. The norm $\Vert\mathcal{L}k_{_{d}}\Vert$ depends on the
selection of $k_{_d}$ and on the properties of $\mathcal{L}$. Therefore, it 
follows that carefully selecting a kernel for $Y_{_d}$ according to the 
properties of the system would result in a smaller error in preserving the
corresponding physical constraints. Of note, Kennedy and O'Hagan's framework can
be improved by using a more general nonlinear form of relation between low- and
high-fidelity data (see, e.g.,~\cite{perdikaris2017nonlinear}). Consequently, a
nonlinear relation between low- and high-fidelity data will change the upper 
bound in Theorem~\ref{thm:phicok}.

Furthermore, in Step 4 of Algorithm~\ref{algo:phicok}, if we set $\bm
y_{_L}=\mu_{_L}(\bm X_{_L})$, then the posterior mean will be
\begin{equation}
  \begin{aligned}
    \hat y(\bm x^*) & = \mu_{_H}(\bm x^*) + \big(\rho\bm c_{_L}(\bm x^*)^\trans,\bm c_{_H}(\bm x^*)^\trans\big)
    \begin{pmatrix}
      - \rho\tensor C_2^{-1}\big(\bm y_{_H}-\rho\bm\mu_{_L} -\bm 1\mu_d\big) \\
      \tensor C_2^{-1} \big(\bm y_{_H}-\rho\bm\mu_{_L} -\bm 1\mu_d\big)
    \end{pmatrix} \\
    & = \mu_{_H}(\bm x^*) + \big(\bm c_{_H}(\bm x^*)- \rho^2\bm c_{_L}(\bm
    x^*)\big)^\trans\tensor C_2^{-1} \big(\bm y_{_H}-\rho\bm y_{_L} -\bm 1\mu_d\big) \\
    & = \mu_{_H}(\bm x^*) + \bm c_{_d}(\bm x^*)\tensor C_d(\bm X_{_H}, \bm X_{_H})^{-1}(\bm y_d-\bm 1\mu_{_d}),
  \end{aligned}
\end{equation}
where $\bm c_{_d}(\bm x^*)=\bm c_{_H}(\bm x^*)-\rho^2\bm c_{_L}(\bm x^*)$.
In this form, the model information is included into $\mu_{_H}$ and $\bm y_{_d}$,
but not into the covariance matrix $\tensor C_{_d}$ explicitly. The error bound 
of preserving linear physical constraints can be derived from 
Theorem~\ref{thm:phicok} by setting $\tilde a_i=0$. In practice, we can modify 
Step 4 of Algorithm~\ref{algo:phicok} as identifying $\bm y_{_L}$ via iterating
over the set $\{Y^m\}_{m=1}^M\cup\{\bm\mu_{_L}\}$. 

Finally, we summarize this section by presenting in Figure~\ref{fig:spec} a 
spectrum of GP-based data-driven and physics-driven methods. In general, methods
closer to the physics-driven end result in predictions that better enforce 
physical constraints, while methods closer to the data-driven end may results in
more accurate posterior mean (e.g., smaller difference between posterior mean and
the ground truth). The standard kriging and cokring methods are at the data-driven 
end of the spectrum. PhIK, modified PhIK and CoPhIK assign different ``weights'' 
to data and model, and thus can be considered as placed along the spectrum 
between the data-driven and the physics-driven ends. The performance of the
physics-driven methods depends on the selection of stochastic model, and so as
the physics-informed methods. In general, we expect that adequate domain 
knowledge will lead to a good selection of stochastic models, which will result
in good accuracy of physics-informed GP methods. Moreover, as we point out in
the introduction, physical knowledge can also be incorporated directly into GP
kernels, especially for linear or linearized deterministic systems, which is an
alternative approach for data-model convergence under the GP framework.
\begin{figure}[!h]
  \centering
  \includegraphics[width=0.8\textwidth]{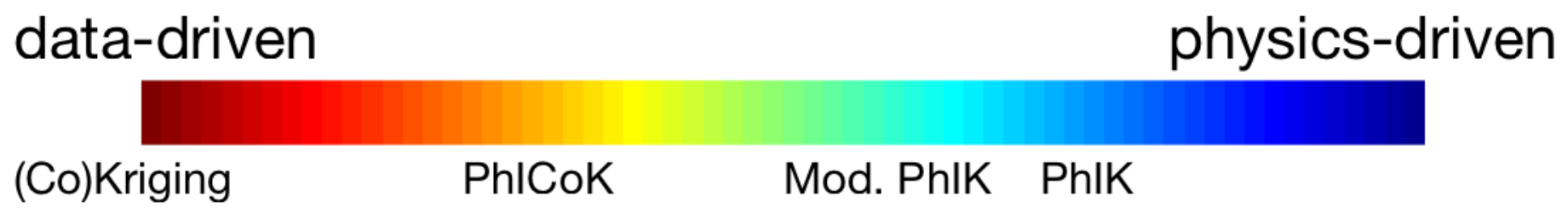}
  \caption{Spectrum of Physics-informed GP methods.}
  \label{fig:spec}
\end{figure}



\section{Numerical examples}
\label{sec:numeric}

We present three numerical examples to demonstrate the performance of CoPhIK for
reconstructing spatially distributed states of physical systems from sparse
observations and numerical simulations. All numerical examples are 
two-dimensional in physical space. We compare CoPhIK against ordinary Kriging
(referred to as Kriging) and PhIK. For each example, the two-dimensional 
reference field and the reconstructed field (posterior mean) are discretized and
presented in matrix form. We denote by $\tensor F$ the reference field in matrix
form, and by $\tensor F_r$ the reconstructed field in matrix form. We employ the
RMSE $\hat s$, the point-wise difference $\tensor F_r-\tensor F$ and the 
relative error $\Vert\tensor F_r-\tensor F\Vert_F/\Vert \tensor F\Vert_F$ (where
$\Vert\cdot\Vert_F$ is the Frobenius norm) to compare the performance of 
different methods. We use the Gaussian kernel in Kriging and CoPhIK because the
fields in the examples are smooth. The kriging code employed in this work is 
based on the scripts in~\cite{forrester2008engineering}.
We also compare the CoPhIK, PhIK and Kriging performance for adaptively 
identifying new observations in order to reduce the uncertainty of the field
reconstruction. For this purpose we employ the active learning algorithm 
outlined in Appendix B.

\subsection{Branin function}
\label{subsec:branin}

Here, we reconstruct a function with unknown coefficients based on sparse
measurements of the function. Specifically, we consider the modified Branin
function~\cite{forrester2008engineering}
\begin{equation}
  \label{eq:branin_fun}
  f(\bm x) = a(\bar y-b\bar x^2+c\bar x-r)^2 + g(1-p)\cos(\bar x)+g + qx,
\end{equation}
where $\bm x=(x, y)^{\trans}$,
\begin{equation*}
  \bar x=15x-5,~\bar y=15y,~(x,y)\in \mathbb{D}=[0,1]\times [0,1],
\end{equation*}
and
\begin{equation*}
  a=1,~b=\frac{5.1}{4\pi^2},~c=\frac{5}{\pi},~r=6,~g=10,~p=\frac{1}{8\pi},~q=5.
\end{equation*}
The contours of $f$, together with eight randomly chosen observation locations 
are presented in Figure~\ref{fig:branin_truth}. The function $f$ is evaluated on
$41 \times 41$ uniform grids, so that the resulting discrete field $\tensor F$ 
is a $41\times 41$ matrix.
\begin{figure}[!h]
  \centering
  \includegraphics[width=0.4\textwidth]{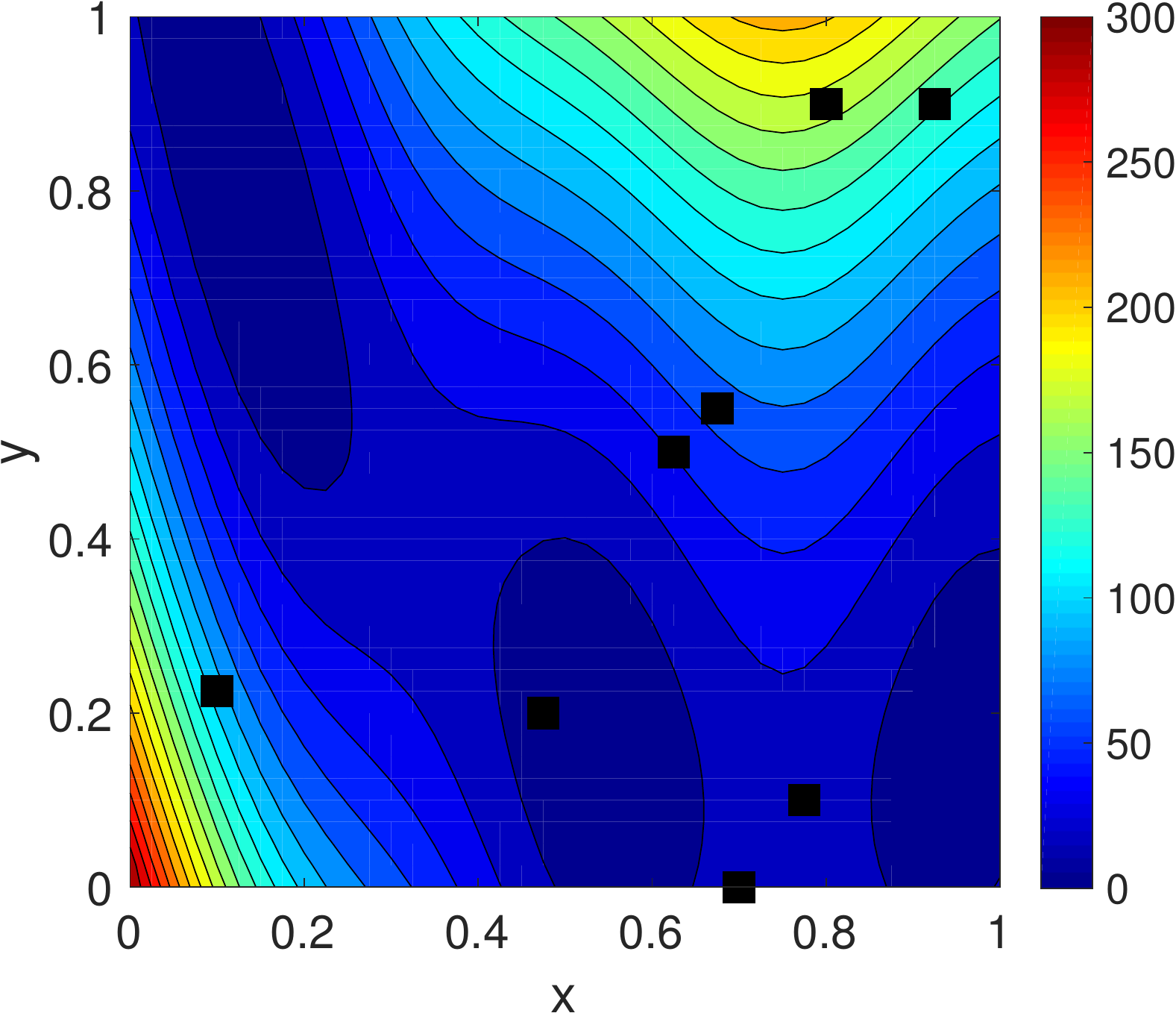}
  \caption{Contours of modified Branin function (on $41\times 41$ uniform grids)
  and locations of eight observations (black squares).}
  \label{fig:branin_truth}
\end{figure}
In this example, the stochastic model 
$u(\bm x;\omega): \mathbb{D}\times\Omega\rightarrow\mathbb{R}$ is obtained by
modifying the second $g$ in $f$, and treating the unknown coefficients $b$ and
$q$ in $f(\bm x)$ as random fields:
\begin{equation}
  \label{eq:branin_rf}
  \hat f(\bm x;\omega) = a(\bar y-\hat b(\bm x;\omega)\bar x^2+c\bar x-r)^2 
  + g(1-p)\cos(\bar x)+ \hat g + \hat q(\bm x;\omega)x,
\end{equation}
where $\hat g=20$, 
\begin{equation*}
  \begin{aligned}
    \hat b(\bm x;\omega) & = b\left\{0.9+\dfrac{0.2}{\pi}\sum_{i=1}^3\left[ \dfrac{1}{4i-1}\sin((2i-0.5)\pi x)\xi_{2i-1}(\omega) + \dfrac{1}{4i+1}\sin((2i+0.5)\pi y)\xi_{2i}(\omega)\right]\right\}, \\
    \hat q(\bm x;\omega) & = q\left\{1.0+\dfrac{0.6}{\pi}\sum_{i=1}^3\left[ \dfrac{1}{4i-3}\cos((2i-1.5)\pi x)\xi_{2i+5}(\omega) + \dfrac{1}{4i-1}\cos((2i-0.5)\pi y)\xi_{2i+6}(\omega)\right]\right\},
  \end{aligned}
\end{equation*}
and $\{\xi_i(\omega)\}_{i=1}^{12}$ are iid Gaussian random variables with zero
mean and unit variance. This stochastic model includes unkown ``physics" 
($\hat b$ and $\hat q$) and incorrect ``physics" ($\hat g$). We compute $M=300$
realizations of $u(\bm x;\omega)$, denoted as $\{\hat{\tensor F}^m\}_{m=1}^M$, 
by generating $M=300$ samples of $\{\xi_i(\omega)\}_{i=1}^{12}$ and 
evaluating $u(\bm x;\omega)$ on the $41\times 41$ uniform grids for each of
them. Of note, function $f$ is not a realization of $u(\bm x;\omega)$.

Figure~\ref{fig:branin_krig_mc} compares the purely data-driven reconstruction
(i.e., Kriging) and purely ``physics''-based reconstruction (i.e., mean and
variance obtained from the stochastic model Eq.~\eqref{eq:branin_rf} without
conditioning on data). The first row in Figure~\ref{fig:branin_krig_mc} presents
Kriging results obtained using the eight observations shown in 
Figure~\ref{fig:branin_truth}. The second row shows the ensemble mean (denoted
as $\mu(\hat{\tensor F}^m)$), standard deviation (denoted as 
$\sigma(\hat{\tensor F}^m)$) of $\{\hat{\tensor F}^m\}_{m=1}^M$, and
$\mu(\hat{\tensor F}^m)-\tensor F$. The relative error is more than $50\%$ for
Kriging and $19\%$ for the ensemble mean. In kriging, $\hat{s}$ is large in the
upper left subdomain where no observations are available. On the other hand, the
standard deviation of the realizations is large in the upper right region 
because of the randomness in the ``physical'' model. Similarly, 
$|\tensor F_r-\tensor F|$ is large in the upper left region, while
$|\mu(\hat{\tensor F}^m)-\tensor F|$ is large in the upper right region. These
results demonstrate that the data-driven and physics-based methods have
significantly different accuracy in reconstructing the entire field and result
in different estimates of uncertainty of the reconstruction.
\begin{figure}[!h]
  \centering%
  \begin{subfigure}[b]{0.23\textwidth}
    \centering%
    \includegraphics[height=\textwidth]{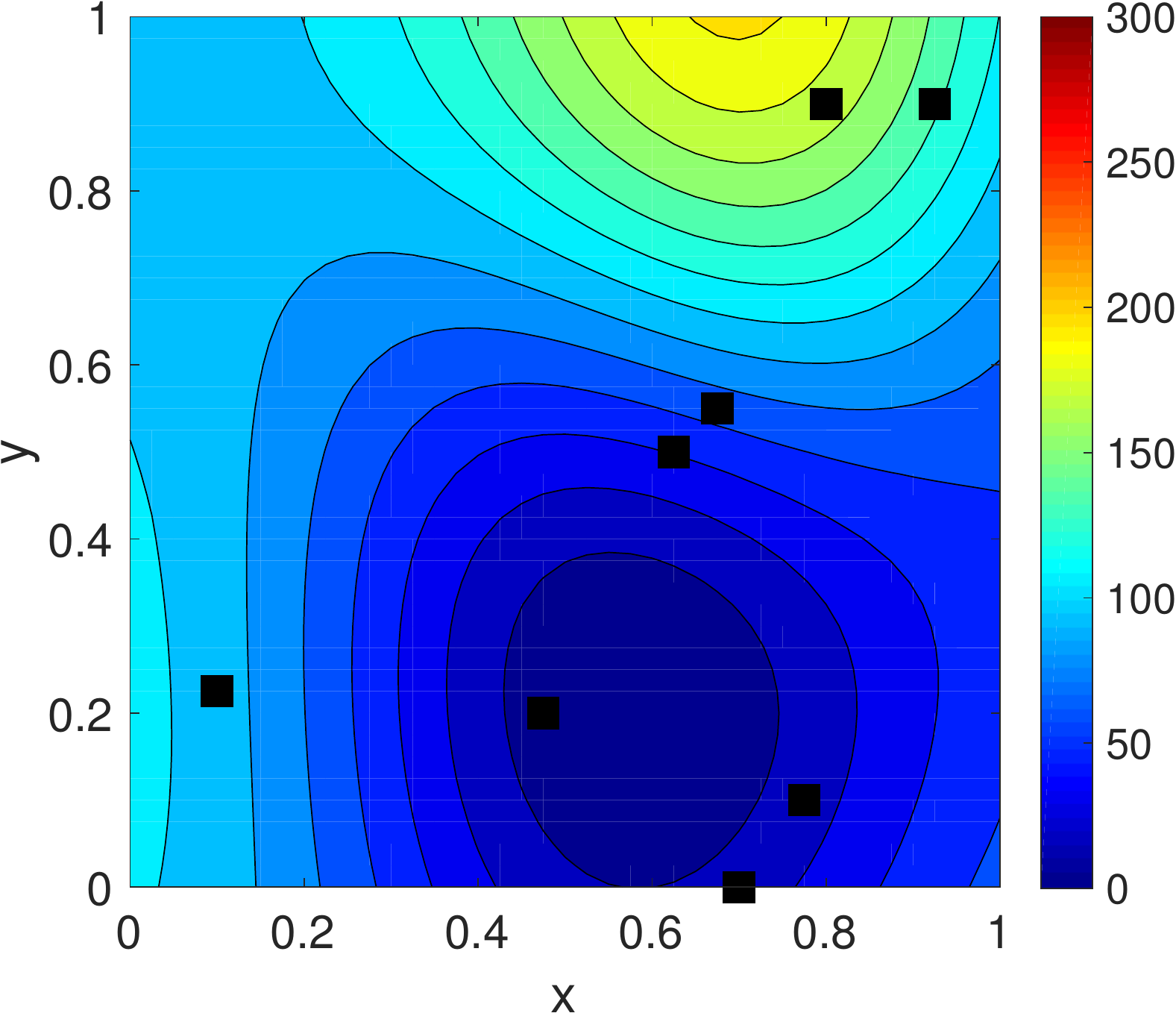}
    \caption{Kriging $\tensor F_r$}
  \end{subfigure}\qquad
  \begin{subfigure}[b]{0.23\textwidth}
    \centering%
    \includegraphics[height=\textwidth]{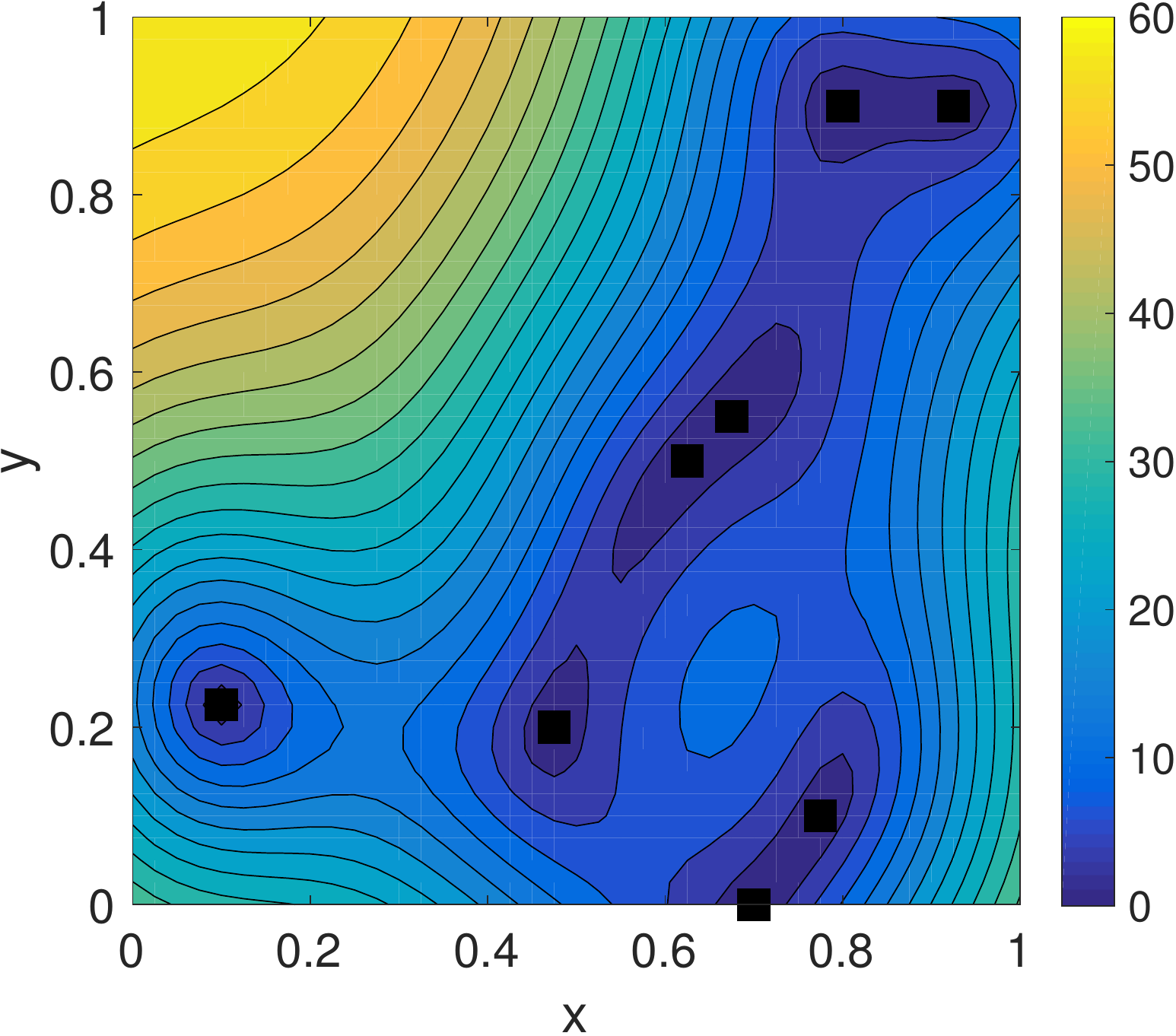}
    \caption{Kriging $\hat s$}
  \end{subfigure}\qquad
  \begin{subfigure}[b]{0.23\textwidth}
    \centering%
    \includegraphics[height=\textwidth]{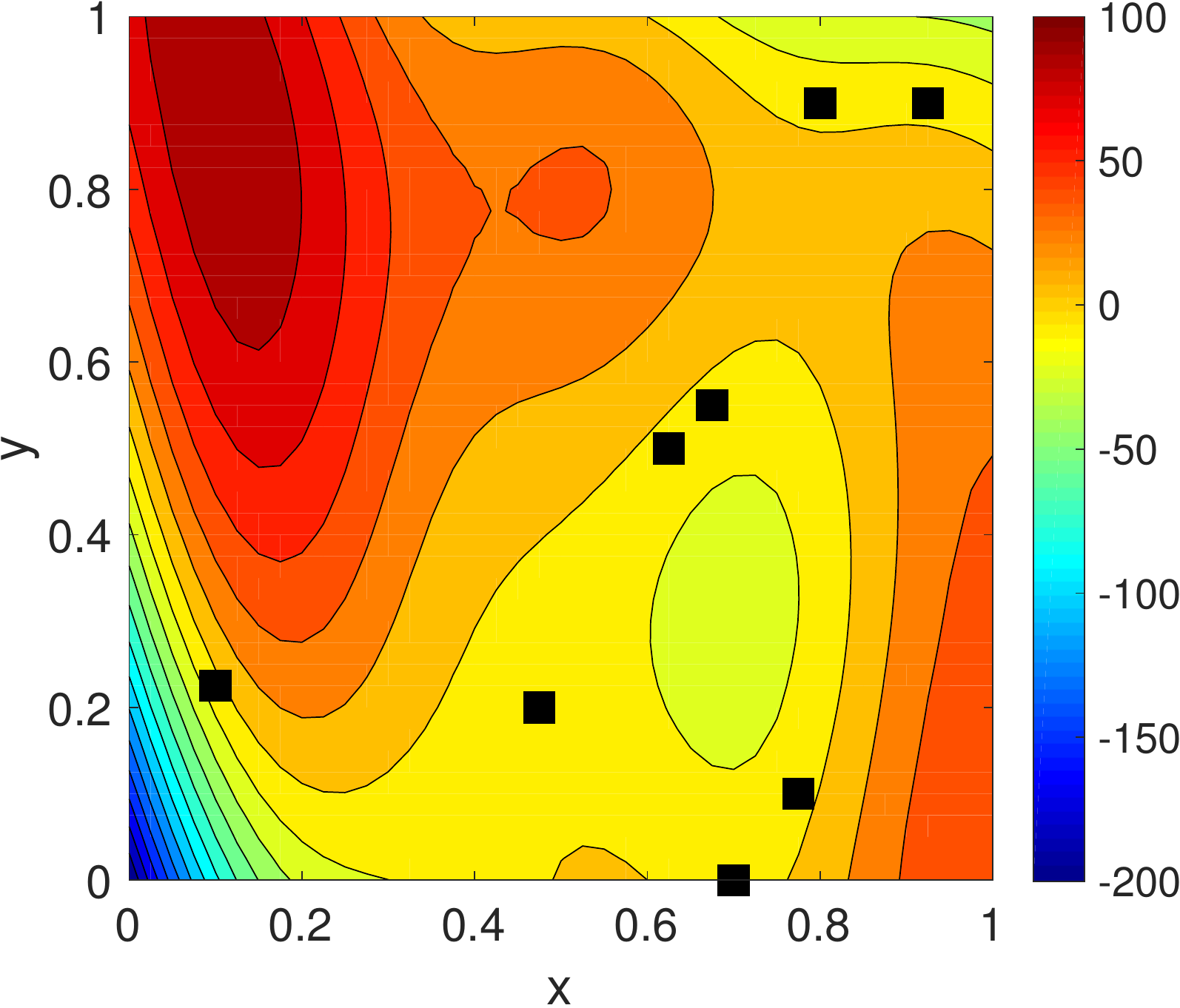}
    \caption{Kriging $\tensor F_r-\tensor F$}
  \end{subfigure}\\
  \begin{subfigure}[b]{0.23\textwidth}
    \centering%
    \includegraphics[height=\textwidth]{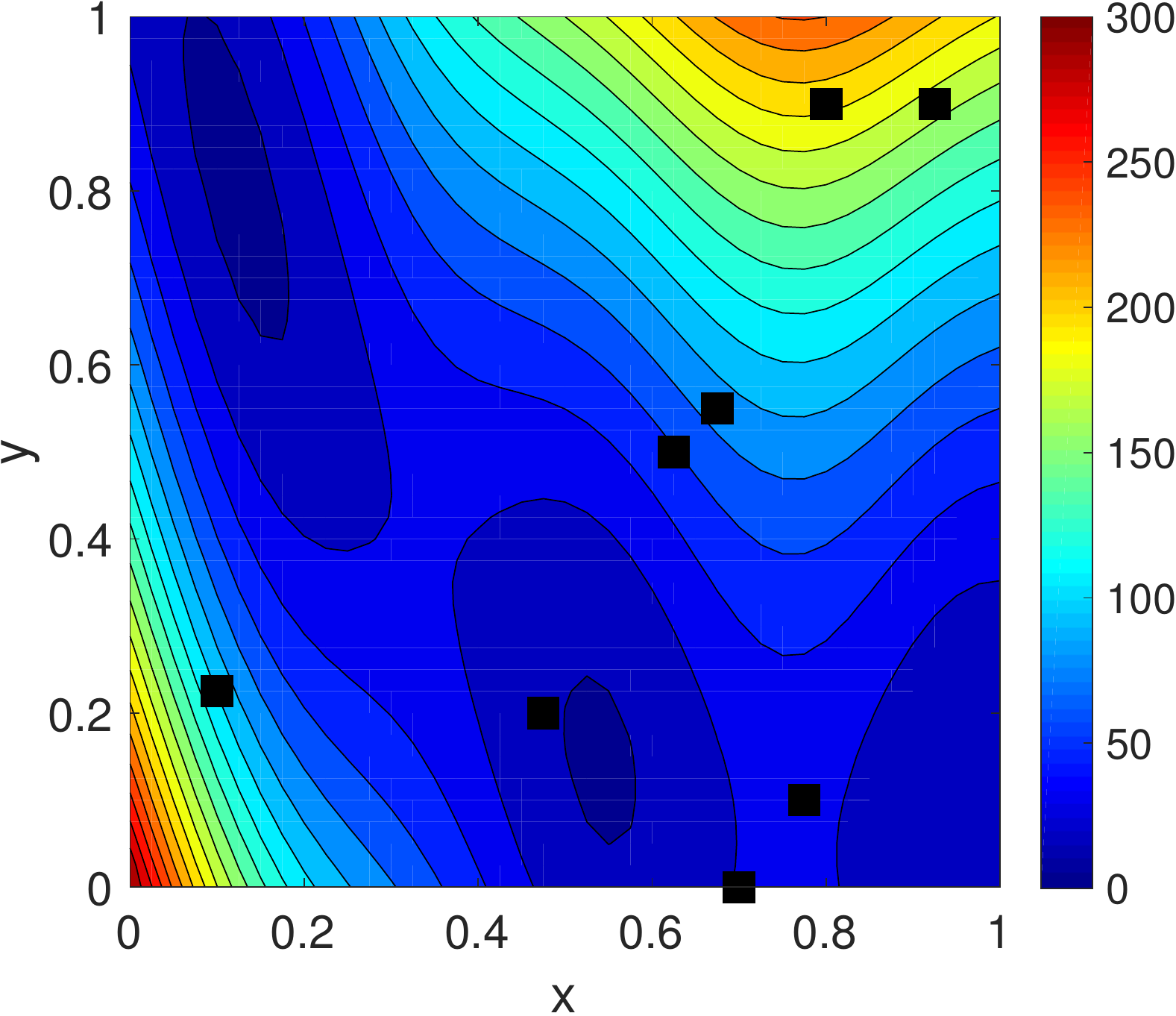}
    \caption{$\mu(\hat{\tensor F}^m)$}
  \end{subfigure}\qquad
  \begin{subfigure}[b]{0.23\textwidth}
    \centering%
    \includegraphics[height=\textwidth]{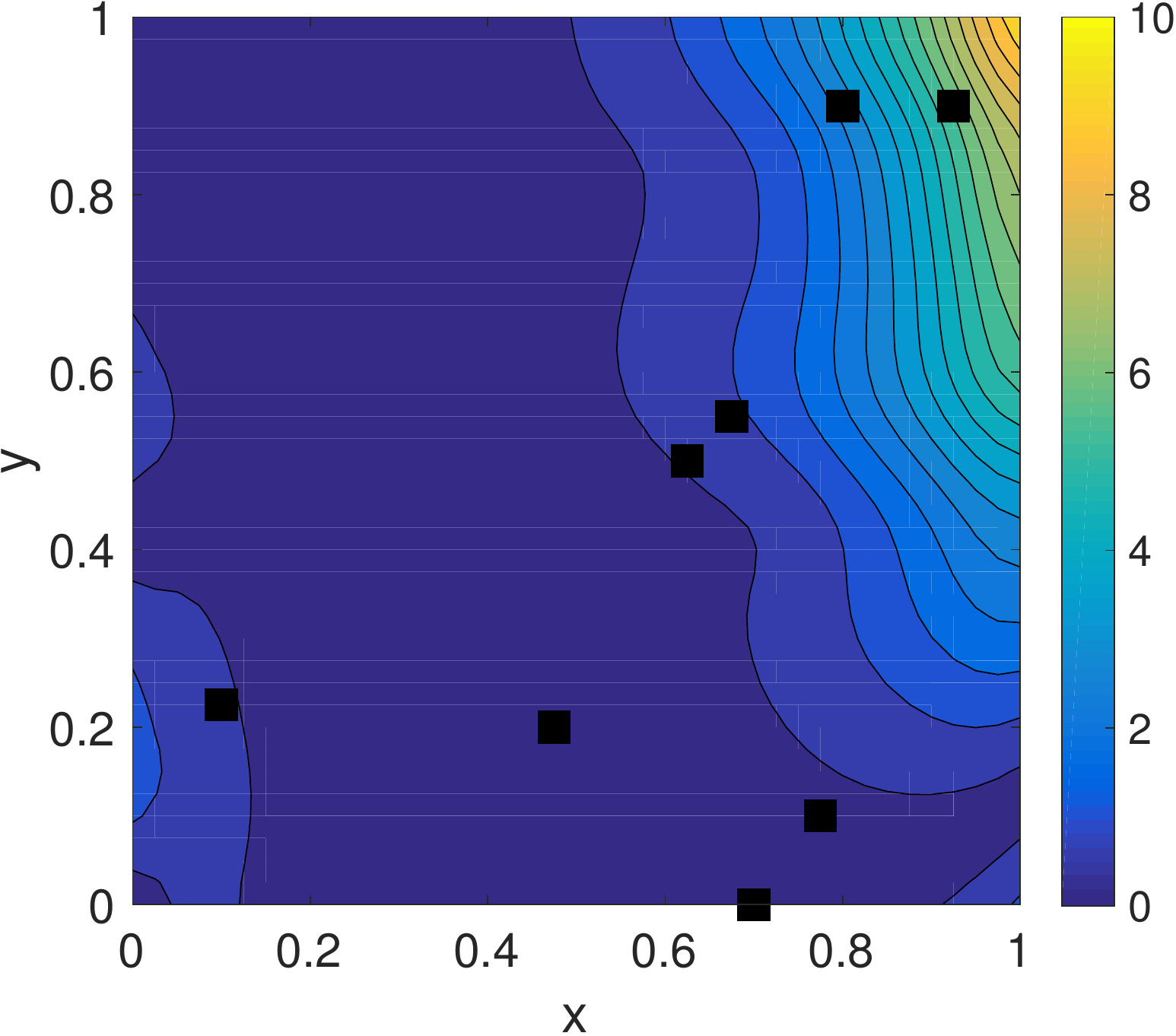}
    \caption{$\sigma(\hat{\tensor F}^m)$}
  \end{subfigure}\qquad
  \begin{subfigure}[b]{0.23\textwidth}
    \includegraphics[height=\textwidth]{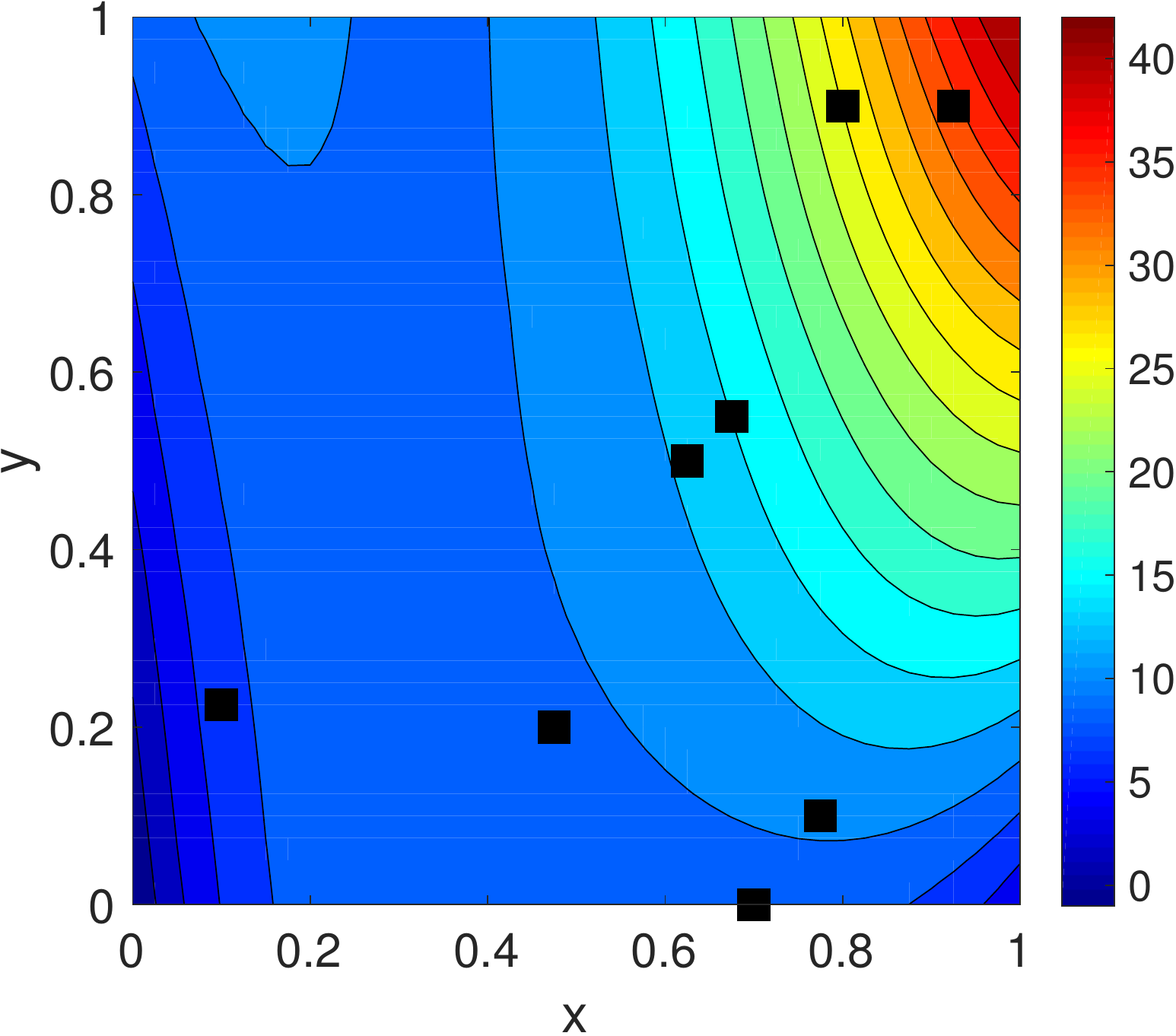}
    \caption{$\mu(\hat{\tensor F}^m)-\tensor F$}
  \end{subfigure}
  \caption{Reconstruction of the modified Branin function by Kriging (first row) and statistics of the ensemble $\{\hat{\tensor F}^m\}_{m=1}^M$ (second row).}
  \label{fig:branin_krig_mc}
\end{figure}

Figure~\ref{fig:branin_phik} presents results obtained using PhIK and CoPhIK.
In this case, CoPhIK outperforms PhIK in the accuracy, and both methods are
significantly more accurate than Kriging. However, PhIK shows smaller $\hat s$
than CoPhIK, and $\hat s$ in both PhIK and CoPhIK is significantly smaller than
in Kriging. This is because the prior covariance of PhIK is decided by the
realizations, and it doesn't account for the observation. Therefore, the
uncertainty of the PhIK result ($\hat s$ of PhIK) is bounded by the uncertainty
of the stochastic model ($\sigma(\hat{\bm F}^m)$). Also, the pattern of $\hat s$
in PhIK is similar to that of $\sigma(\hat{\tensor F}^m)$ in 
Figure~\ref{fig:branin_krig_mc}(e), i.e., it is large in the right subdomain and
small in the left subdomain. On the other hand, CoPhIK incorporates the 
observation in the posterior mean and kernel as we illustrate in 
Section~\ref{sec:method}, and its $\hat s$ pattern is more similar to Kriging's
in Figure~\ref{fig:branin_krig_mc}(b) as we use the Gaussian kernel for both
kriging and CoPhIK.
\begin{figure}[!h]
  \centering%
  \begin{subfigure}[b]{0.23\textwidth}
    \centering%
    \includegraphics[height=\textwidth]{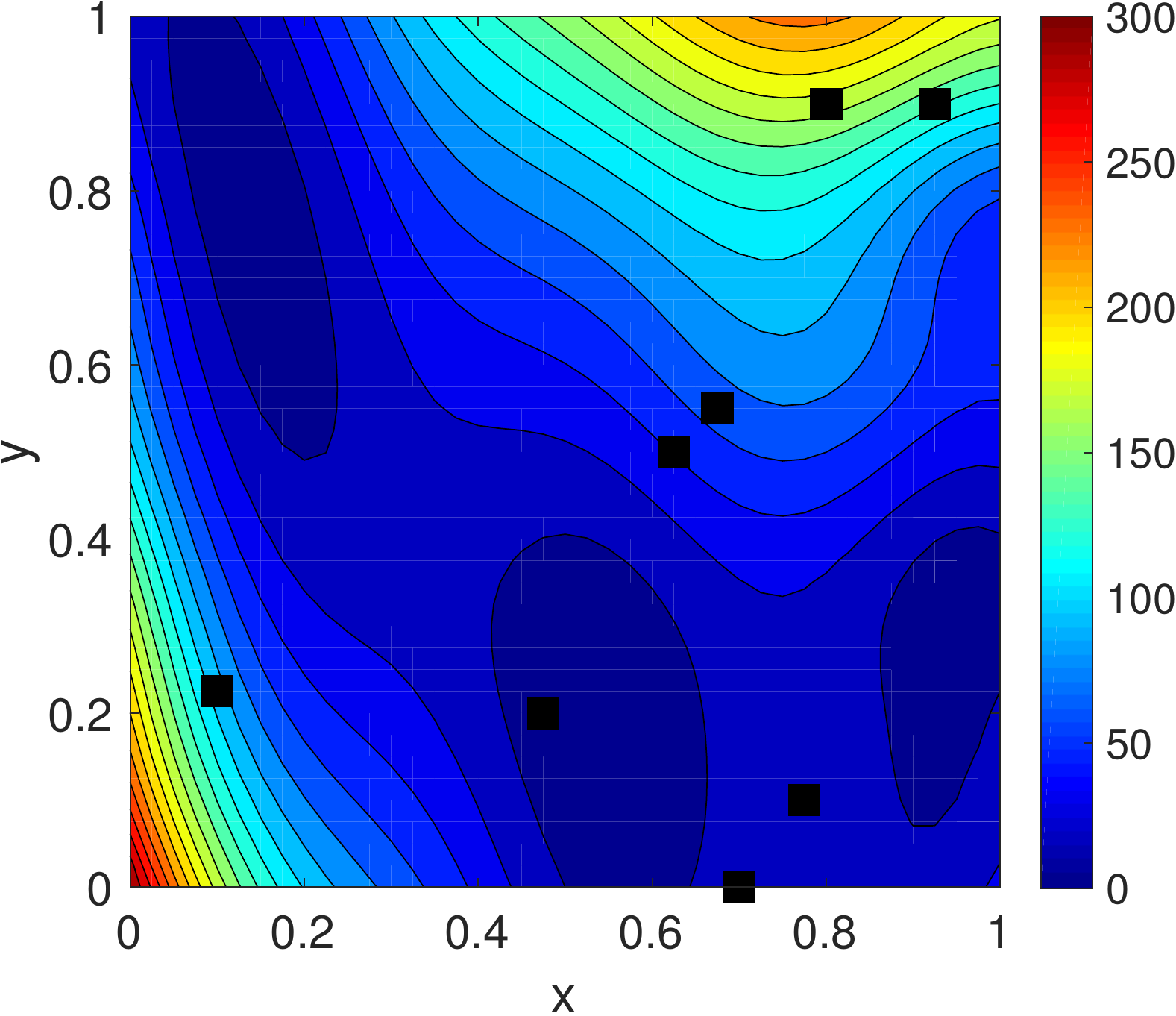}
    \caption{PhIK $\tensor F_r$}
  \end{subfigure}\qquad
  \begin{subfigure}[b]{0.23\textwidth}
    \centering%
    \includegraphics[height=\textwidth]{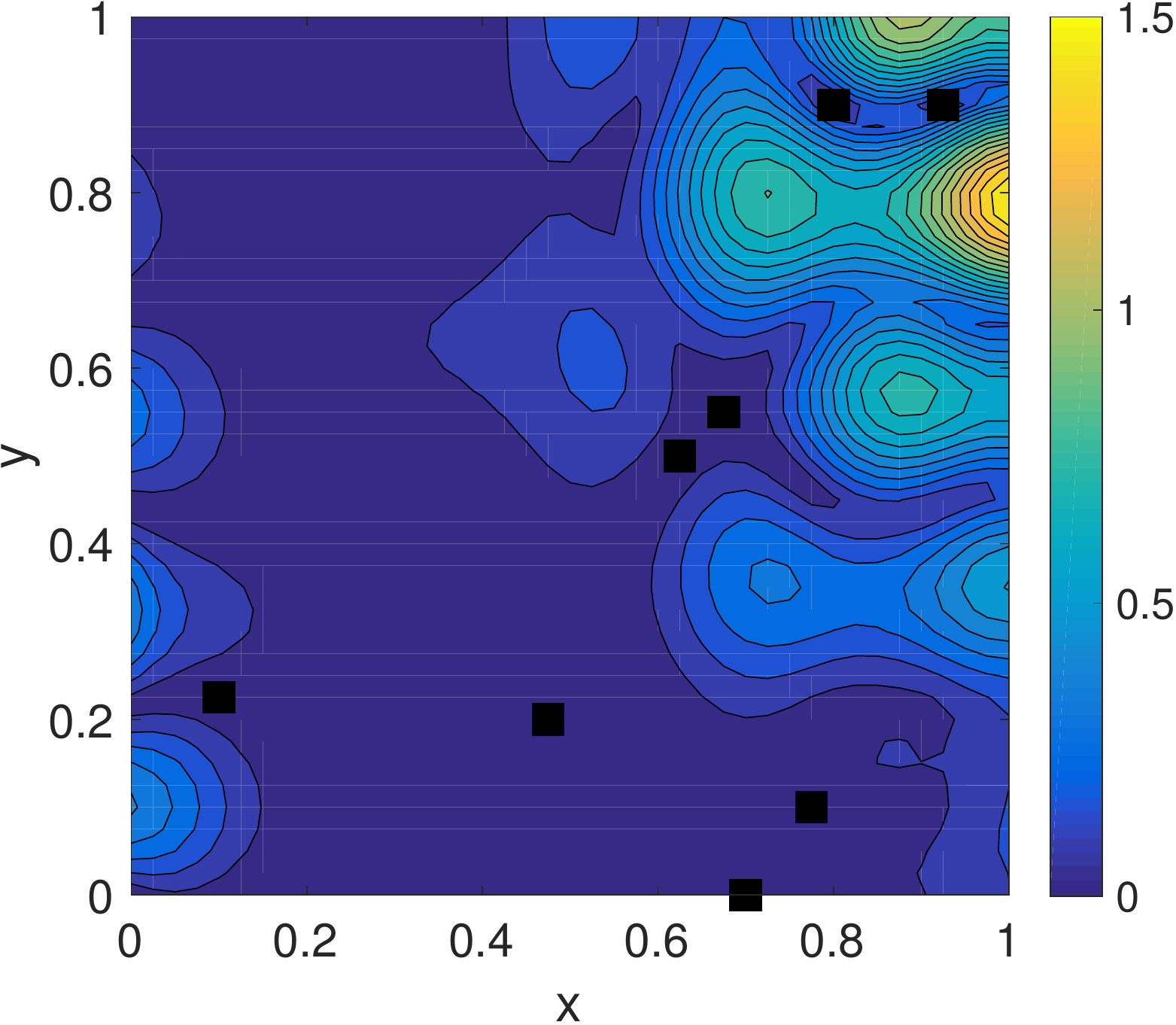}
    \caption{PhIK $\hat s$}
  \end{subfigure}\qquad
  \begin{subfigure}[b]{0.23\textwidth}
    \centering%
    \includegraphics[height=\textwidth]{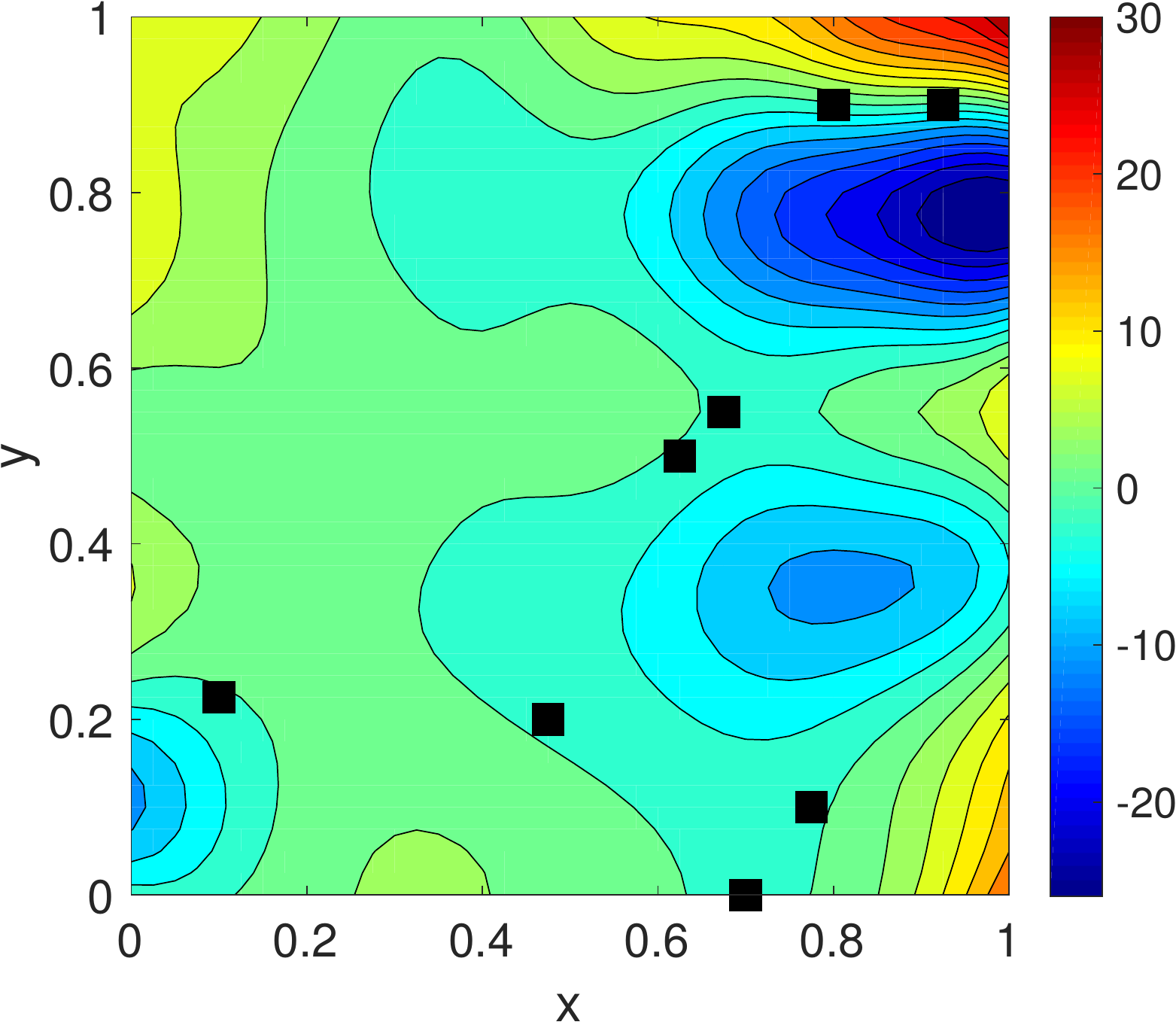}
    \caption{PhIK $\tensor F_r-\tensor F$}
  \end{subfigure}\\
  \begin{subfigure}[b]{0.23\textwidth}
    \centering%
    \includegraphics[height=\textwidth]{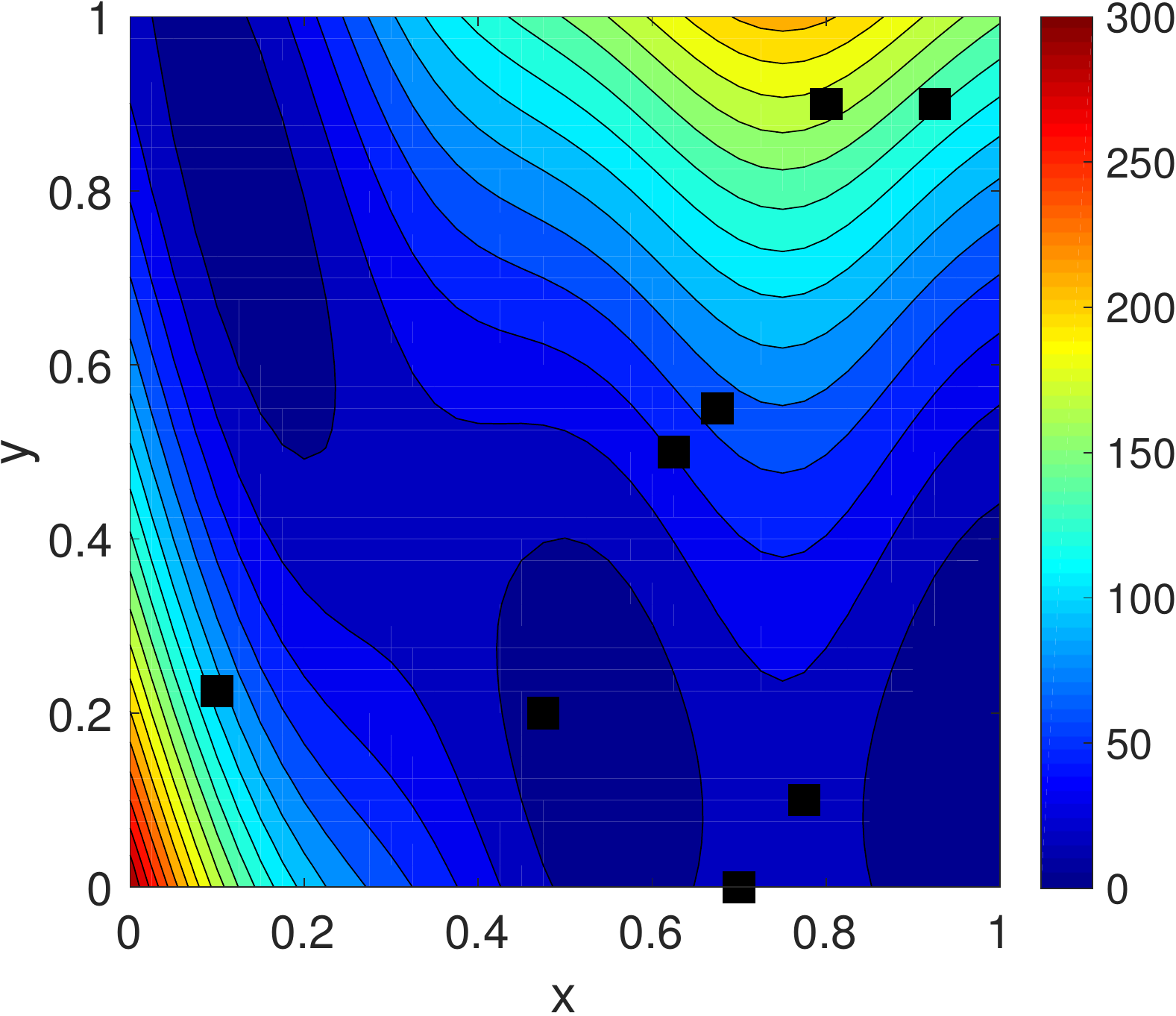}
    \caption{CoPhIK $\tensor F_r$}
  \end{subfigure}\qquad
  \begin{subfigure}[b]{0.23\textwidth}
    \centering%
    \includegraphics[height=\textwidth]{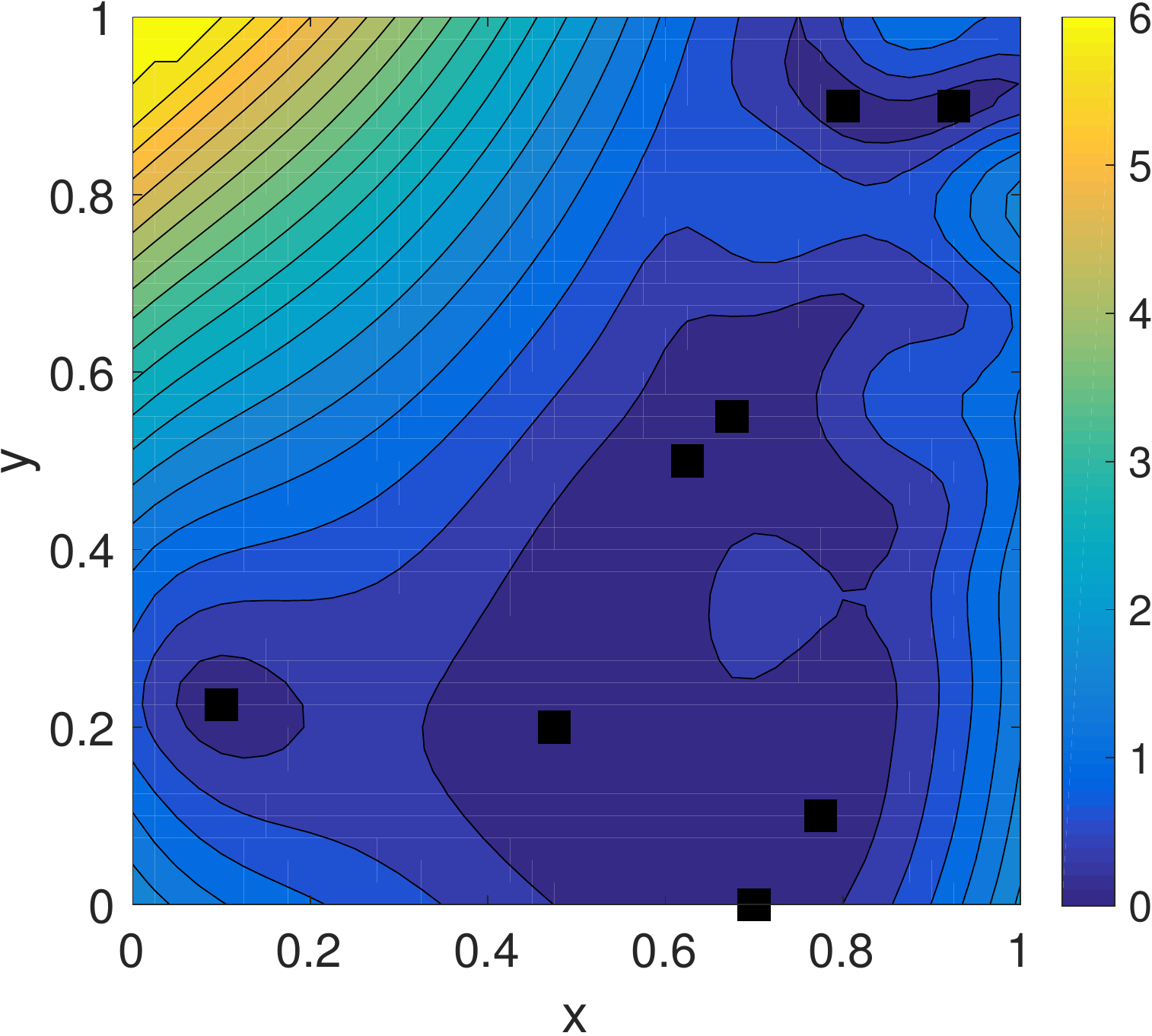}
    \caption{CoPhIK $\hat s$}
  \end{subfigure}\qquad
  \begin{subfigure}[b]{0.23\textwidth}
    \centering%
    \includegraphics[height=\textwidth]{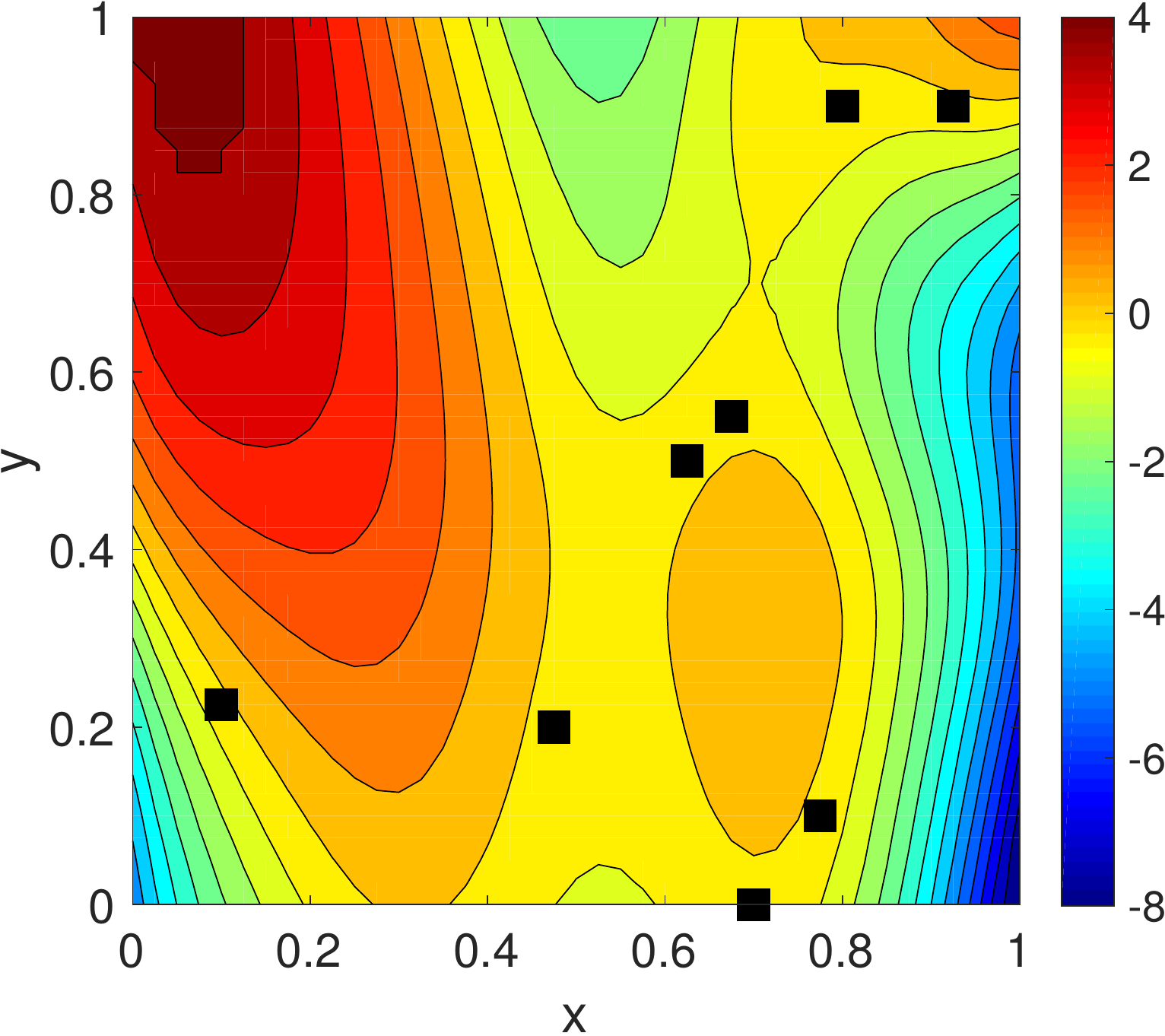}
    \caption{CoPhIK $\tensor F_r-\tensor F$}
  \end{subfigure}
  \caption{Reconstruction of the modified Branin function by PhIK (first row) and CoPhIK (second row).} 
  \label{fig:branin_phik}
\end{figure}

We use Algorithm~\ref{algo:act} in combination with kriging, PhIK, and CoPhIK to
perform active learning by adding one by one new observations of $f$ at the 
global maximum of $\hat s$. Figure~\ref{fig:branin_act} presents the 
reconstructions of the modified Branin function with eight new observations 
added using active learning. In this figure, the first, second and third row
corresponds to Kriging, PhIK and CoPhIK, respectively. The initial eight
observations are marked by squares, and added observations are marked by stars.
By comparing to Figures~\ref{fig:branin_krig_mc} and~\ref{fig:branin_phik} it 
can be seen that reconstruction accuracy increases as more observations are 
added, and the uncertainty in the reconstruction is reduced. It can also be 
seen that the additional observation locations identified by CoPhIK are similar
to that of kriging. In contrast, most of the additional observations identified
by PhIK are on the right boundary.
\begin{figure}[!h]
  \centering%
  \begin{subfigure}[b]{0.22\textwidth}
    \centering%
    \includegraphics[height=\textwidth]{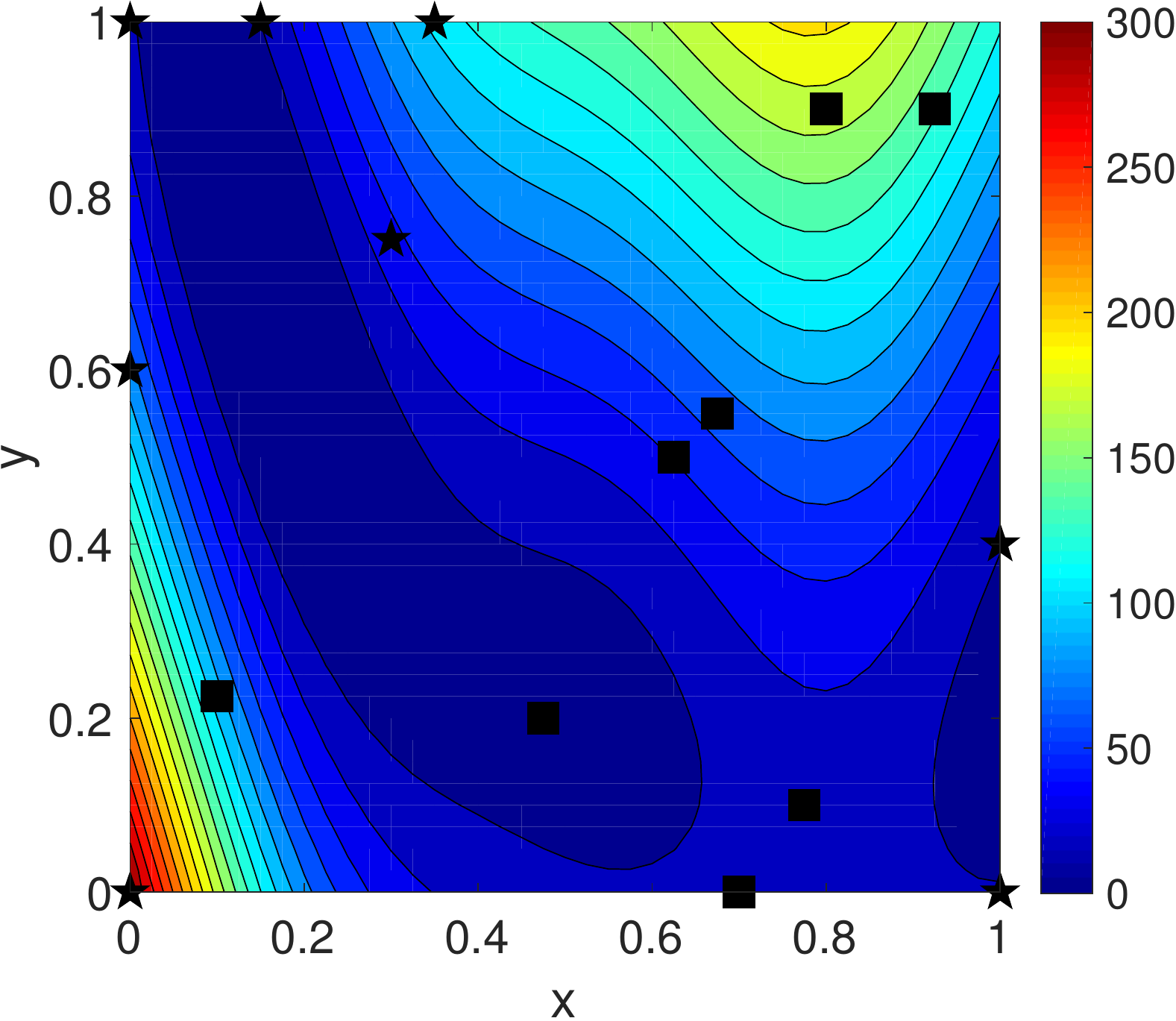}
    \caption{Kriging $\tensor F_r$}
  \end{subfigure}\qquad
  \begin{subfigure}[b]{0.22\textwidth}
    \includegraphics[height=\textwidth]{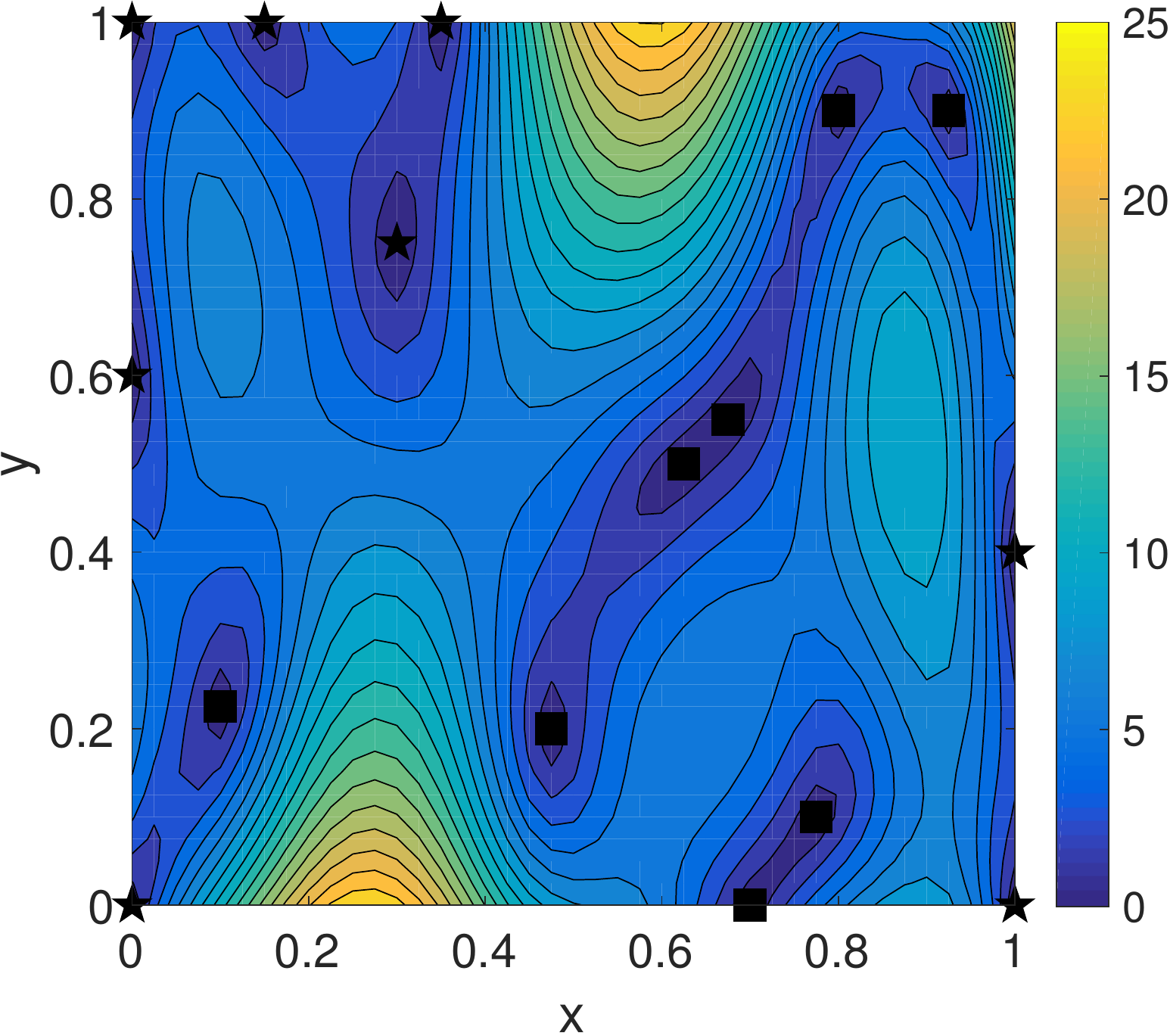}
    \caption{Kriging $\hat s$}
  \end{subfigure}\qquad
  \begin{subfigure}[b]{0.22\textwidth}
    \includegraphics[height=\textwidth]{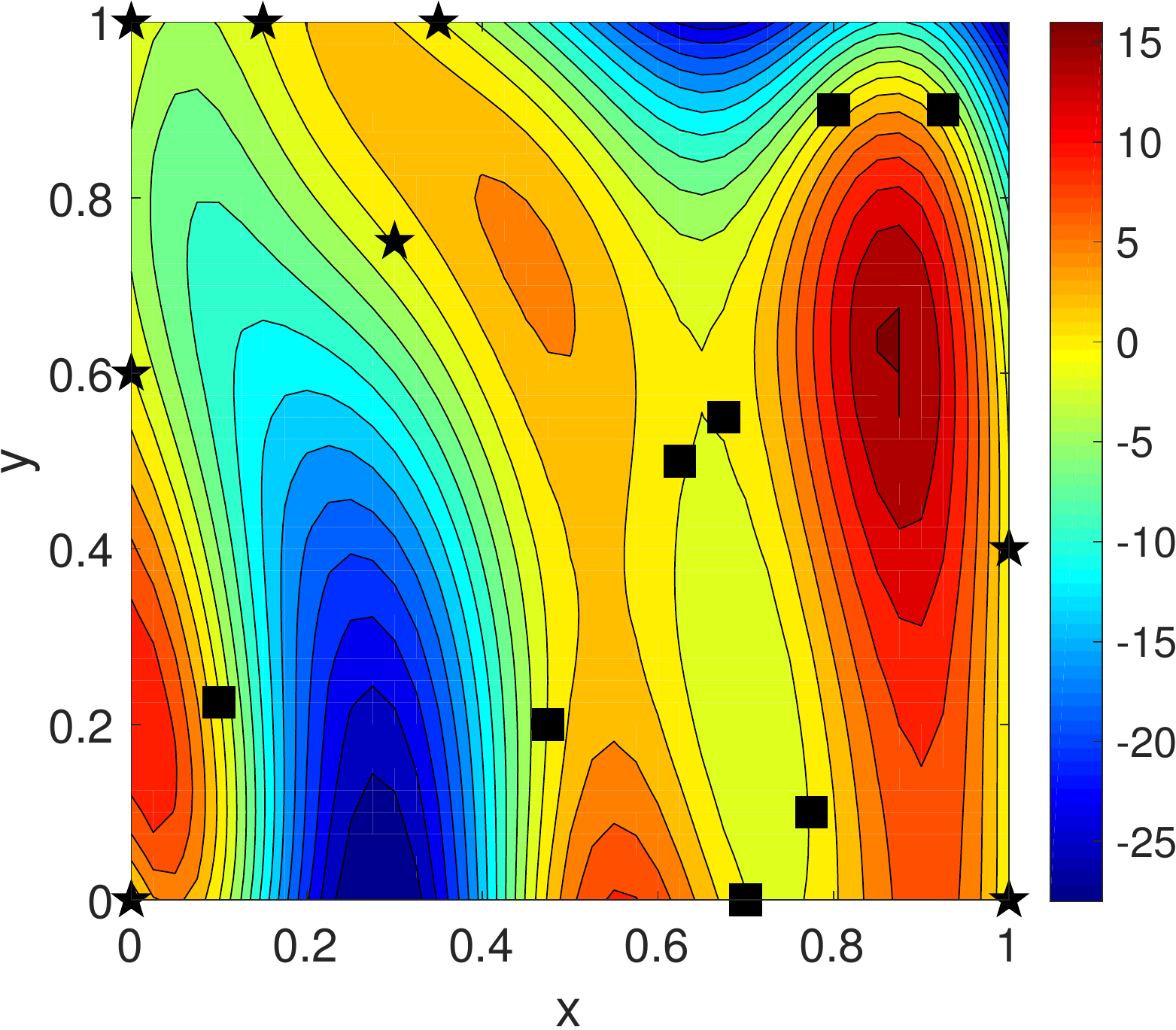}
    \caption{Kriging $\tensor F_r-\tensor F$}
  \end{subfigure}\\
  \begin{subfigure}[b]{0.22\textwidth}
    \includegraphics[height=\textwidth]{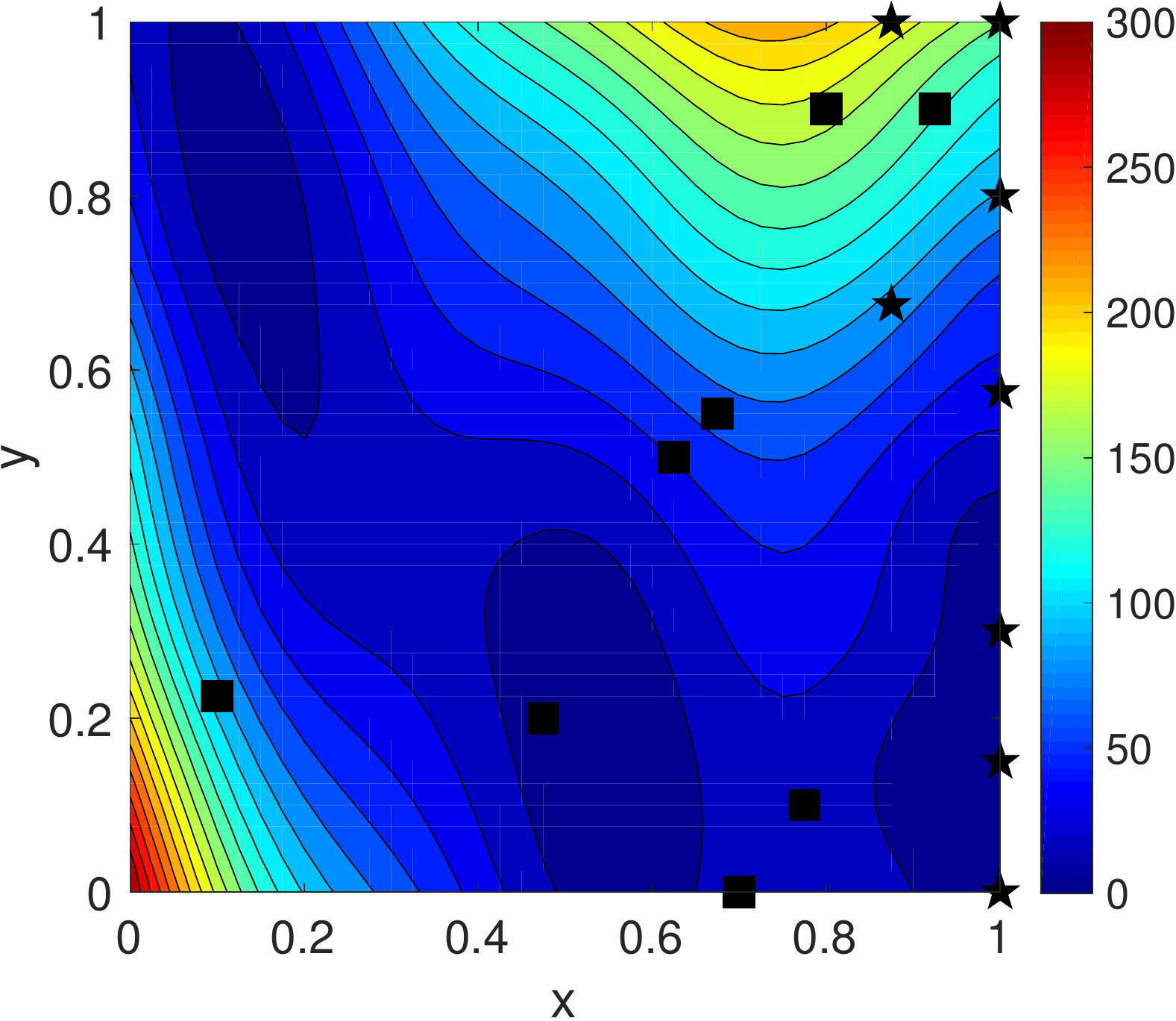}
    \caption{PhIK $\tensor F_r$}
  \end{subfigure}\qquad
  \begin{subfigure}[b]{0.22\textwidth}
    \includegraphics[height=\textwidth]{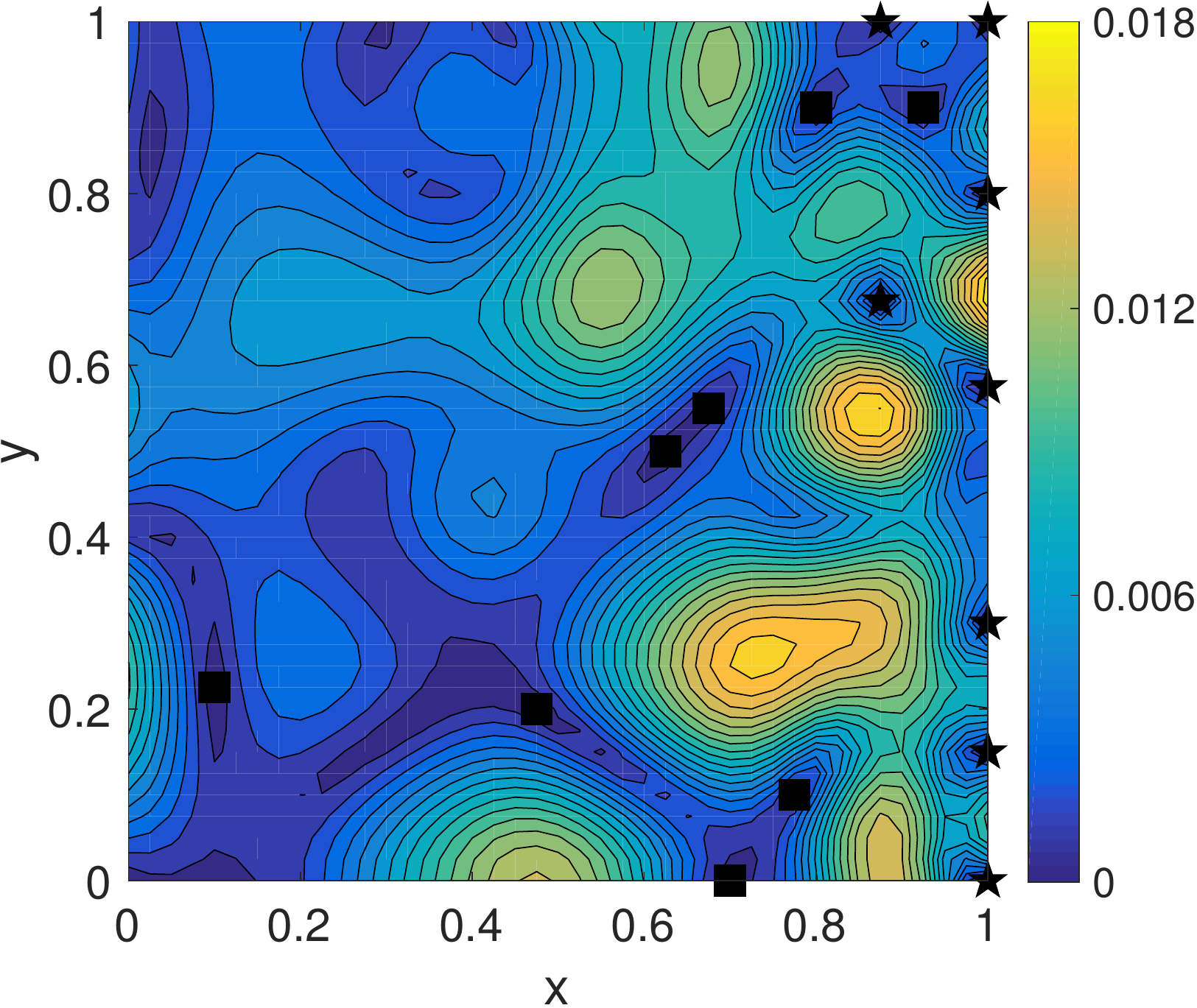}
    \caption{PhIK $\hat s$}
  \end{subfigure}\qquad
  \begin{subfigure}[b]{0.22\textwidth}
    \includegraphics[height=\textwidth]{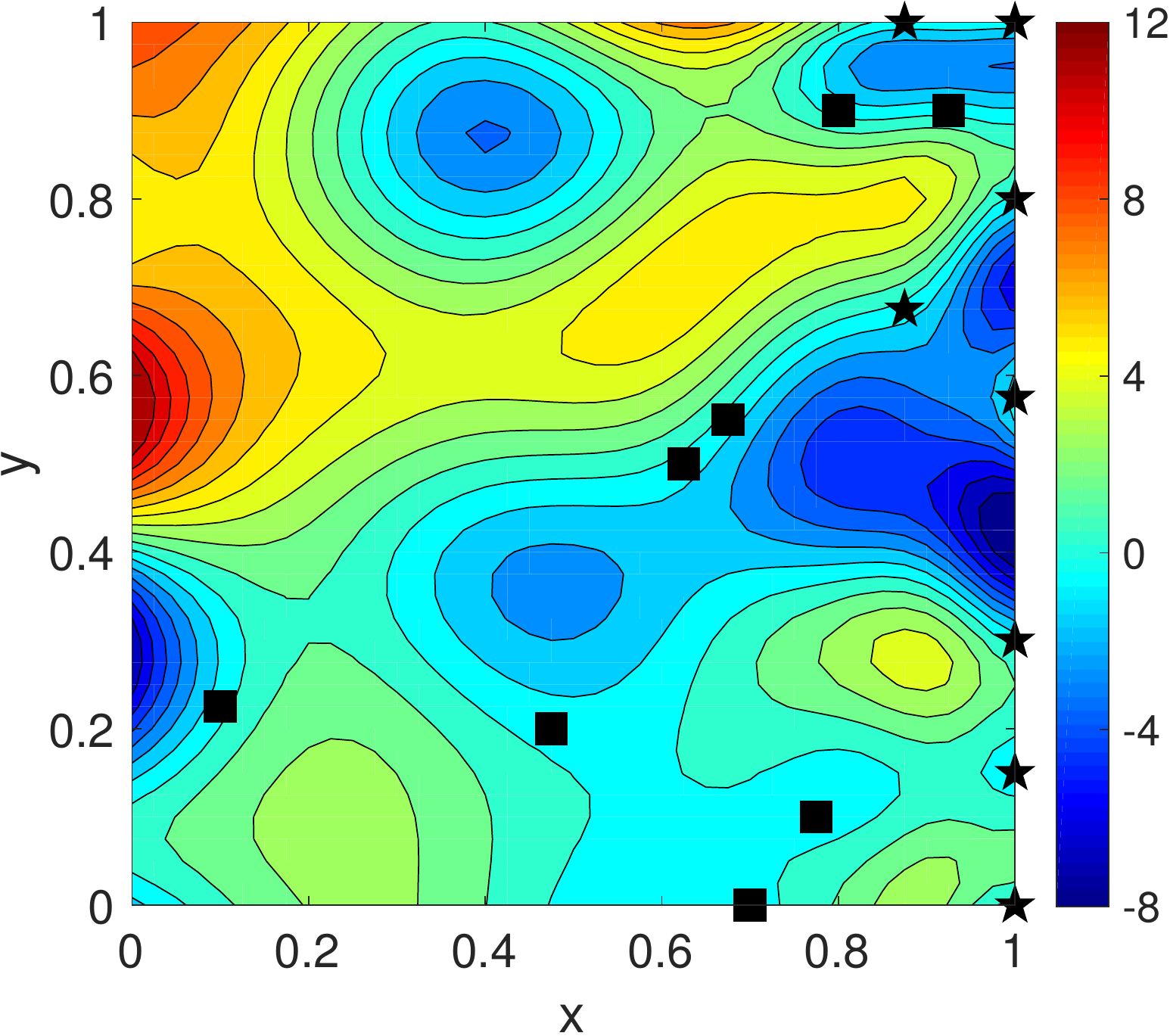}
    \caption{PhIK $\tensor F_r-\tensor F$}
  \end{subfigure}\\
  \begin{subfigure}[b]{0.22\textwidth}
    \includegraphics[height=\textwidth]{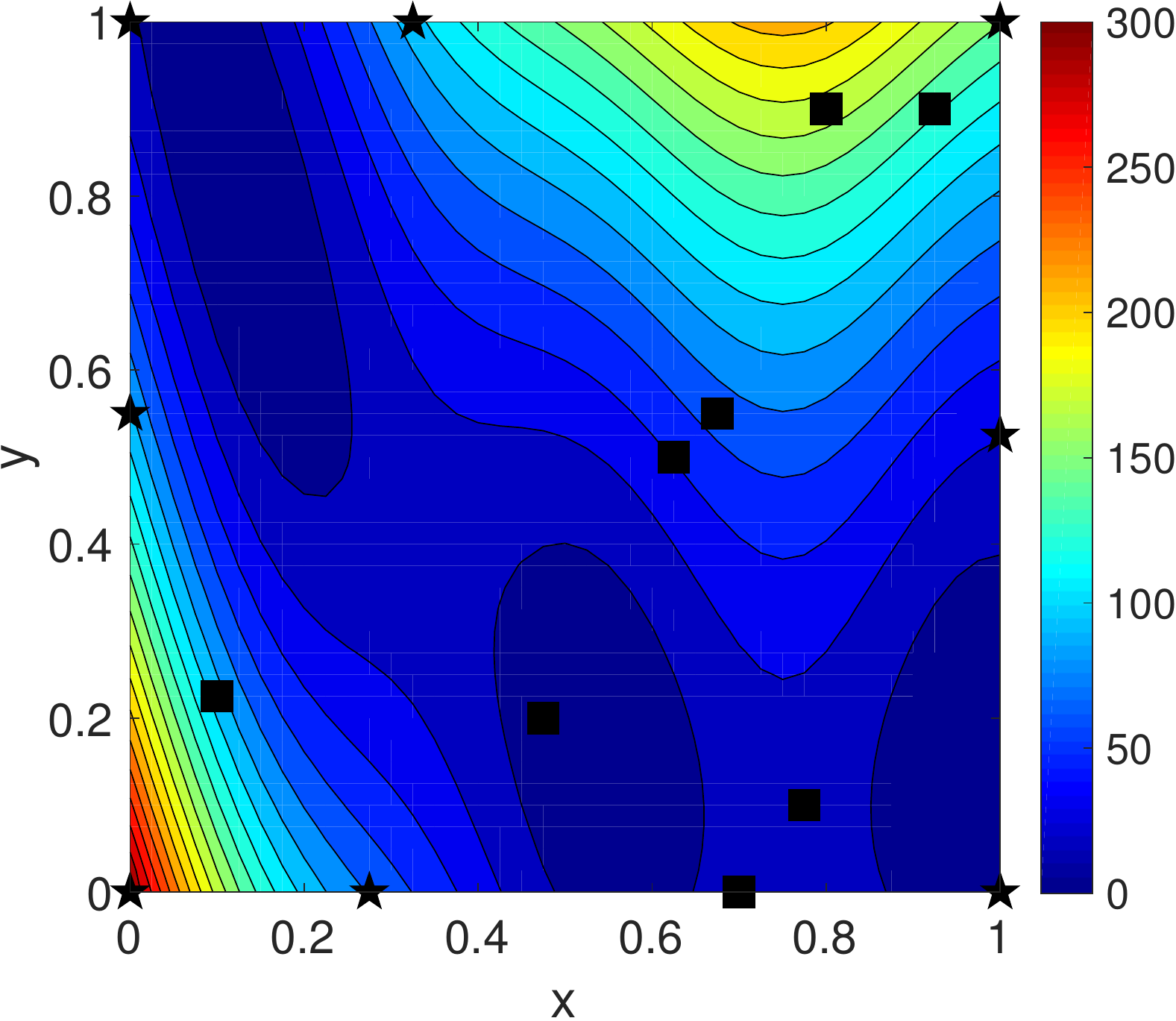}
    \caption{CoPhIK $\tensor F_r$}
  \end{subfigure}\qquad
  \begin{subfigure}[b]{0.22\textwidth}
    \includegraphics[height=\textwidth]{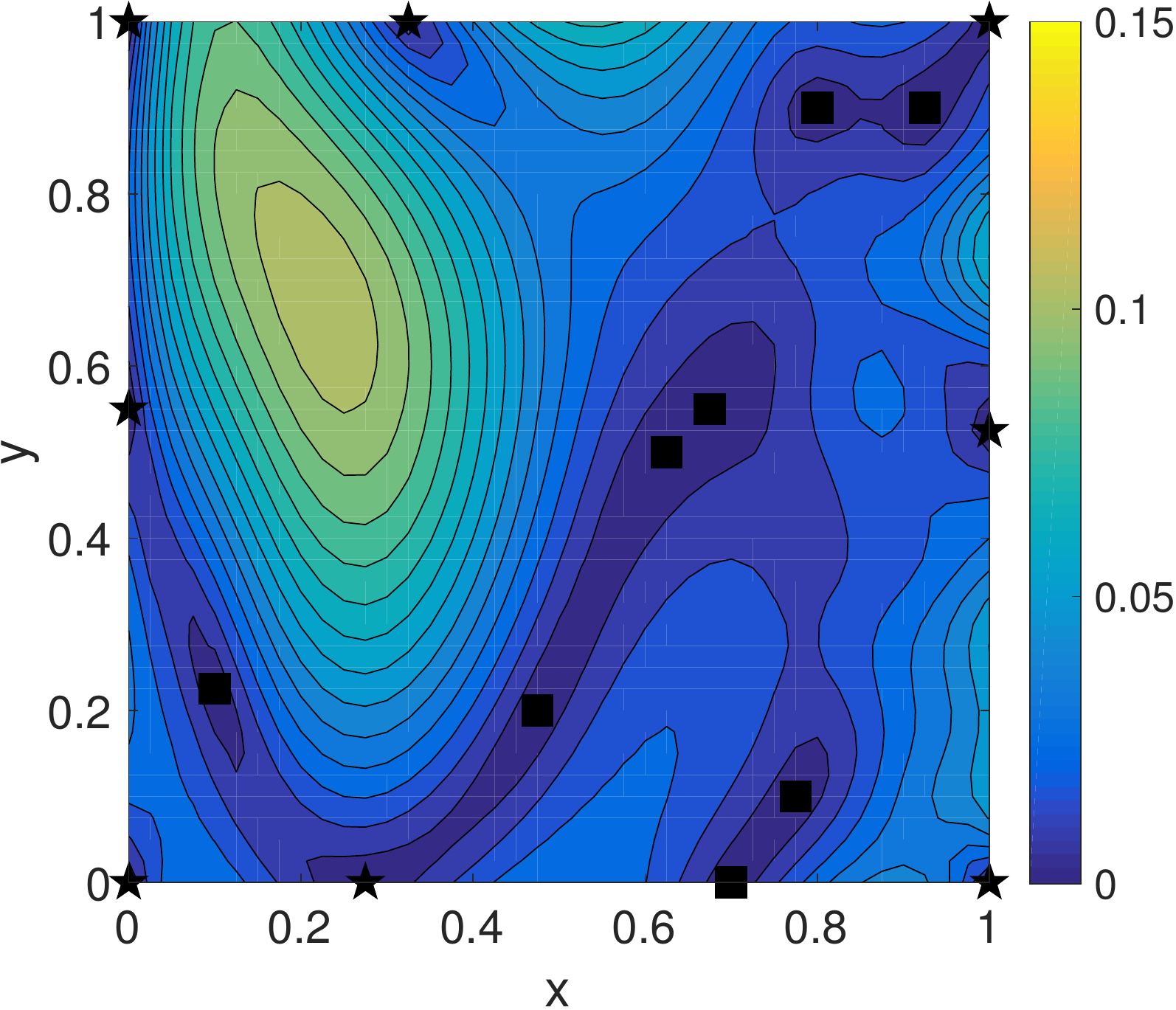}
    \caption{CoPhIK $\hat s$}
  \end{subfigure}\qquad
  \begin{subfigure}[b]{0.22\textwidth}
    \includegraphics[height=\textwidth]{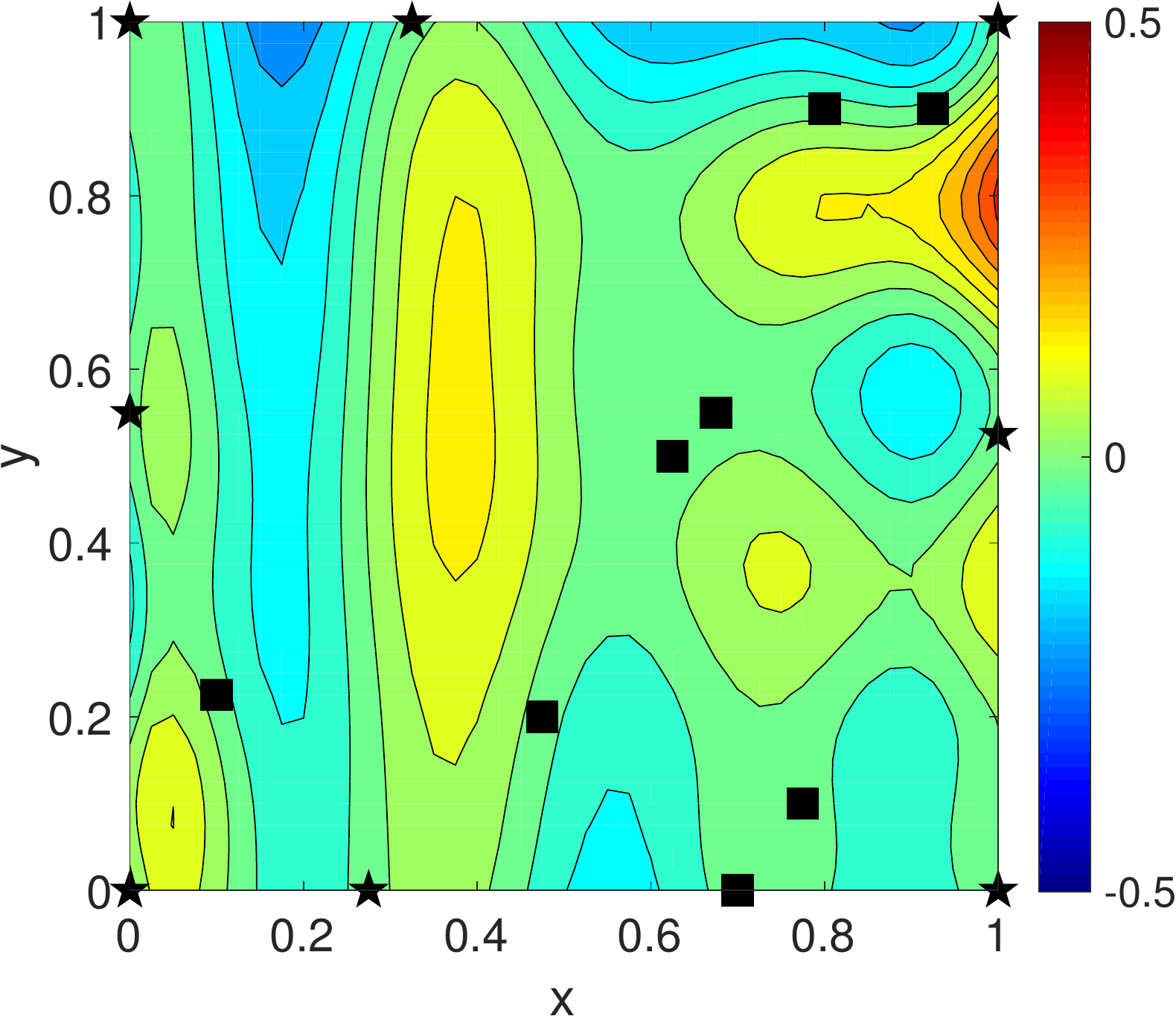}
    \caption{CoPhIK $\tensor F_r-\tensor F$}
  \end{subfigure}
  \caption{Reconstruction of the modified Branin function via active learning.
    Black squares are the locations of the original eight observations. Stars
  are newly added eight observations based on the actively learning algorithm.}
  \label{fig:branin_act}
\end{figure}

Figure~\ref{fig:branin_comp_l2} compares the relative error 
$\Vert\tensor F_r-\tensor F\Vert_F/\Vert \tensor F\Vert_F$ as a function of the
total number of observations for active learning based on kriging, PhIK,
modified PhIK and CoPhIK. For the original eight observations, the largest error
is in Kriging (over 50\%) followed by PhIK and modified PhIK errors 
(about $8\%$), with the smallest error in CoPhIK (less than $3\%$). As more
observations are added by the active learning algorithm, the error of Kriging
decreases to approximately $1.5\%$ at $24$ observations. The error of PhIK is
reduced from around $8\%$ to approximately $4\%$ at $12$ observations. Adding
$12$ more observations doesn't improve the accuracy. The error of modified PhIK 
is reduced from $8\%$ to $2\%$ at $12$ observations, then it changes very slowly
with additional observations. With $24$ observations, the accuracy of Kriging and
modified PhIK is approximately the same. CoPhIK has the best accuracy among all
the methods and, in general, its error decreases with additional measurements.
The error in CoPhIK reduces to less than $0.1\%$ with $24$ observations in 
total, which is more than one order of magnitude better than the other three
methods.
\begin{figure}[h]
  \centering%
  \includegraphics[width=0.4\textwidth]{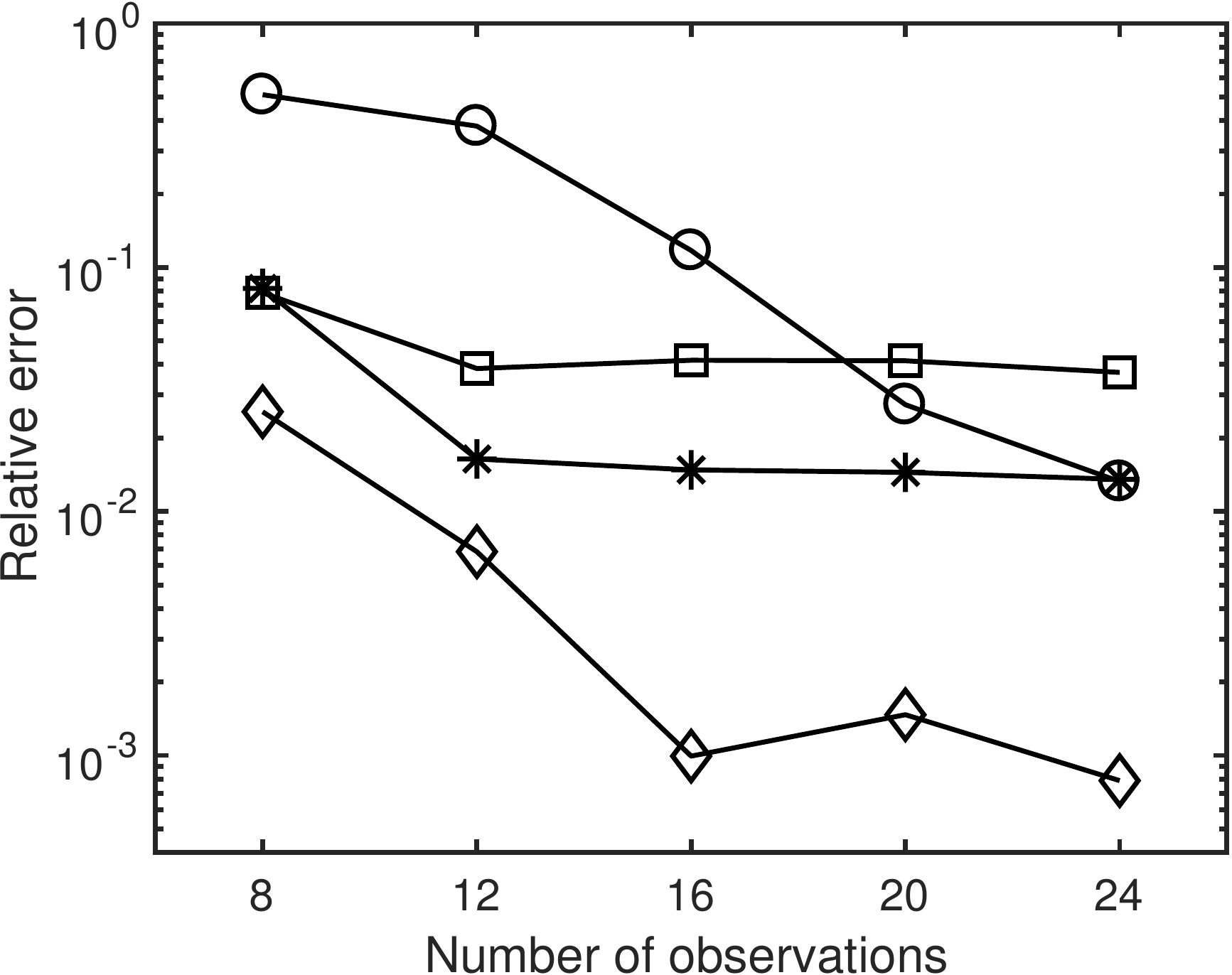}
  \caption{Relative error of reconstructed modified Branin function $\Vert\bm F_r-\bm F\Vert_F/\Vert\bm F\Vert_F$ using Kriging (``$\circ$"), PhIK (``$\square$"), modified PhIK (``$\ast$") and CoPhIK (``$\diamond$") with different numbers of total observations via active learning.}
  \label{fig:branin_comp_l2}
\end{figure}

Finally, this example also illustrates that smaller uncertainty ($\hat s$)
doesn't lead to smaller error in the posterior mean. In particular, in this
case, PhIK has the smallest $\hat s$, but CoPhIK posterior mean is the most
accurate.

\subsection{Heat transfer}
\label{subsec:heat}

In the second example, we consider the steady state of a heat transfer problem.
The dimensionless heat equation is given as
\begin{equation}
  \label{eq:heat}
  \dfrac{\partial T}{\partial t} - \nabla\cdot(\kappa(T) \nabla T) = 0, \quad
  \bm x\in\mathbb{D},
\end{equation}
subject to the boundary conditions:
\begin{equation}
  \label{eq:heat_bc}
  \left \{ \begin{aligned}
    T &= -30\cos(2\pi x)+40, && x\in\Gamma_1, \\
    \frac{\partial T}{\partial\bm n} &= -20, && x\in \Gamma_2,\\
    T &= 30\cos(2\pi(x+0.1))+40, && x\in\Gamma_3, \\
    \dfrac{\partial T}{\partial\bm n} &= 20, && x\in \Gamma_4, \\
    \dfrac{\partial T}{\partial\bm n} &= 0, && x\in \Gamma_5. 
  \end{aligned}
  \right .
\end{equation}
Here, $T(\bm x, t)$ is the temperature and $\kappa(T)$ is the 
temperature-dependent heat conductivity. The computational domain $\mathbb{D}$
is the rectangle $[-0.5, 0.5] \times [-0.2, 0.2]$ with two circular cavities
$R_1(O_1, r_1)$ and $R_2(O_2, r_2)$, with 
$O_1=(-0.3,0), O_2=(0.2,0), r_1=0.1, r_2=0.15$ (see Figure~\ref{fig:heat_geo}).
The reference conductivity is set as 
\begin{equation}
  \label{eq:heat_cond_true}
 \kappa(T)=1.0+\exp(0.02T),
\end{equation}
which results in the reference steady state temperature field shown in
Figure~\ref{fig:heat_truth}. This solution was obtained by solving 
Eq.~\ref{eq:heat} and~\ref{eq:heat_bc} using the finite element method with
unstructured triangular mesh implemented by the MATLAB PDE toolbox. The number
of degrees of freedom is $1319$, with a maximum grid size of $0.02$.
Observations of this exact profile are collected at six locations, marked by
black squares in Figure~\ref{fig:heat_truth}.
\begin{figure}[!h]
  \centering%
  \includegraphics[width=0.55\textwidth]{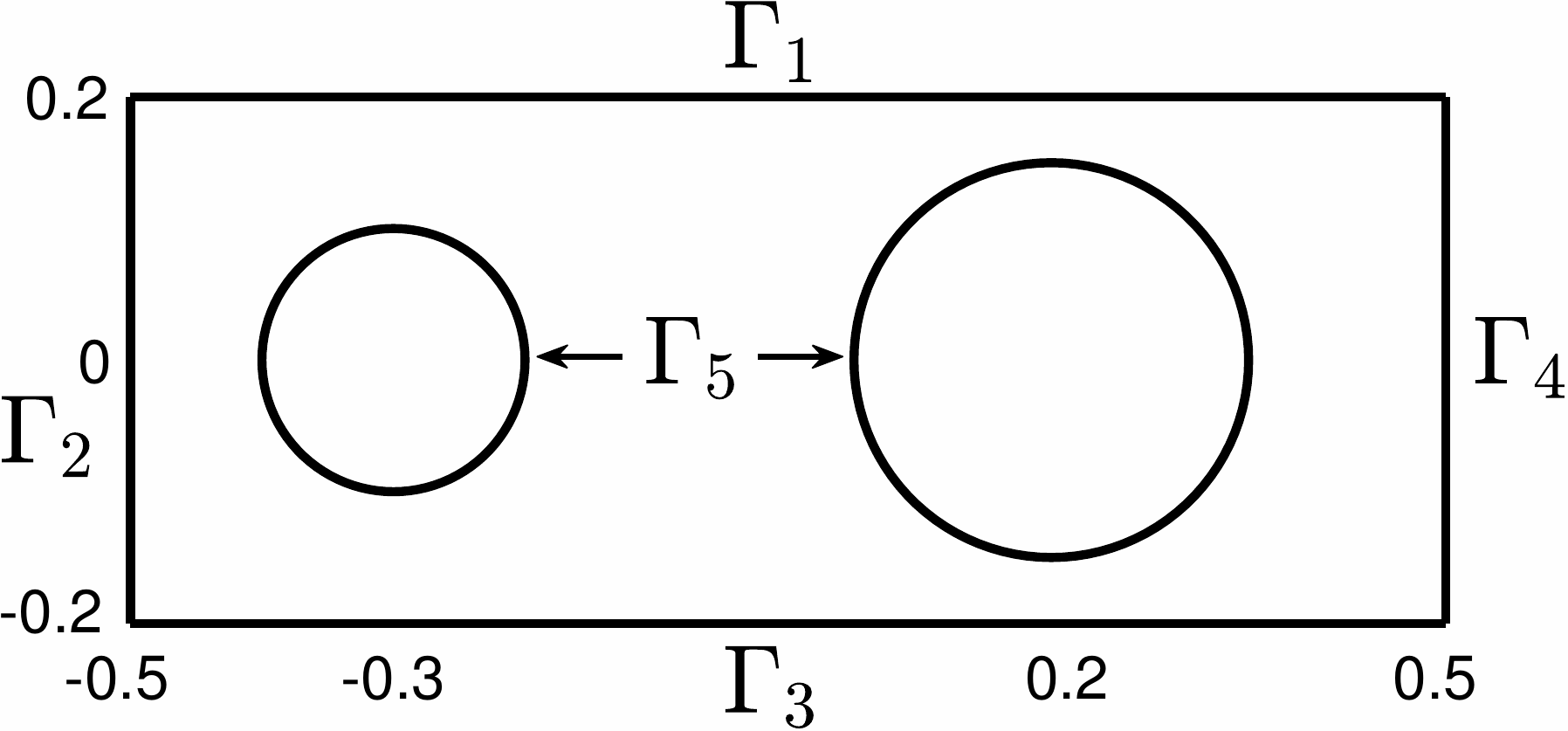}
  \caption{Heat transfer solution (steady state) computational domain.}
  \label{fig:heat_geo}
\end{figure}
\begin{figure}[!h]
  \centering%
  \includegraphics[width=0.6\textwidth]{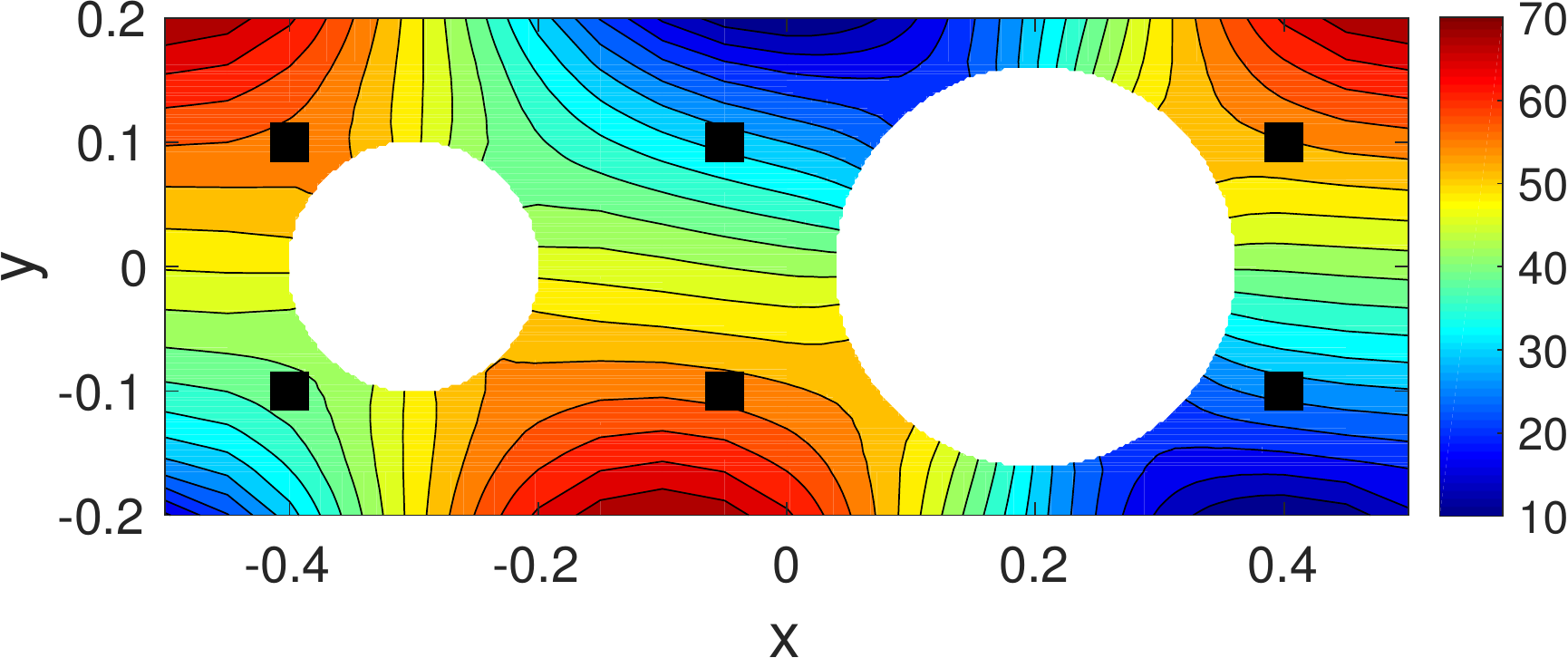}
  \caption{Contours of heat transfer solution (steady state) and locations of 
  six observations (black squares).}
  \label{fig:heat_truth}
\end{figure}

Now we assume that the conductivity model (\ref{eq:heat_cond_true}) is unknown 
and an ``expert knowledge" of $\kappa$ is expressed as 
\begin{equation}
  \label{eq:heat_cond_model}
  \kappa(T;\omega) = 0.1+\xi(\omega) T, 
\end{equation}
where $\xi(\omega)$ is a uniform random variable $\mathcal{U}[0.0012,0.0108]$.
Note that this example represents a biased expert knowledge that systematically
underestimates the heat conductivity and assumes an incorrect functional 
dependence of $\kappa$ on $T$. We sample the stochastic model by generating 
$M = 400$ samples of $\xi(\omega)$ and then solving Eq.~\eqref{eq:heat} for each
realization. We denote the resulting ensemble of temperatures solutions by
$\{\hat{\tensor F}^m\}_{m=1}^M$.

The first row in Figure~\ref{fig:heat_krig_mc} presents the posterior mean, RMSE
and pointwise reconstruction error of Kriging regression obtained with six
measurements whose locations are also shown in this figure. The relative error 
is large (about $27\%$). The second row in Figure~\ref{fig:heat_krig_mc} shows 
the mean and standard deviation of the ensemble $\{\hat{\tensor F}\}_{m=1}^M$
and the difference between the ensemble mean and the exact field. In this case,
the relative error of the ensemble average is $8\%$, which may be acceptable in
some application. For the selected number and locations of observations, Kriging
performs worse than the unconditional stochastic model.
\begin{figure}[!h]
  \centering%
  \begin{subfigure}[b]{0.32\textwidth}
    \centering%
    \includegraphics[width=\textwidth]{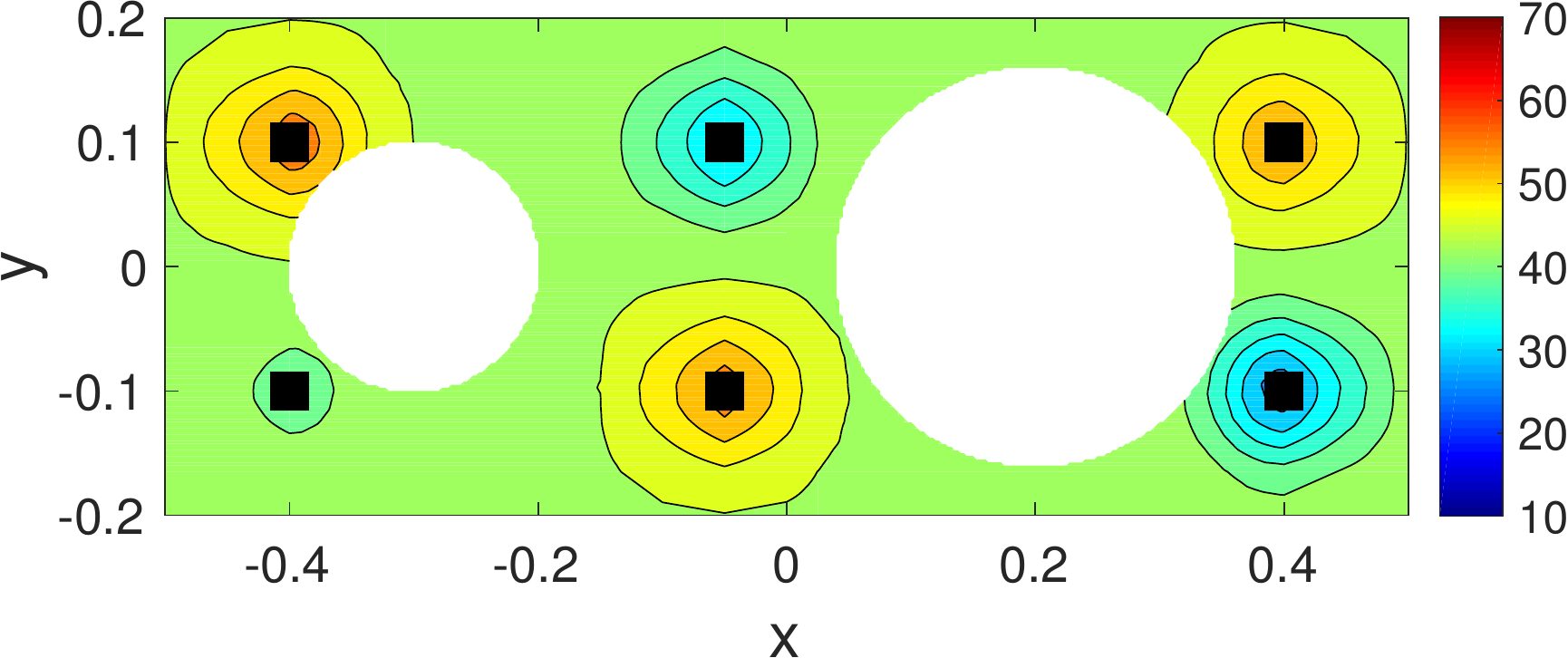}
    \caption{Kriging $\tensor F_r$}
  \end{subfigure}
  \begin{subfigure}[b]{0.32\textwidth}
    \centering%
    \includegraphics[width=\textwidth]{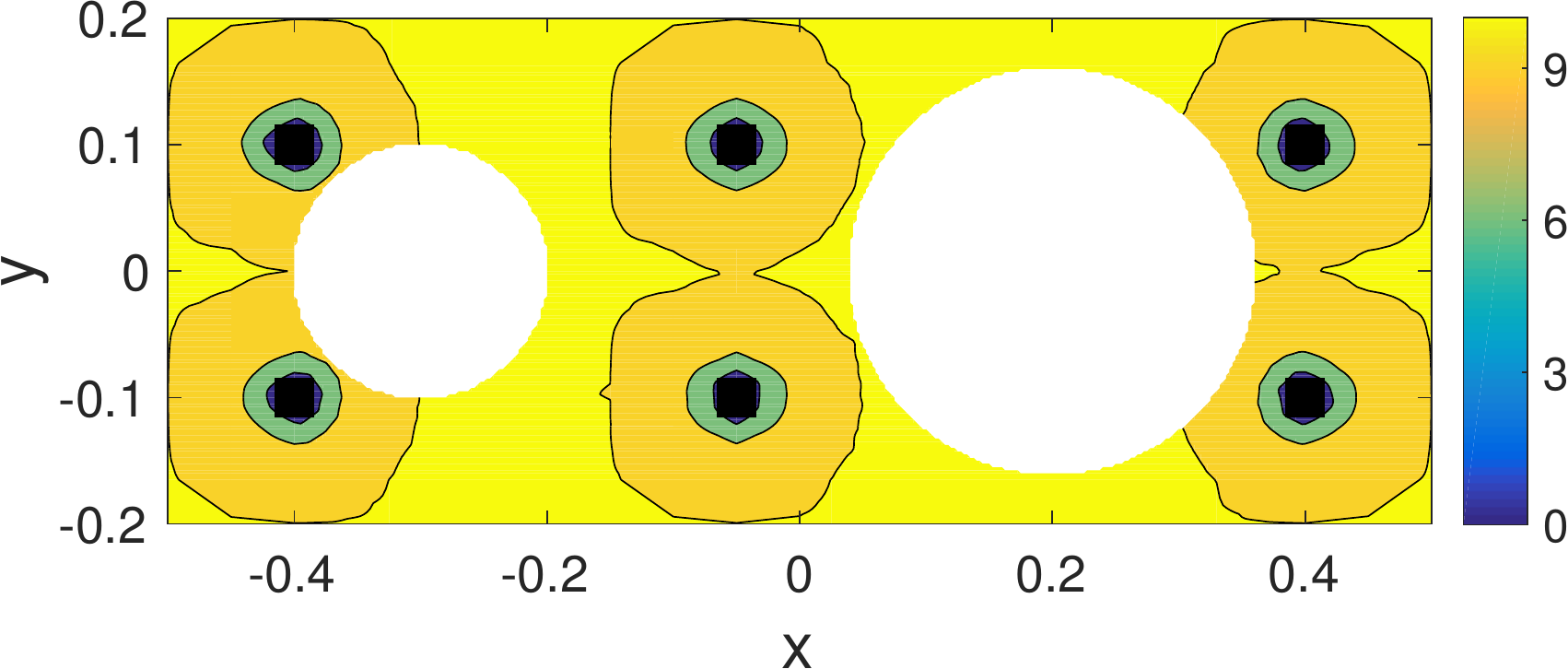}
    \caption{Kriging $\hat s$}
  \end{subfigure}
  \begin{subfigure}[b]{0.32\textwidth}
    \centering%
    \includegraphics[width=\textwidth]{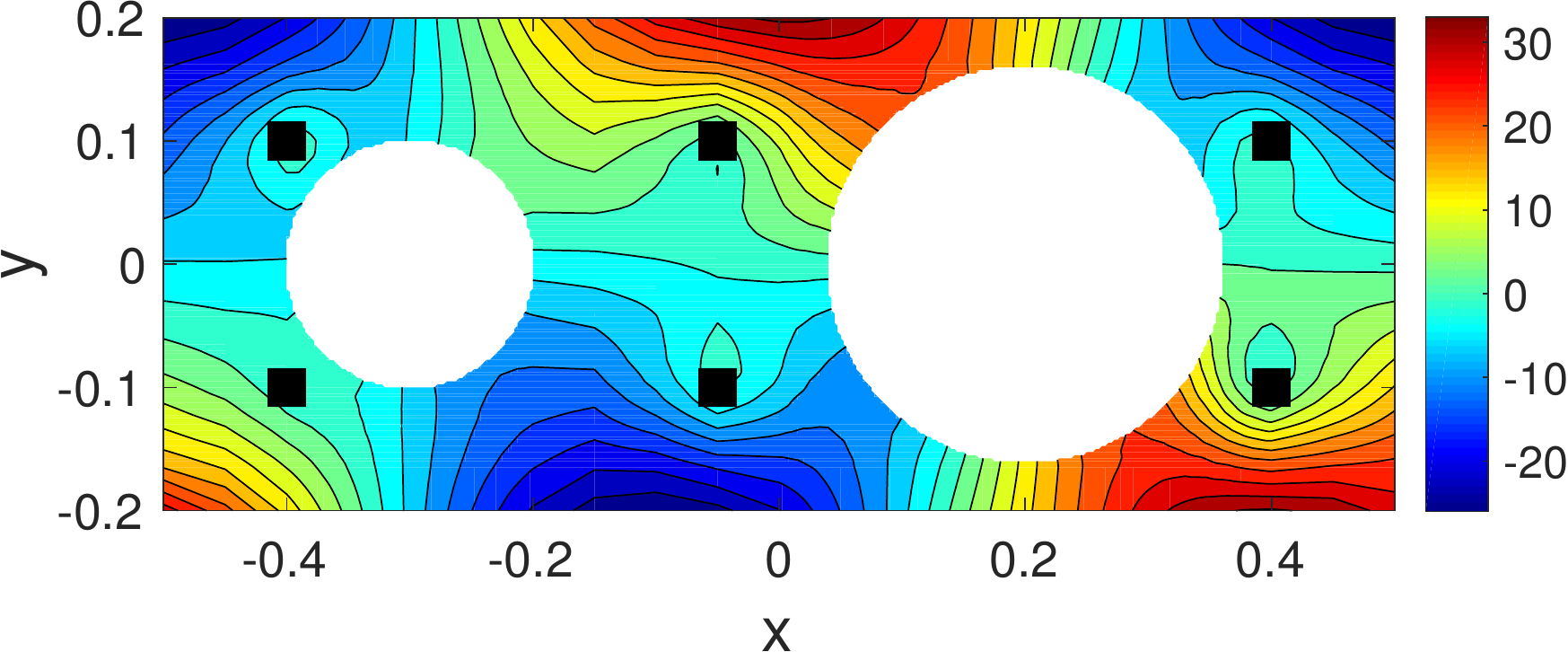}
    \caption{Kriging $\tensor F_r-\tensor F$}
  \end{subfigure}\\
  \begin{subfigure}[b]{0.32\textwidth}
    \centering%
    \includegraphics[width=\textwidth]{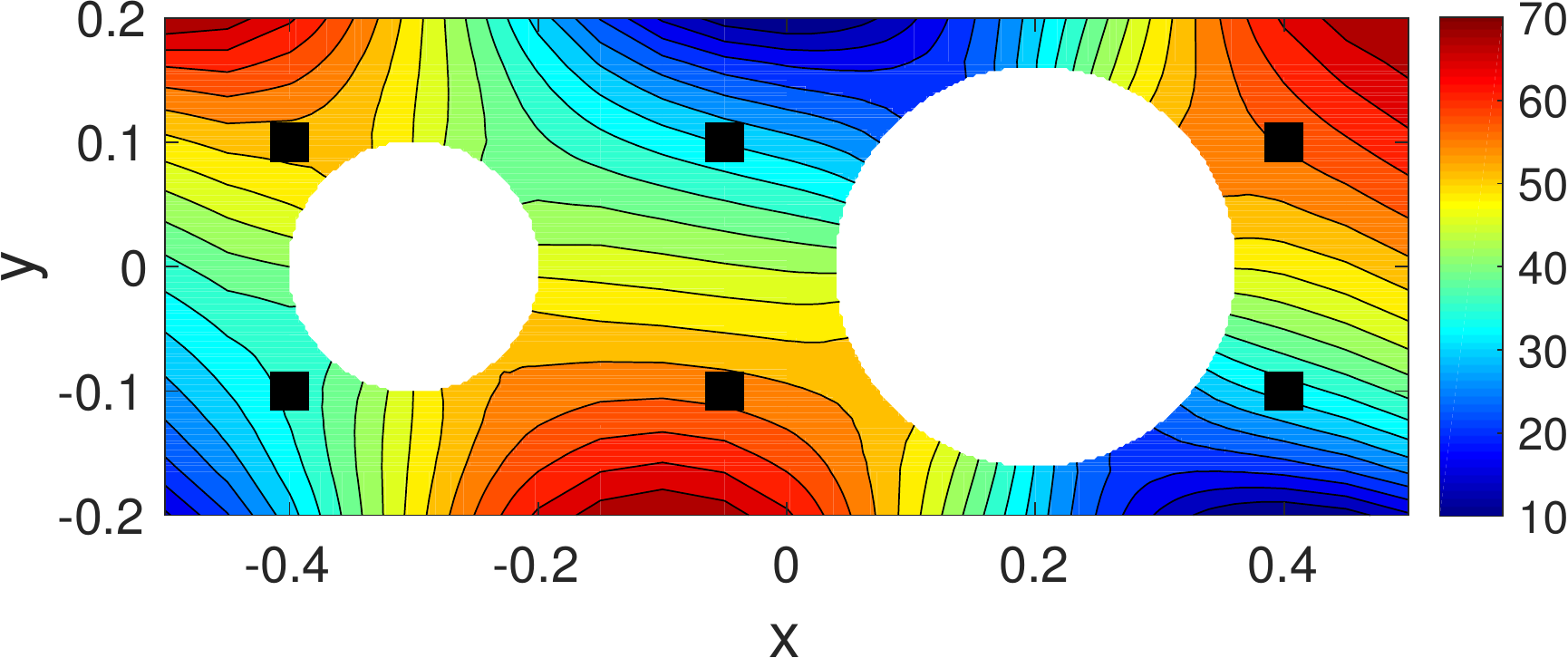}
    \caption{$\mu(\hat{\tensor F}^m)$}
  \end{subfigure}
  \begin{subfigure}[b]{0.32\textwidth}
    \centering%
    \includegraphics[width=\textwidth]{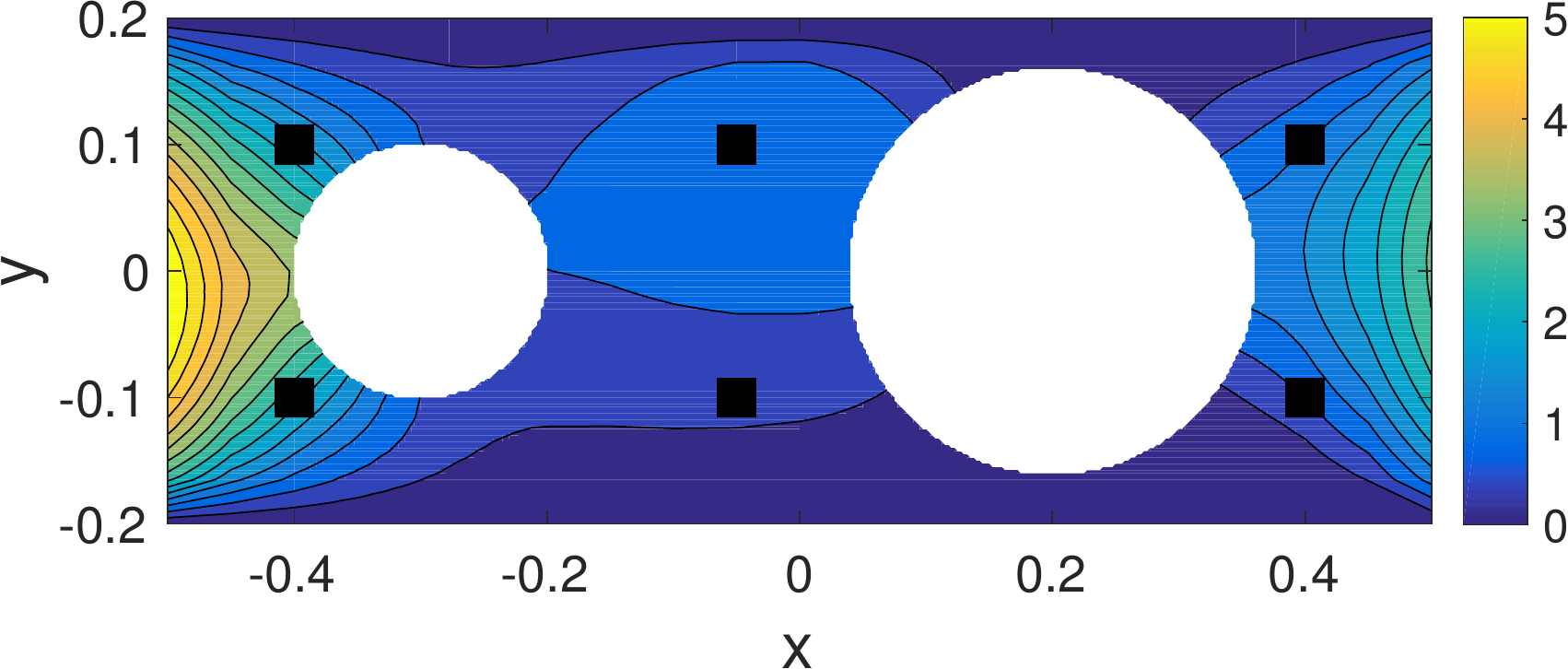}
    \caption{$\sigma(\hat{\tensor F}^m)$}
  \end{subfigure}
  \begin{subfigure}[b]{0.32\textwidth}
    \centering%
    \includegraphics[width=\textwidth]{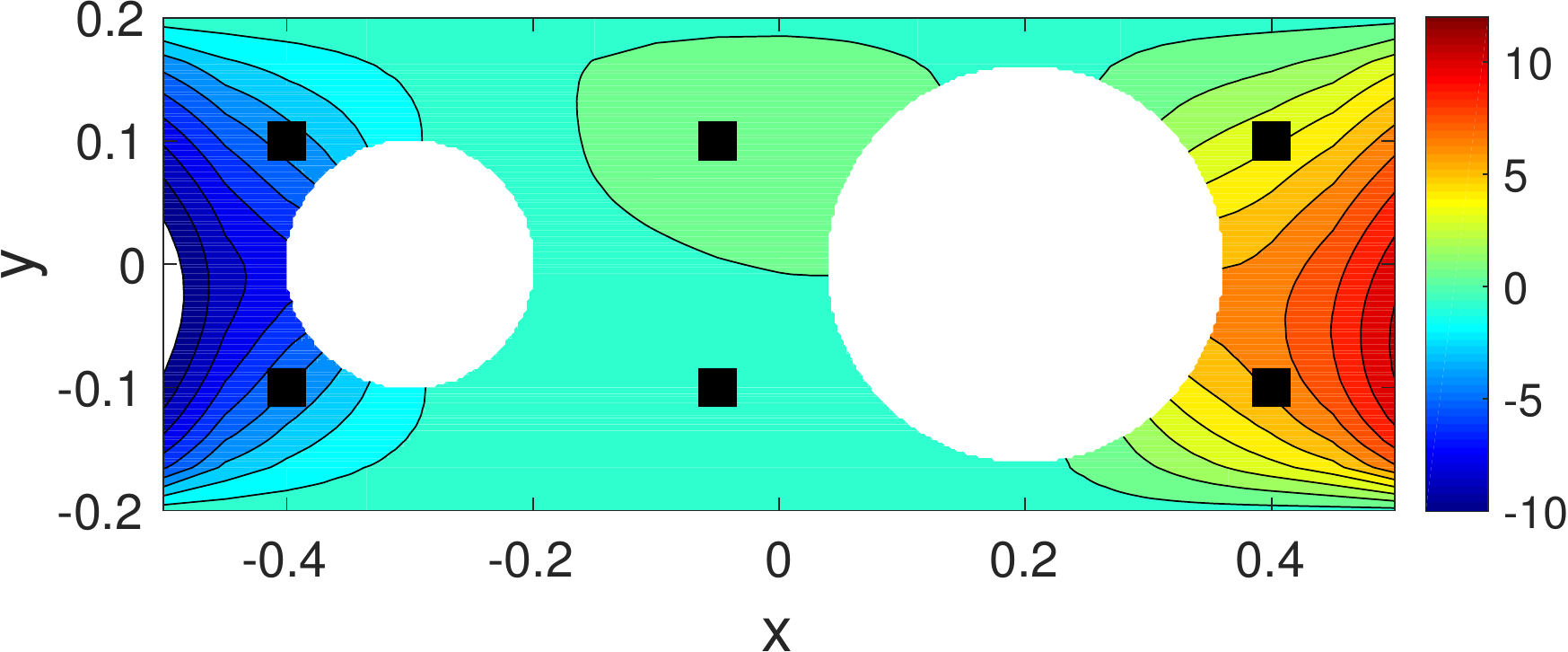}
    \caption{$\mu(\hat{\tensor F}^m)-\tensor F$}
  \end{subfigure}
  \caption{Reconstruction of the steady state solution for heat transfer problem 
           by Kriging (first row) and statistics of the ensemble 
           $\{\hat{\tensor F}^m\}_{m=1}^M$ (second row).}
  \label{fig:heat_krig_mc}
\end{figure}

Next, we use PhIK and CoPhIK to obtain prediction of $T$.
Figure~\ref{fig:heat_phik} shows the results for PhIK in the top row and for 
CoPhIK in the bottom row. CoPhIK outperforms PhIK as it results in smaller
reconstruction errors. The relative errors of the reconstruction are $4.8\%$
for CoPhIK and $7.8\%$ for PhIK. As before, $\hat{s}$ in PhIK is smaller than in
CoPhIK. Both, PhIK and CoPhIK are more accurate and certain than Kriging.
\begin{figure}[!h]
  \centering%
  \begin{subfigure}[b]{0.32\textwidth}
    \centering%
    \includegraphics[width=\textwidth]{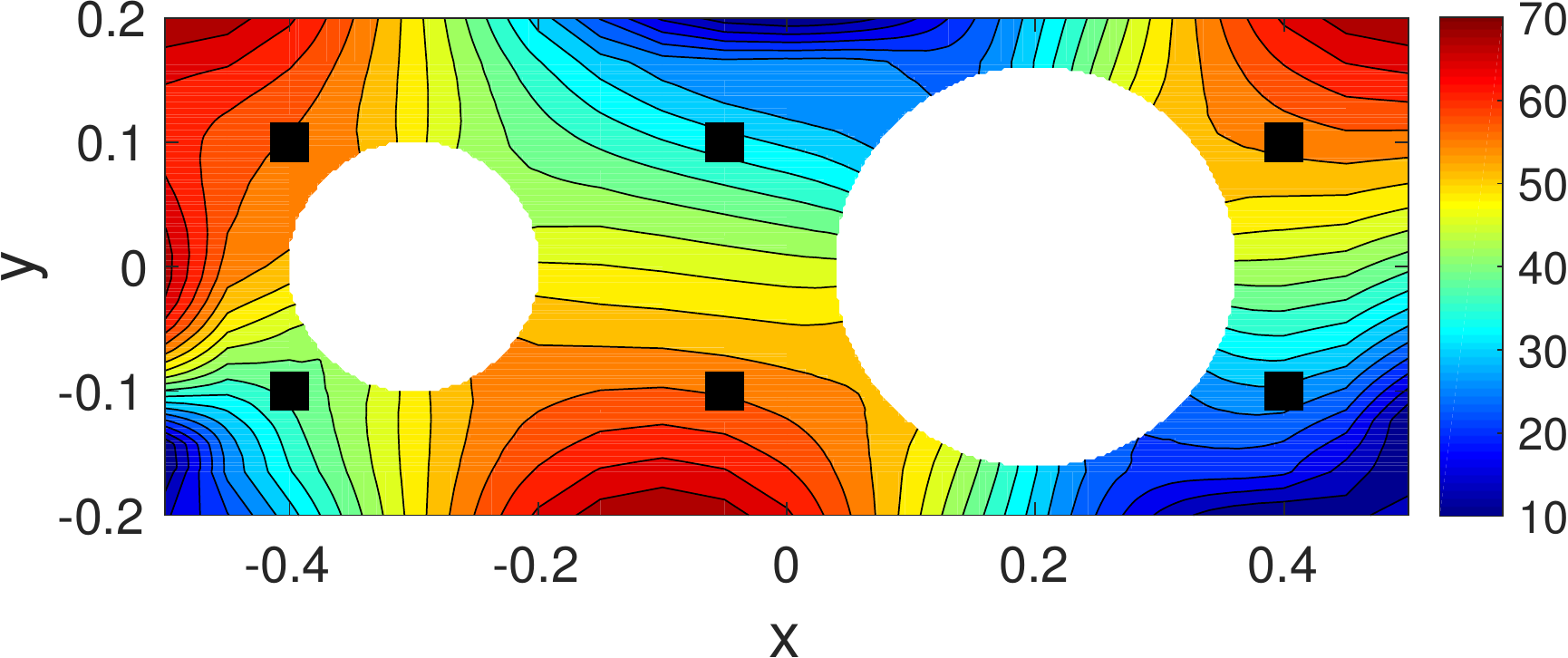}
    \caption{PhIK $\bm F_r$}
  \end{subfigure}
  \begin{subfigure}[b]{0.333\textwidth}
    \centering%
    \includegraphics[width=\textwidth]{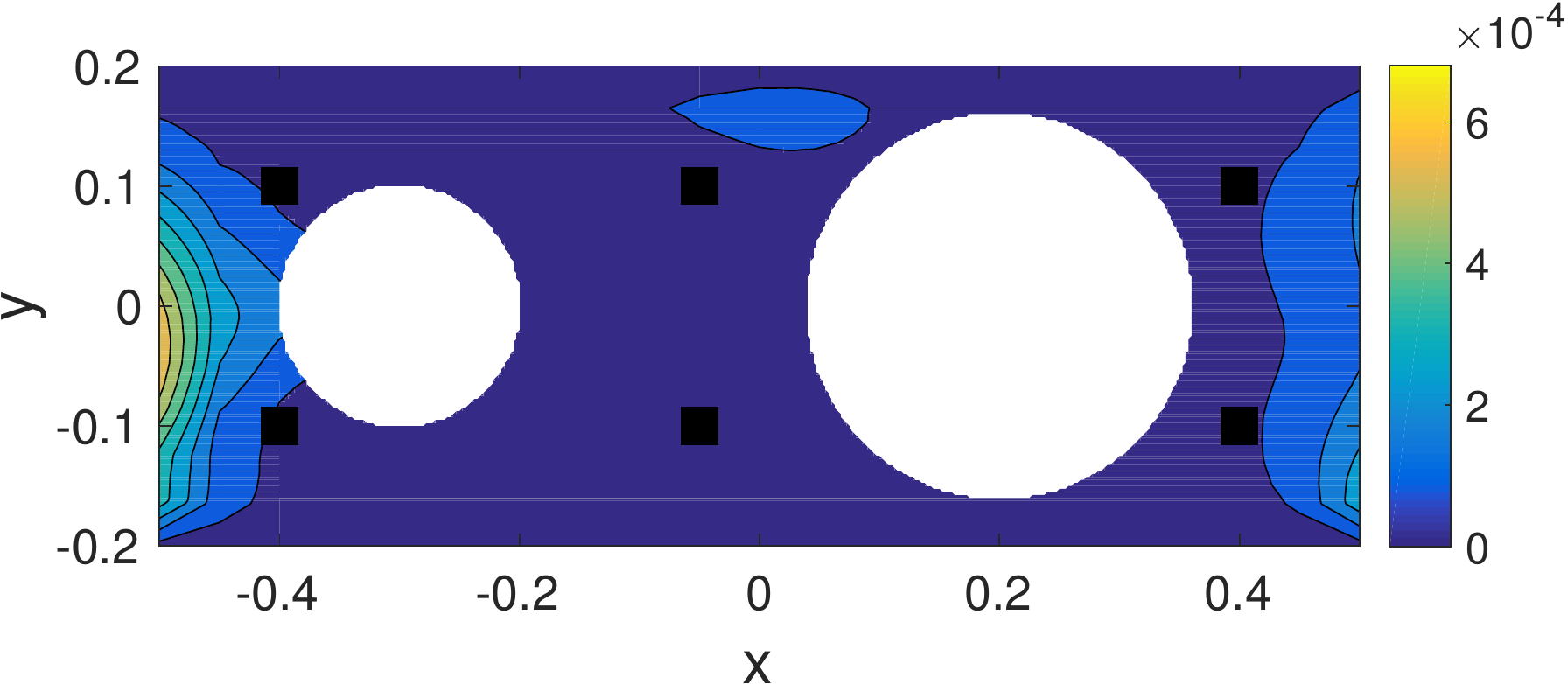}
    \caption{PhIK $\hat s$}
  \end{subfigure}
  \begin{subfigure}[b]{0.32\textwidth}
    \centering%
    \includegraphics[width=\textwidth]{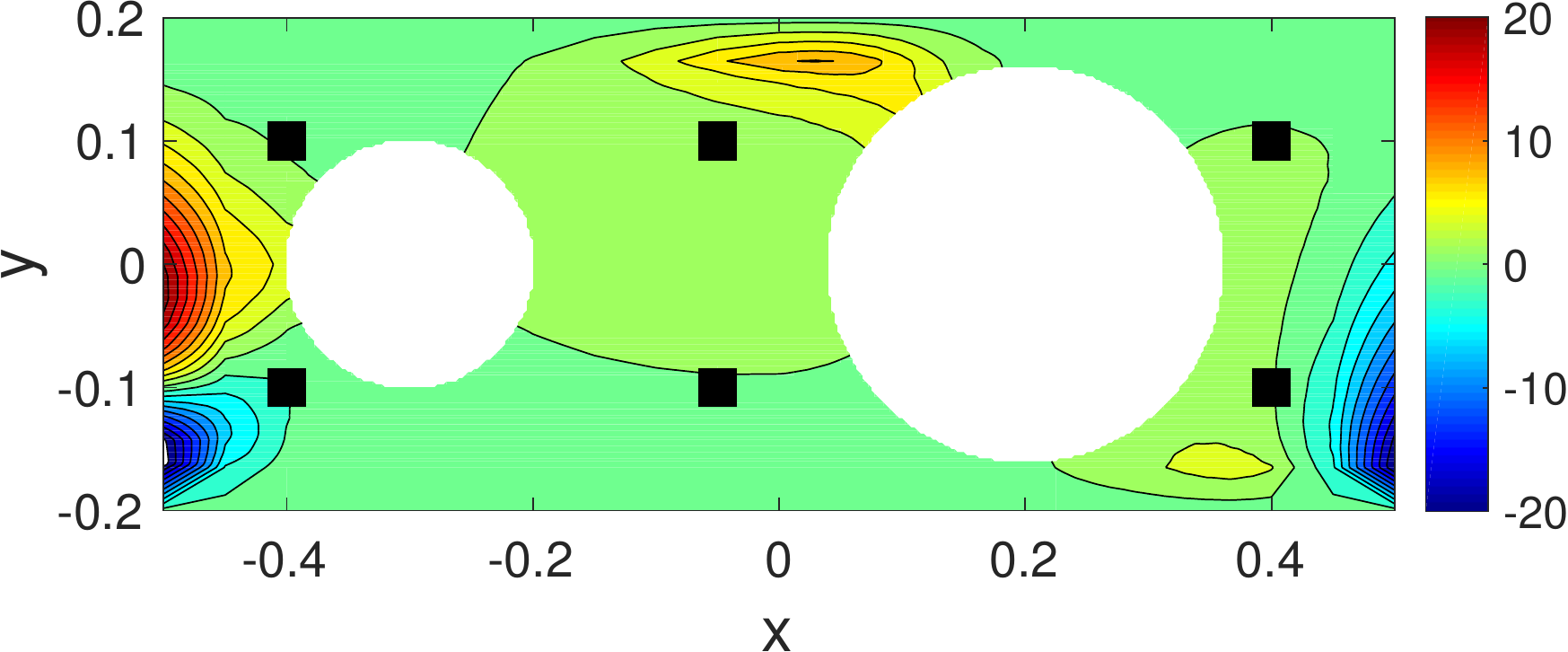}
    \caption{PhIK $\bm F_r-\bm F$}
  \end{subfigure}
  \begin{subfigure}[b]{0.32\textwidth}
    \centering%
    \includegraphics[width=\textwidth]{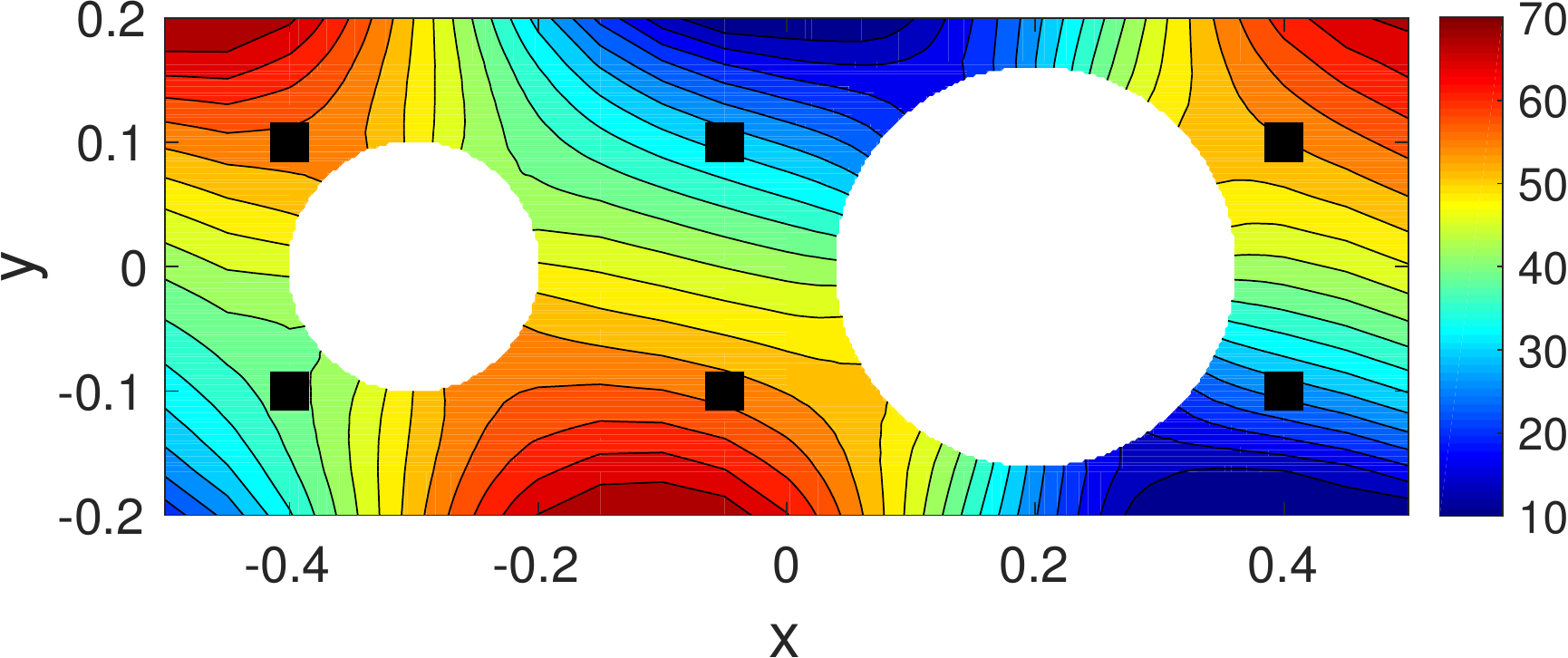}
    \caption{CoPhIK $\bm F_r$}
  \end{subfigure}
  \begin{subfigure}[b]{0.32\textwidth}
    \centering%
    \includegraphics[width=\textwidth]{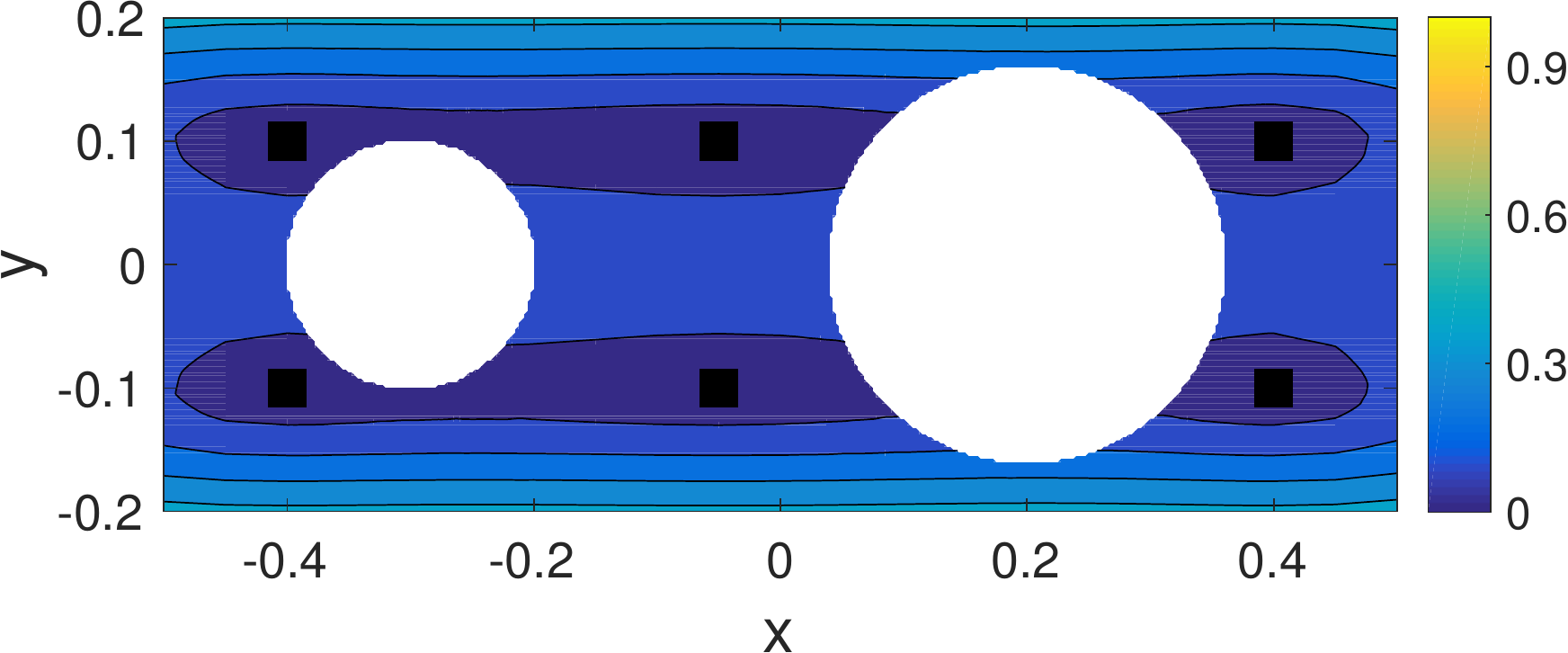}
    \caption{CoPhIK $\hat s$}
  \end{subfigure} 
  \begin{subfigure}[b]{0.32\textwidth}
    \centering%
    \includegraphics[width=\textwidth]{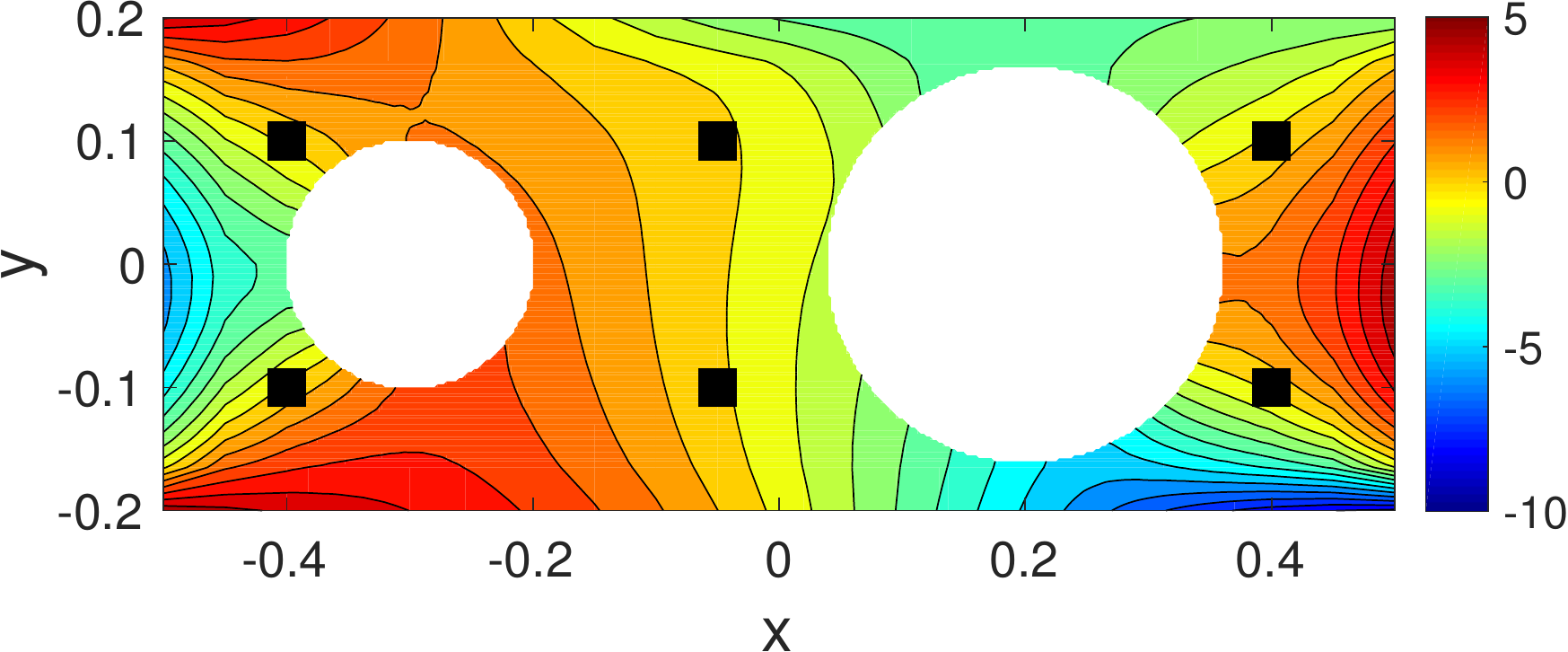}
    \caption{CoPhIK $\bm F_r-\bm F$}
  \end{subfigure}
  \caption{Reconstruction of the steady state solution for heat transfer problem
           by PhIK (first row) and CoPhIK (second row).} 
  \label{fig:heat_phik}
\end{figure}

Finally, we employ active learning to identify additional observation locations.
Figure~\ref{fig:heat_act} displays Kriging, PhIK, and CoPhIK predictions 
obtained with $14$ observations. Eight new observations are marked with stars
and original six observations are denoted with squares. These three methods 
place additional observations at different locations. Kriging suggests new
observation locations mostly along the external boundaries as there are no
original observations on the boundaries and extrapolation in Kriging is the most
uncertain in these subdomains. PhIK identifies additional observation on the
Neumann boundaries $\Gamma_2$ and $\Gamma_4$. CoPhIK identifies new observations
locations on boundaries in a manner similar to Kriging, but also adds an
observation location in the interior of $\mathbb{D}$. 
Figure~\ref{fig:heat_rel_err} presents a quantitative study of the relative
error as a function of the total number of observations for the three methods.
It shows that CoPhIK is more accurate than Kriging and PhIK for a given number 
of observation points. As more observations are available, the errors in Kriging
and CoPhIK decrease while the error of PhIK reaches a constant value after the
first few observations. In this case, when $22$ observations are used (six
original ones plus $14$ added ones through active learning), the relative errors
are $3\%$, $4\%$, and less than $1\%$ for Kriging, PhIK and CoPhIK, respectively.
We also used the modified PhIK method to model the data and found that the 
relative error in this method is slightly smaller than in PhIK. However, the
modified PhIK reconstruction does not satisfy the Dirichlet boundary condition 
on $\Gamma_1$ and $\Gamma_3$. Therefore, we do not report the modified PhIK 
results.
\begin{figure}[!h]
  \centering%
  \begin{subfigure}[b]{0.32\textwidth}
    \centering%
    \includegraphics[width=\textwidth]{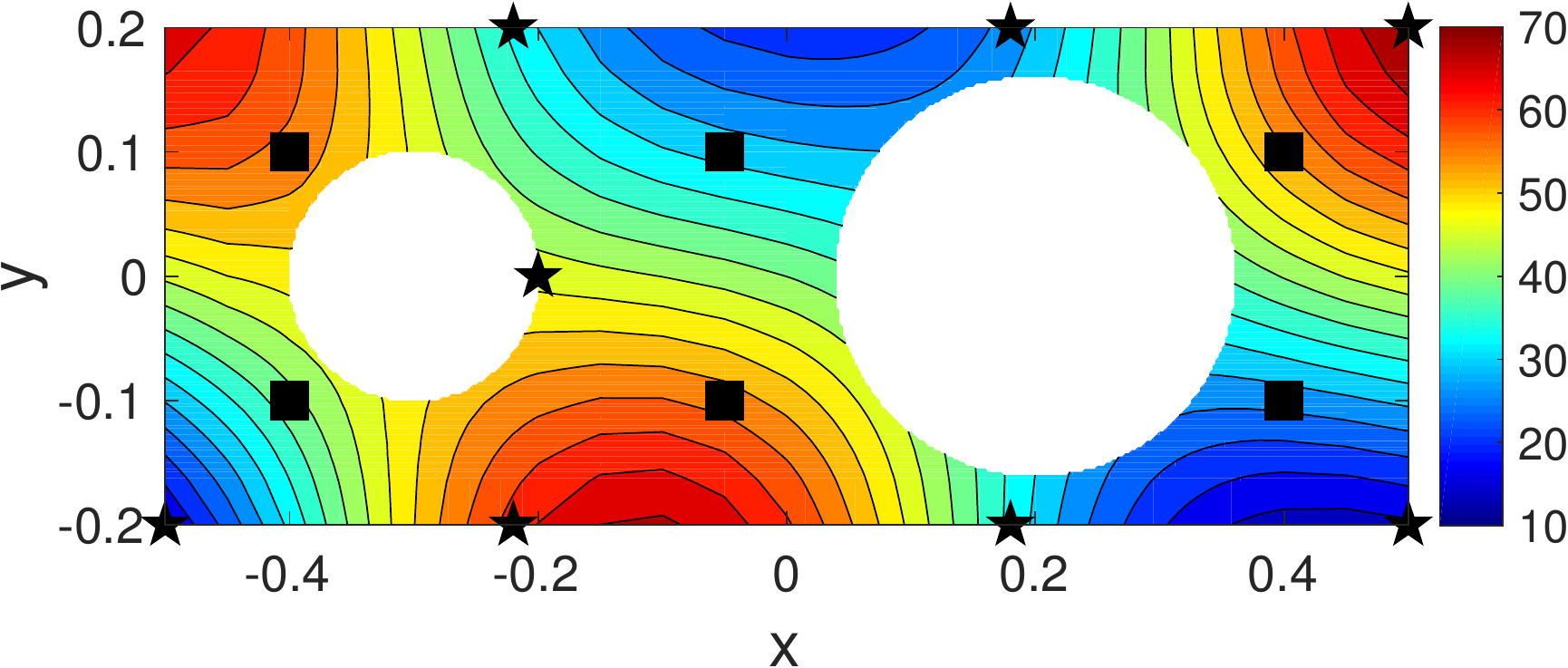}
    \caption{Kriging $\tensor F_r$}
  \end{subfigure}
  \begin{subfigure}[b]{0.32\textwidth}
    \centering%
    \includegraphics[width=\textwidth]{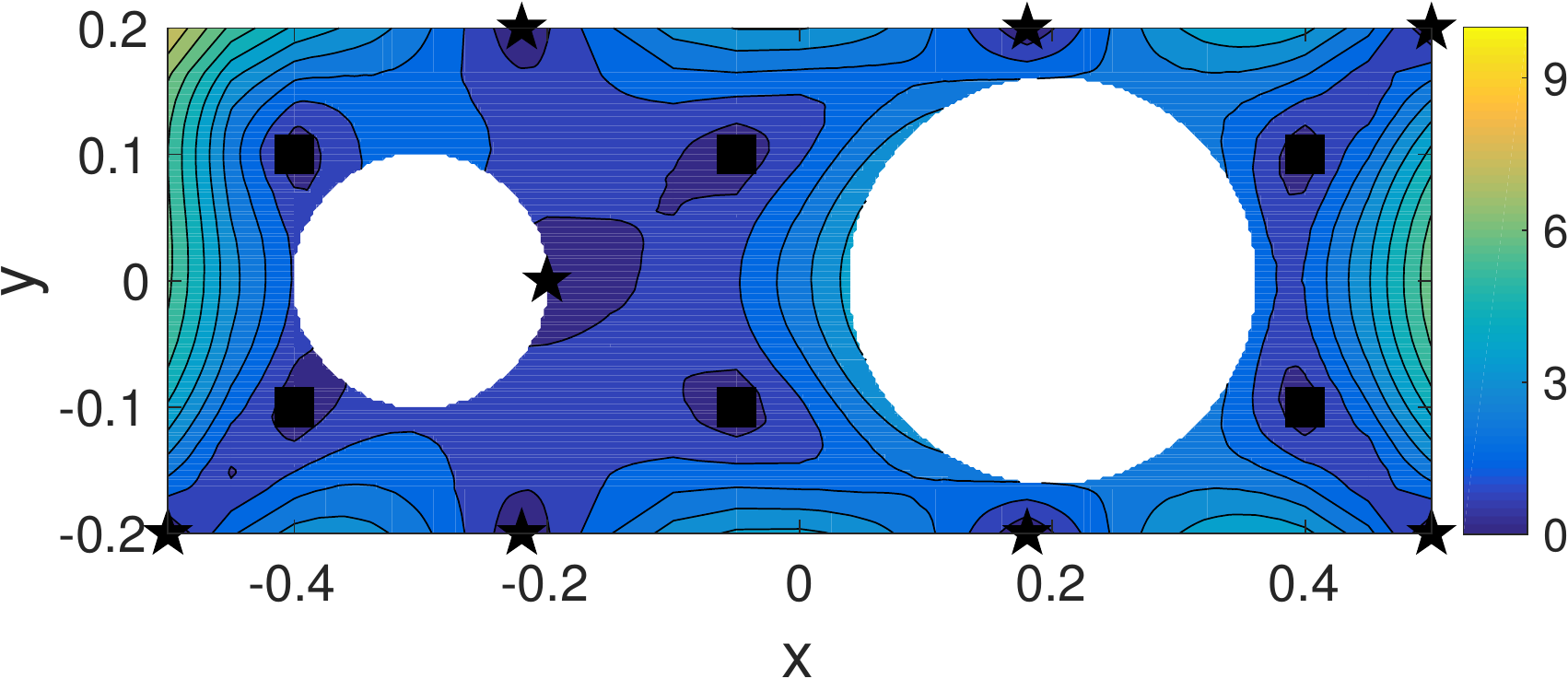}
    \caption{Kriging $\hat s$}
  \end{subfigure}
  \begin{subfigure}[b]{0.32\textwidth}
    \centering%
    \includegraphics[width=\textwidth]{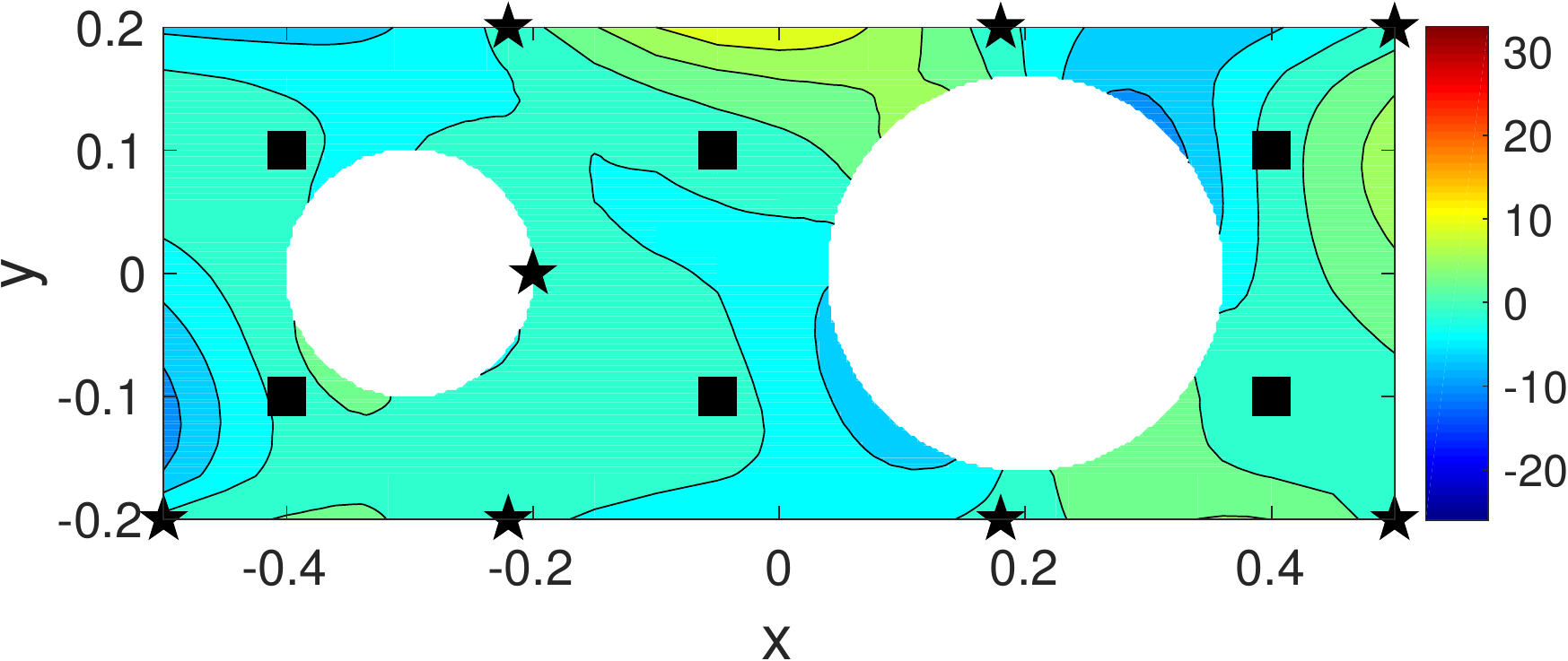}
    \caption{Kriging $\tensor F_r-\tensor F$}
  \end{subfigure}
  \begin{subfigure}[b]{0.32\textwidth}
    \centering%
    \includegraphics[width=\textwidth]{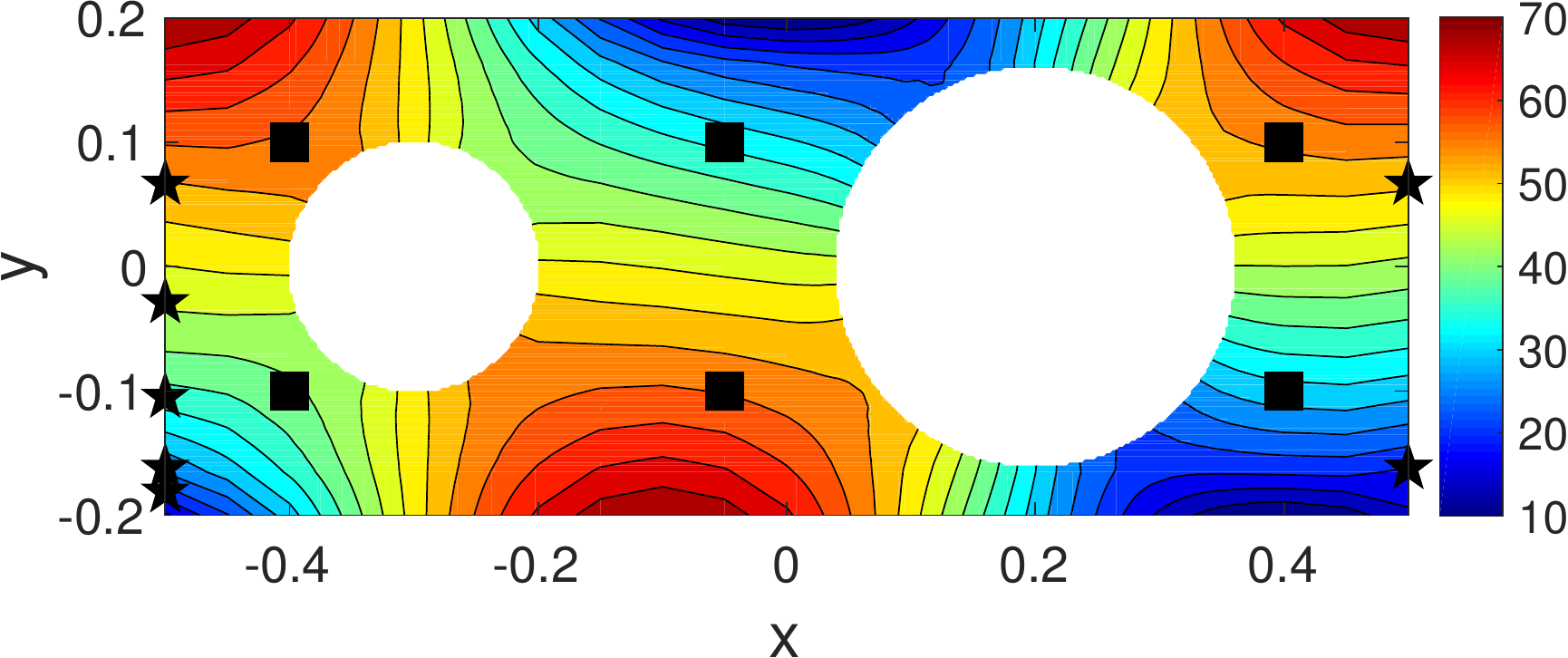}
    \caption{PhIK $\tensor F_r$}
  \end{subfigure}~
  \begin{subfigure}[b]{0.325\textwidth}
    \centering%
    \includegraphics[width=\textwidth]{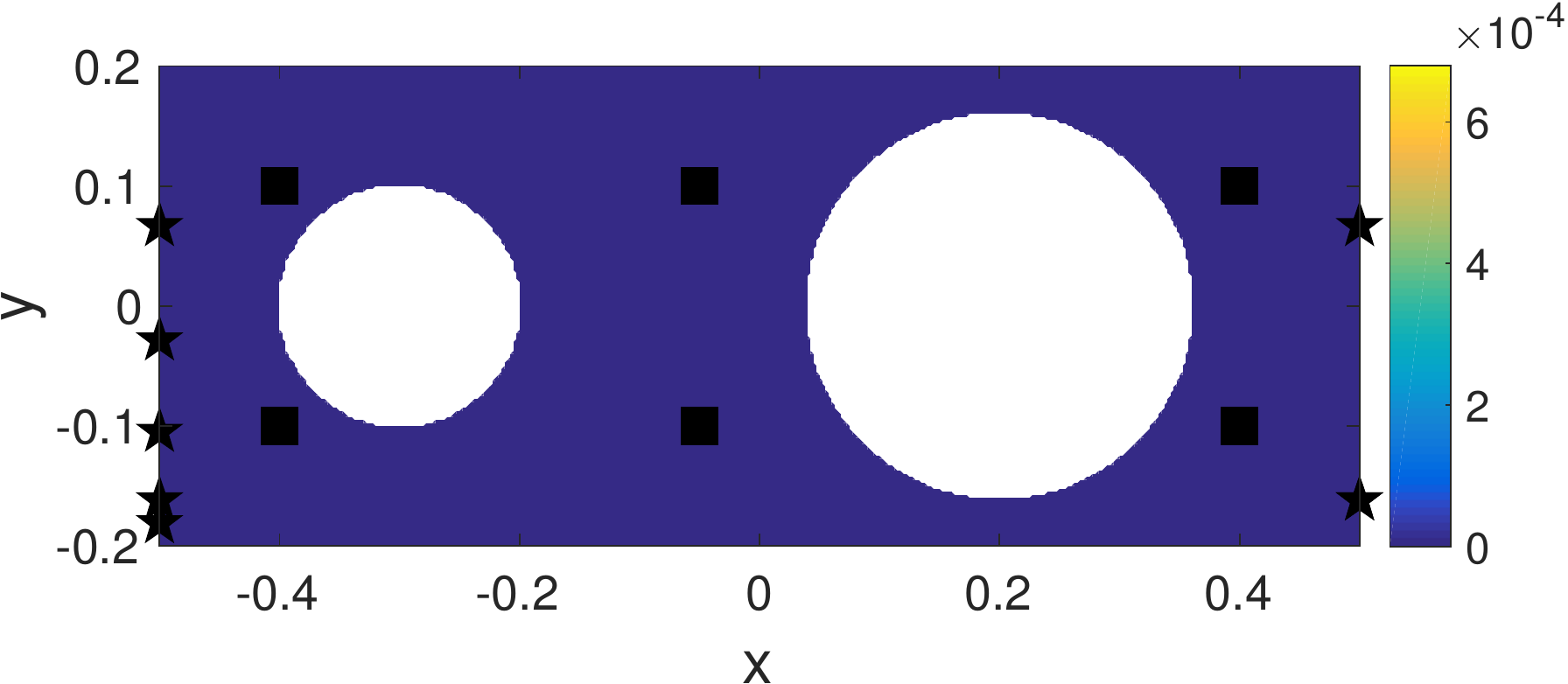}
    \caption{PhIK $\hat s$}
  \end{subfigure}
  \begin{subfigure}[b]{0.32\textwidth}
    \centering%
    \includegraphics[width=\textwidth]{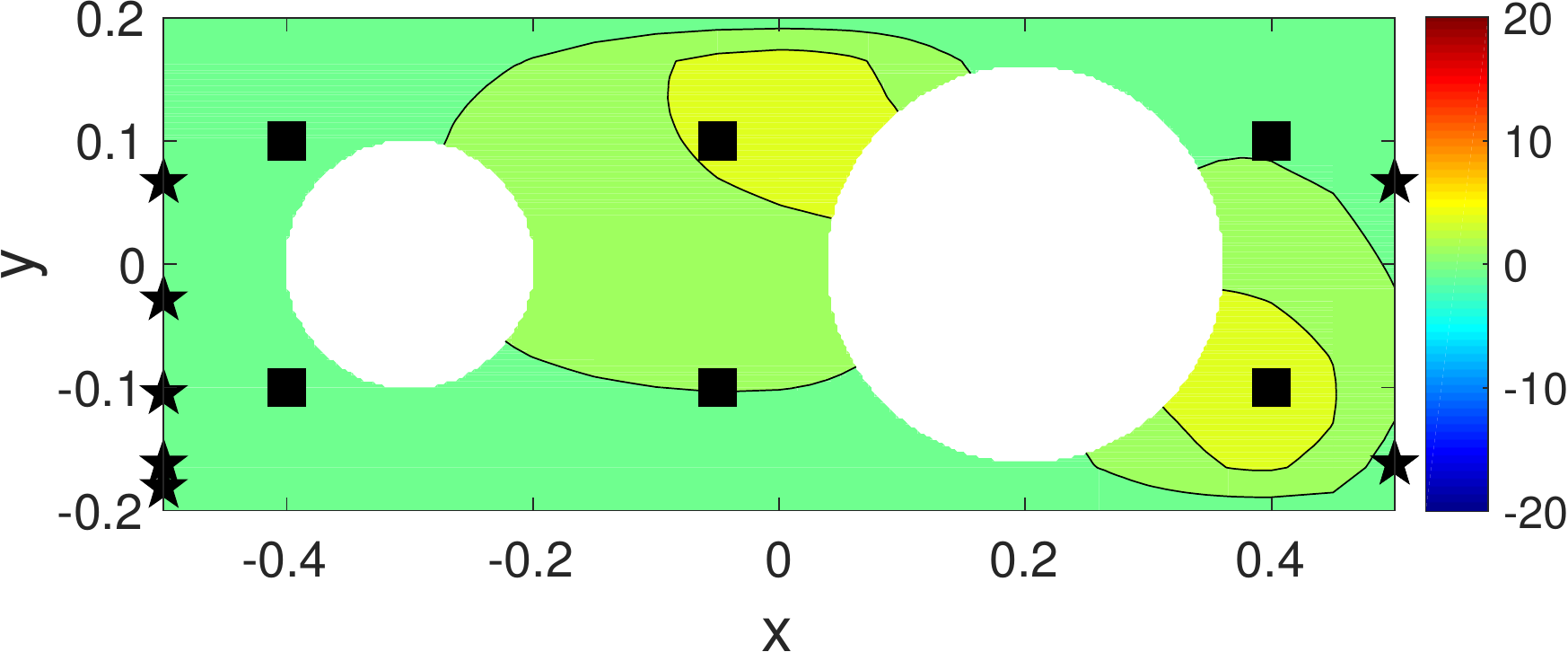}
    \caption{PhIK $\tensor F_r-\tensor F$}
  \end{subfigure}
  \begin{subfigure}[b]{0.32\textwidth}
    \centering%
    \includegraphics[width=\textwidth]{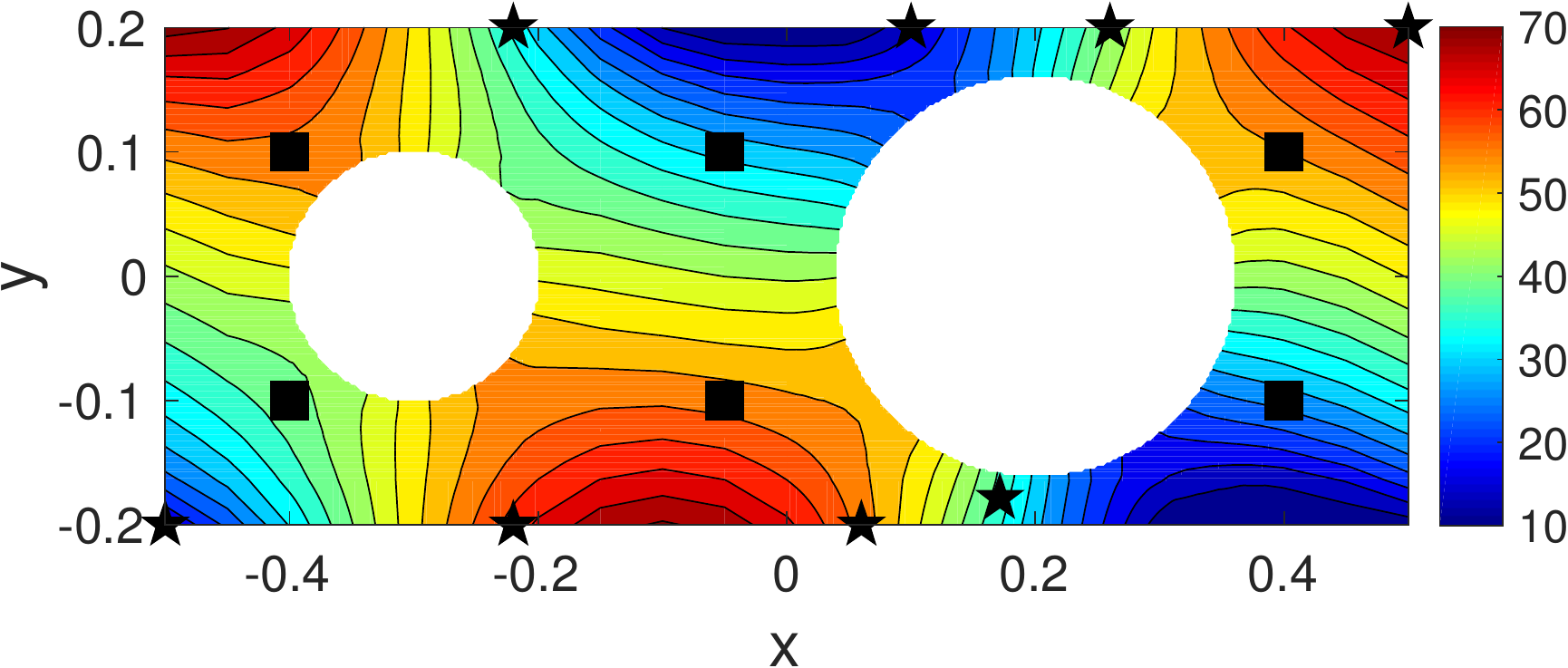}
    \caption{CoPhIK $\tensor F_r$}
  \end{subfigure}\quad
  \begin{subfigure}[b]{0.32\textwidth}
    \centering%
    \includegraphics[width=\textwidth]{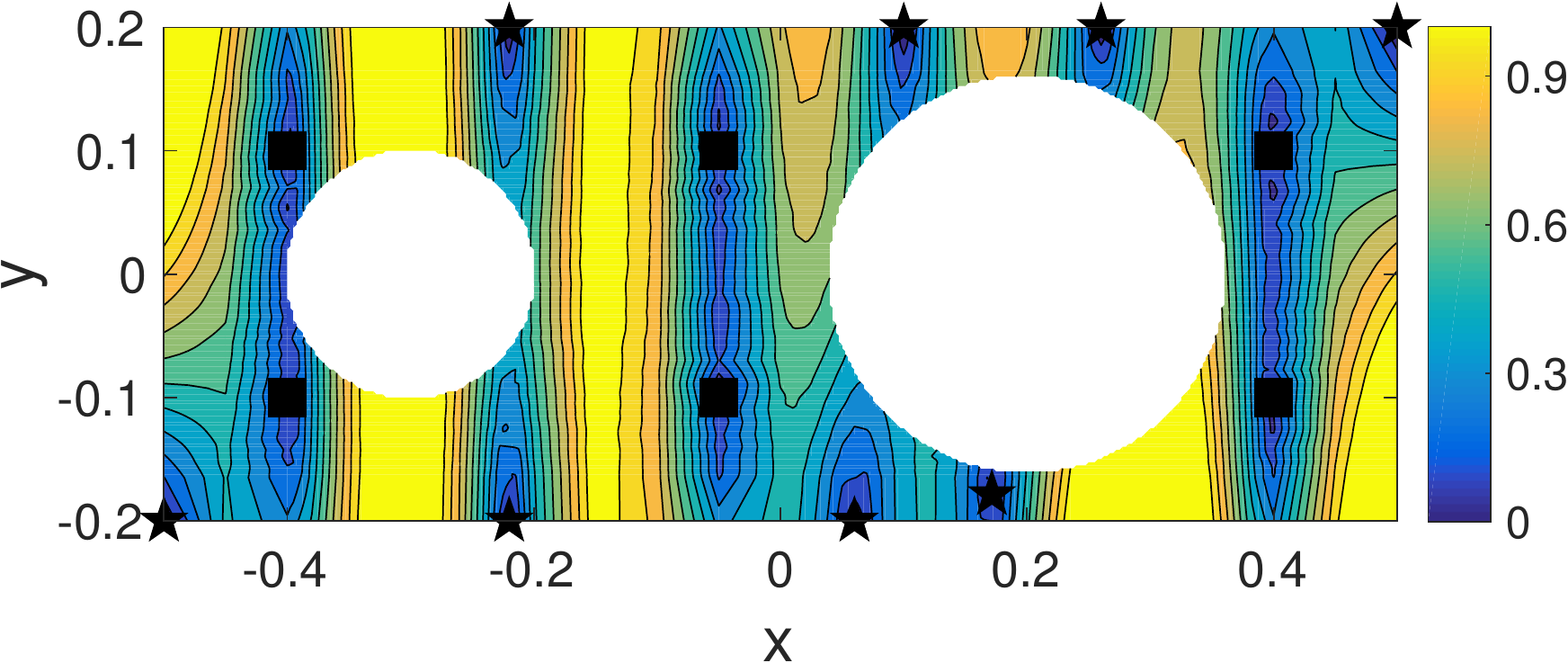}
    \caption{CoPhIK $\hat s$}
  \end{subfigure} 
  \begin{subfigure}[b]{0.32\textwidth}
    \centering%
    \includegraphics[width=\textwidth]{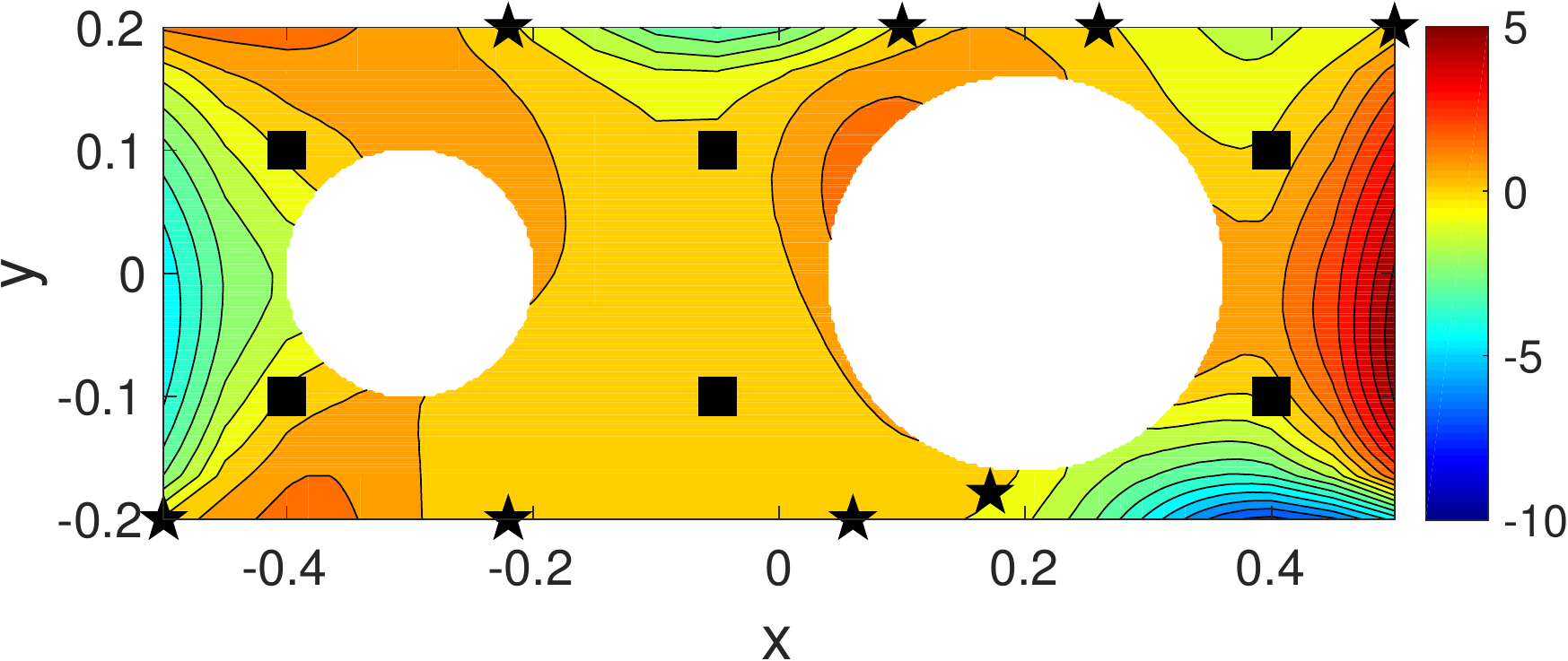}
    \caption{CoPhIK $\tensor F_r-\tensor F$}
  \end{subfigure}
  \caption{Reconstruction of the heat transfer via active learning. Black
  squares are the locations of the original six observation. Stars are newly
  added eight observations based on the actively learning algorithm.}
  \label{fig:heat_act}
\end{figure}
\begin{figure}[!h]
  \centering%
  \includegraphics[width=0.4\textwidth]{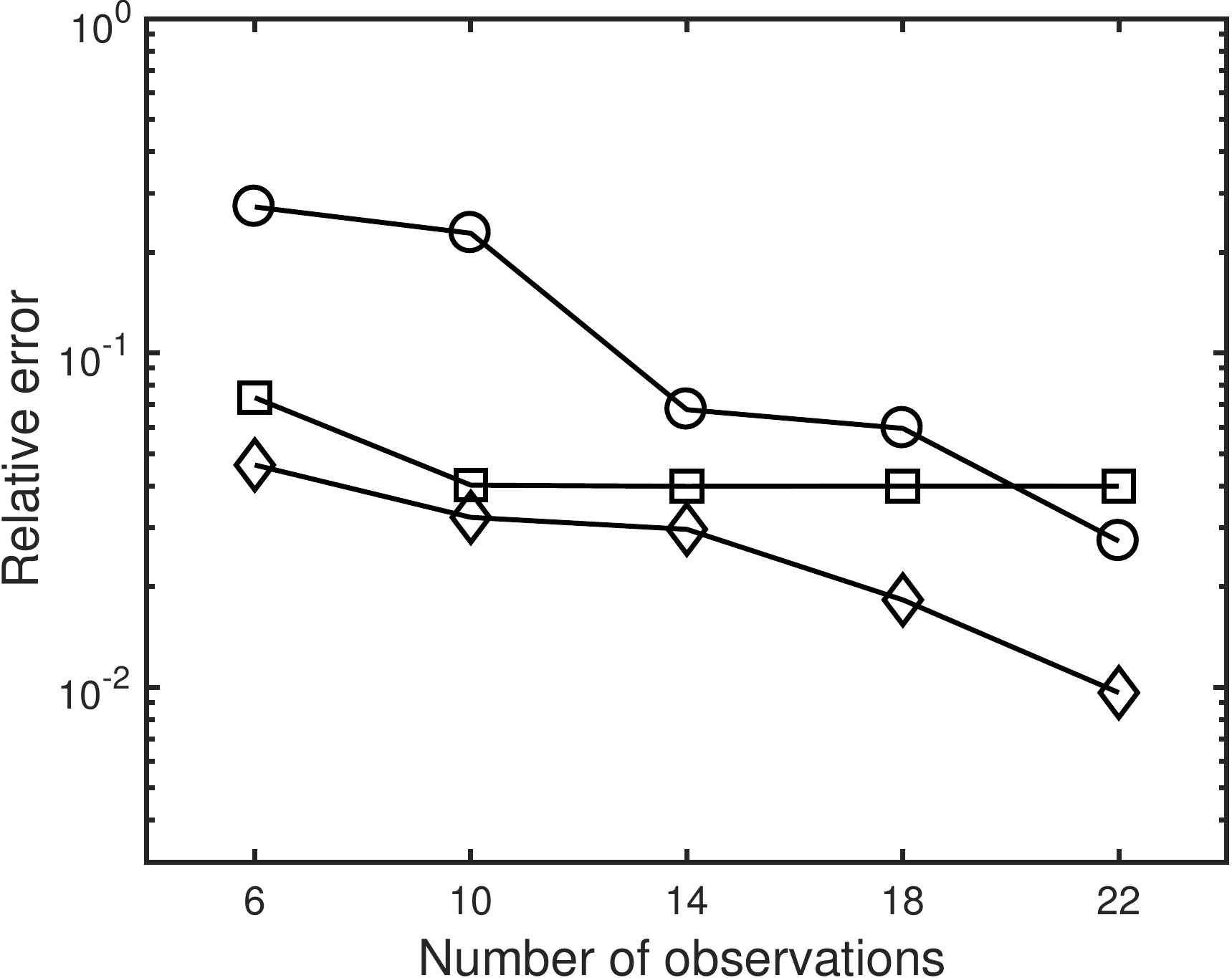}
  \caption{Relative error of reconstructed steady state solution for heat 
  transfer problem $\Vert\tensor F_r-\tensor F\Vert_F/\Vert\tensor F\Vert_F$
  using Kriging (``$\circ$"), PhIK (``$\square$") and CoPhIK (``$\diamond$") 
  with different numbers of total observations via active learning.}
  \label{fig:heat_rel_err}
\end{figure}

\subsection{Solute transport in heterogeneous porous media}

In this example, we consider conservative transport in a steady-state velocity
field in heterogeneous porous media. Let $C(\bm x,t)$ ($\bm x=(x,y)^{\trans}$)
denote the solute concentration. We assume that measurements of $C(\bm x,t)$
are available at several locations at different times. The flow and transport
processes can be described by conservation laws. In particular, the flow is
described by the Darcy flow equation
\begin{equation}
  \label{eq:head}
  \begin{cases}
    \nabla\cdot(K\nabla h)=0,
    & \bm x  \in  \mathbb{D}, \\
    \dfrac{\partial h}{\partial \bm n} = 0, & y=0 ~\text{or}~ y=L_2, \\
    h = H_1 & x=0,\\
    h = H_2 & x=L_1,
  \end{cases}
\end{equation}
where $h(\bm x;\omega)$ is the hydraulic head, 
$\mathbb{D}=[0,L_1] \times [0, L_2]$, $L_{1}=256$, $L_2=128$ is the simulation
domain, $H_1$ and $H_2$ are known boundary head values, and $K(\bm x)$ is the
unknown hydraulic conductivity field. This field is modeled as a random 
log-normally distributed field $K(\bm x;\omega) = \exp(Z(\bm x; \omega))$, where
$Z(\bm x; \omega)$ is a second-order stationary GP with known exponential 
covariance function 
$\cov\{Z(\bm x),Z(\bm x')\}=\sigma^2_Z\exp(-|\bm x - \bm x'|/l_z)$, variance 
$\sigma^2_Z=2$, and correlation length $l_z=5$. The solute transport is governed
by the advection-dispersion equation~\cite{emmanuel2005mixing, lin2009efficient}:
\begin{equation}
  \label{eq:con}
  \begin{cases}
    \dfrac{\partial C}{\partial t} + \nabla\cdot(\bm v C) = \nabla\cdot\left[\left(\dfrac{D_w}{\tau}+ \boldsymbol{\alpha} \Vert\bm v\Vert_2\right)\nabla C\right],
    & \bm x~\text{in}~\mathbb{D}, \\
    C = Q \delta(\bm x-\bm x^*), & t=0, \\
    \dfrac{\partial C}{\partial\bm n} = 0, & y=0
    ~\text{or}~ y=L_2~\text{or}~x=L_1, \\
    C = 0, & x = 0.
  \end{cases}
\end{equation}
Here, $C(\bm x,t;\omega)$ is the solute concentration defined on 
$\mathbb{D}\times [0, T]\times\Omega$; $\bm v$ is the fluid velocity given by
$\bm v(\bm x;\omega)=-K(\bm x;\omega)\nabla h(\bm x;\omega)/\phi$, where $\phi$
is the porosity; $D_w$ is the diffusion coefficient; $\tau$ is the tortuosity;
and $\boldsymbol{\alpha}$ is the dispersivity tensor with the diagonal 
components $\alpha_L$ and $\alpha_T$. In the present work, the transport 
parameters are set to $\phi=0.317$, $\tau=\phi^{1/3}$, $D_w=2.5 \times 10^{-5}$,
$\alpha_L=5$, and $\alpha_T=0.5$. Finally, the solute is instantaneously 
injected at $\bm x^*=(50, 64)$ at $t=0$ with the intensity $Q=1$. 

We are interested in reconstructing the concentration field at $T = 192$ (eight
days) from sparse observations collected at $t = T$. We generate $M$
realizations of $Z(\bm x)$ using the SGSIM (sequential Gaussian simulation)
code~\cite{deutsch1992gslib}, and solve the governing equations for each
realization of $K(\bm x)=\exp(Z(\bm x))$ using the finite volume code STOMP
(subsurface transport over multiple phases)~\cite{white2006stomp} with grid 
size $1\times 1$. The ground truth $C_e(\bm x, T)$ is randomly selected from 
the $M$ realizations of $C(\bm x, T)$; this $C_e$ is excluded from the ensembles
used in PhIK or CoPhIK. Figure~\ref{fig:adv_truth} shows $C_e(\bm x, T)$ with 
sparse observation locations marked by black squares. We assume that six 
uniformly spaced observations are available near the boundary of the simulation
domain, and nine randomly placed observations are available in the interior of 
the domain. As Kriging is known to be less accurate for extrapolation, it is a
common practice to collect data near the boundary of the domain of interest
(e.g., \cite{dai2017geostatistics}).
\begin{figure}[h]
  \centering%
  \includegraphics[width=0.5\textwidth]{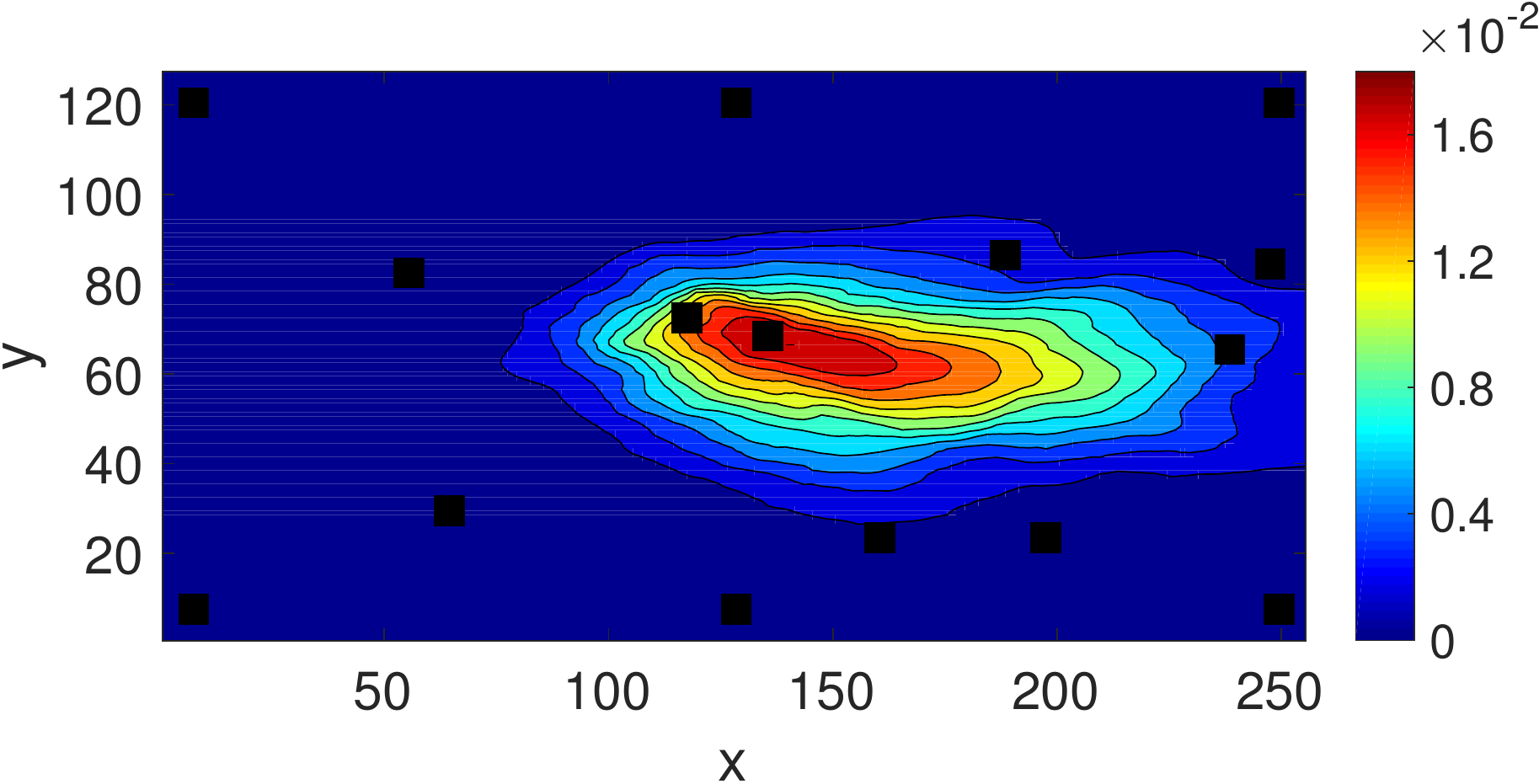}
  \caption{Ground truth of the solute concentration when $T=192$ and observation
  locations (black squares).}
  \label{fig:adv_truth}
\end{figure}

The first row in Figure~\ref{fig:adv_krig_mc} shows the reconstruction results
obtained using Kriging with Gaussian kernel. The relative error is nearly 
$50\%$ because a stationary kernel is not capable of resolving the reference
field accurately. The second row displays the ensemble mean, standard deviation
and the difference of mean and the ground truth, estimated from the stochastic
flow and advection-dispersion equation without conditioning on data. Solving 
these stochastic equations with standard MC requires a large number of 
simulations. To reduce computational cost of MC, we use MLMC (described in 
Appendix A) to compute mean and variance of $C$ as in~\cite{YangTT18}. Later,
we also use MLMC to compute the covariance of $C$. In MLMC, we use $M_H=10$
high-resolution simulations (grid size $1\times 1$) and $M_L=150$ 
low-resolution simulations (grid size $4\times 4$). The MLMC mean is almost
symmetric which does not reflect the real pattern of the ground truth, which
is not symmetric). The relative error of using ensemble mean to estimate the 
ground truth is $30\%$. This figure shows that Kriging prediction is less 
accurate and have larger predictive uncertainty except for the neighborhoods of
the observations. 
\begin{figure}[!h]
  \centering%
  \begin{subfigure}[b]{0.32\textwidth}
    \centering%
    \includegraphics[width=\textwidth]{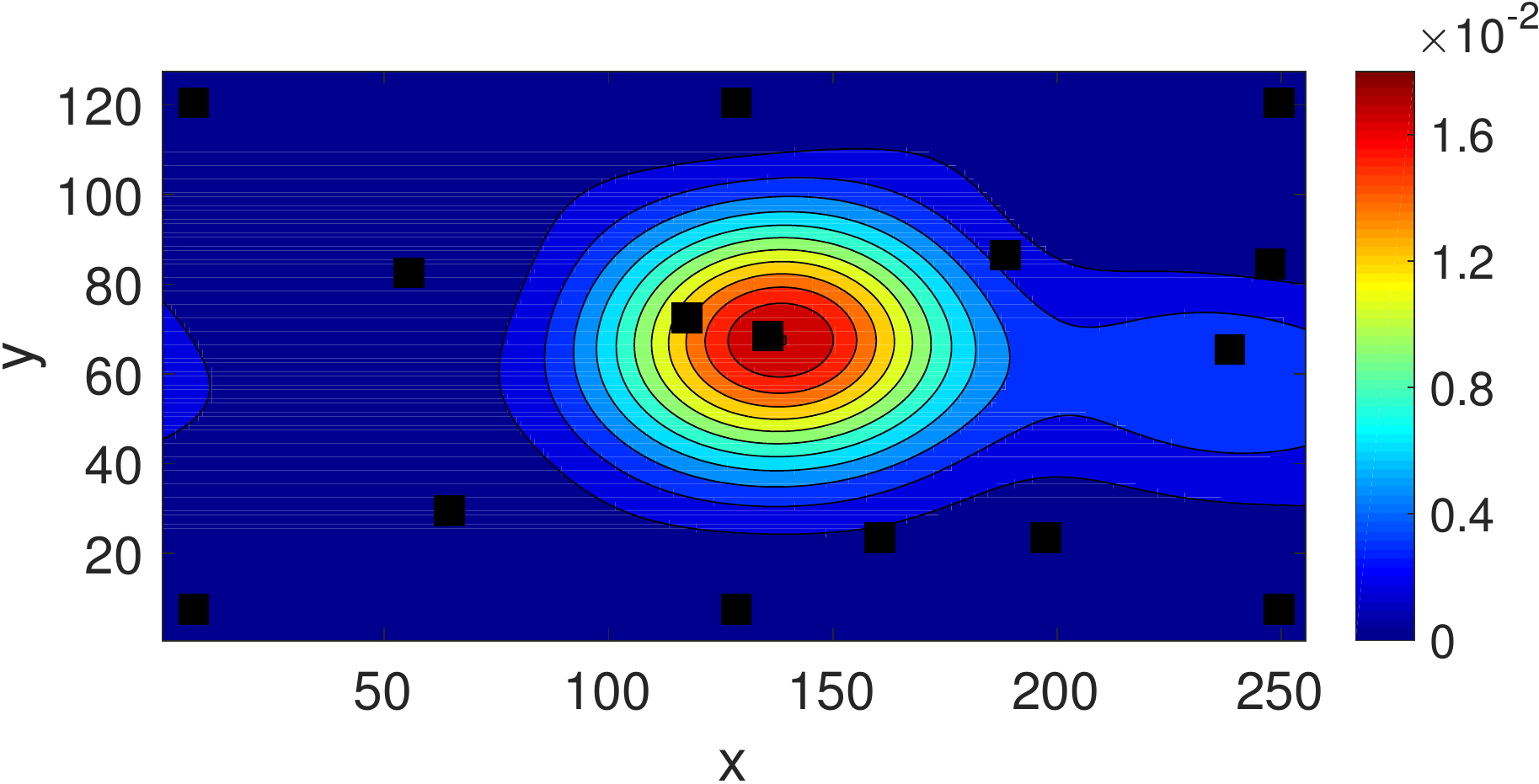}
    \caption{Kriging $\tensor F_r$}
  \end{subfigure}
  \begin{subfigure}[b]{0.32\textwidth}
    \centering%
    \includegraphics[width=\textwidth]{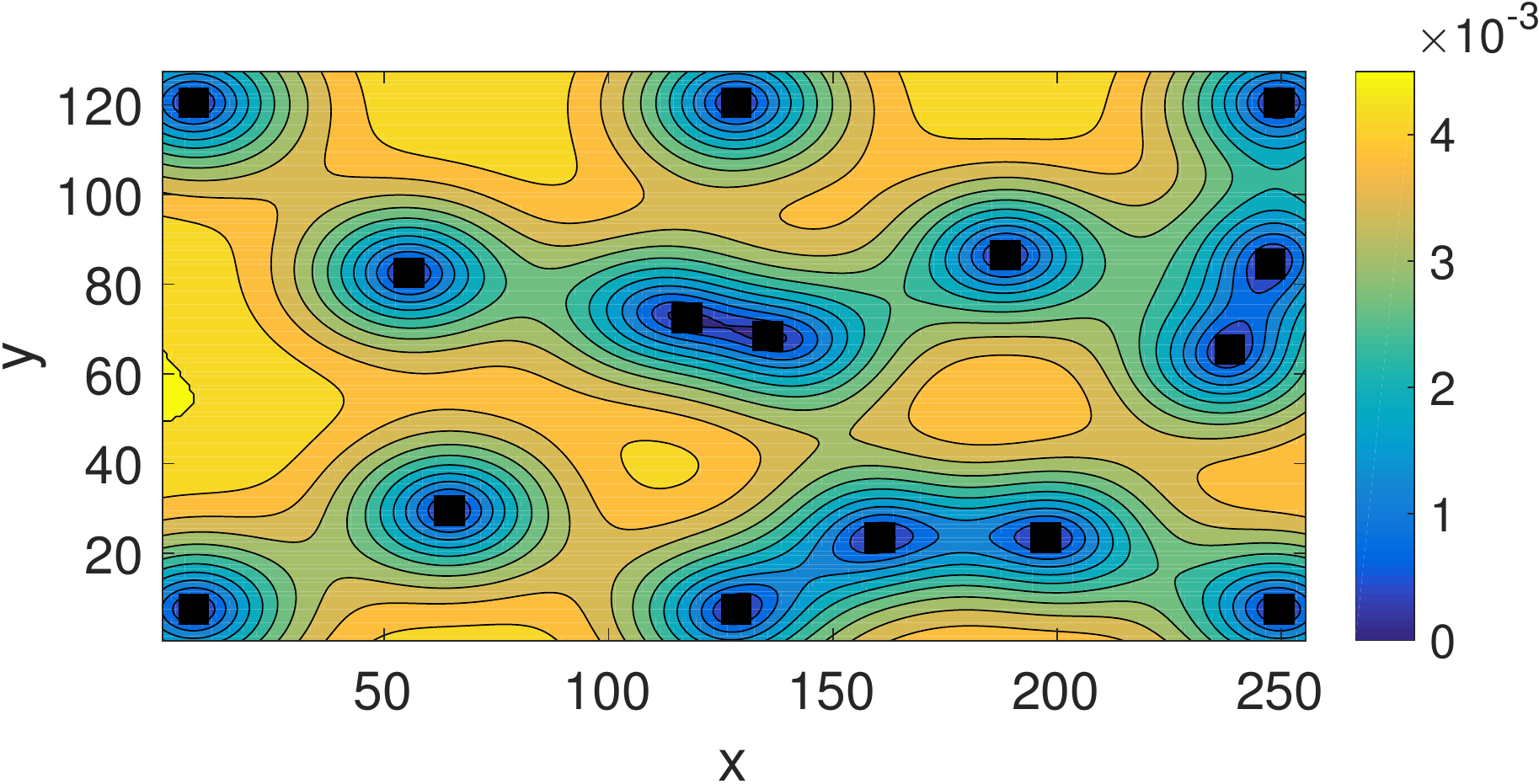}
    \caption{Kriging $\hat s$}
  \end{subfigure}
  \begin{subfigure}[b]{0.32\textwidth}
    \centering%
    \includegraphics[width=\textwidth]{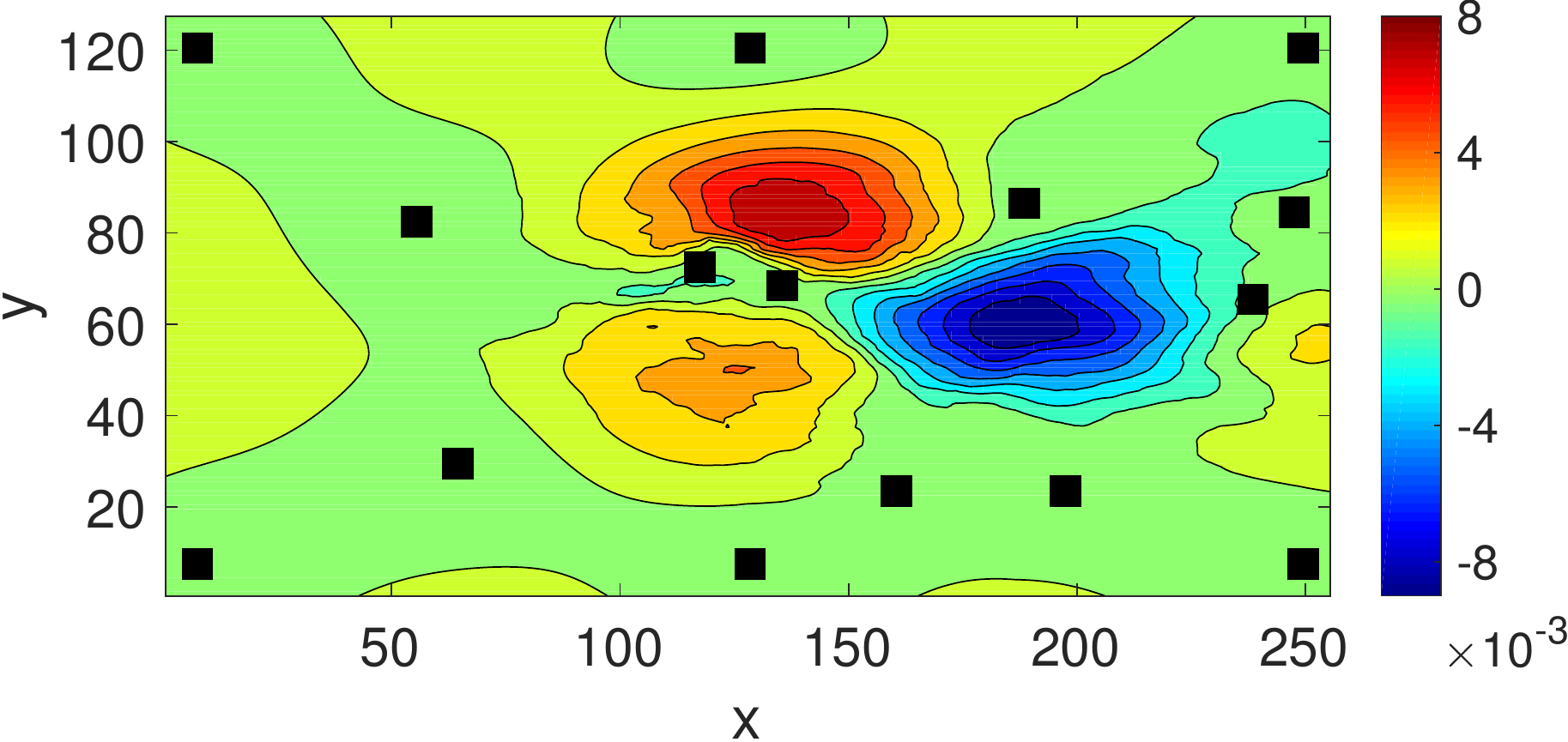}
    \caption{Kriging $\tensor F_r-\tensor F$}
  \end{subfigure}
  \begin{subfigure}[b]{0.32\textwidth}
    \centering%
    \includegraphics[width=\textwidth]{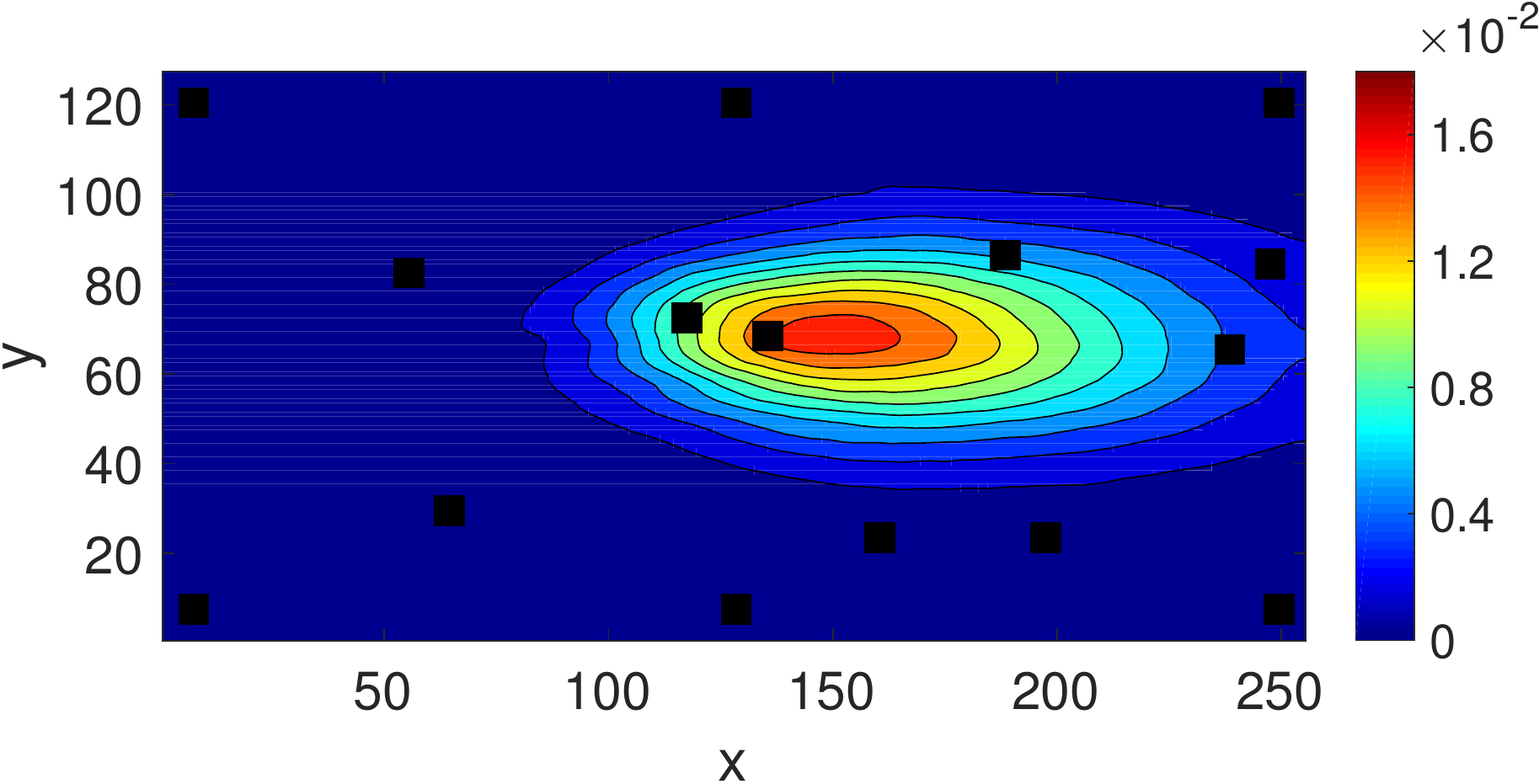}
    \caption{MLMC mean}
  \end{subfigure}
  \begin{subfigure}[b]{0.32\textwidth}
    \centering%
    \includegraphics[width=\textwidth]{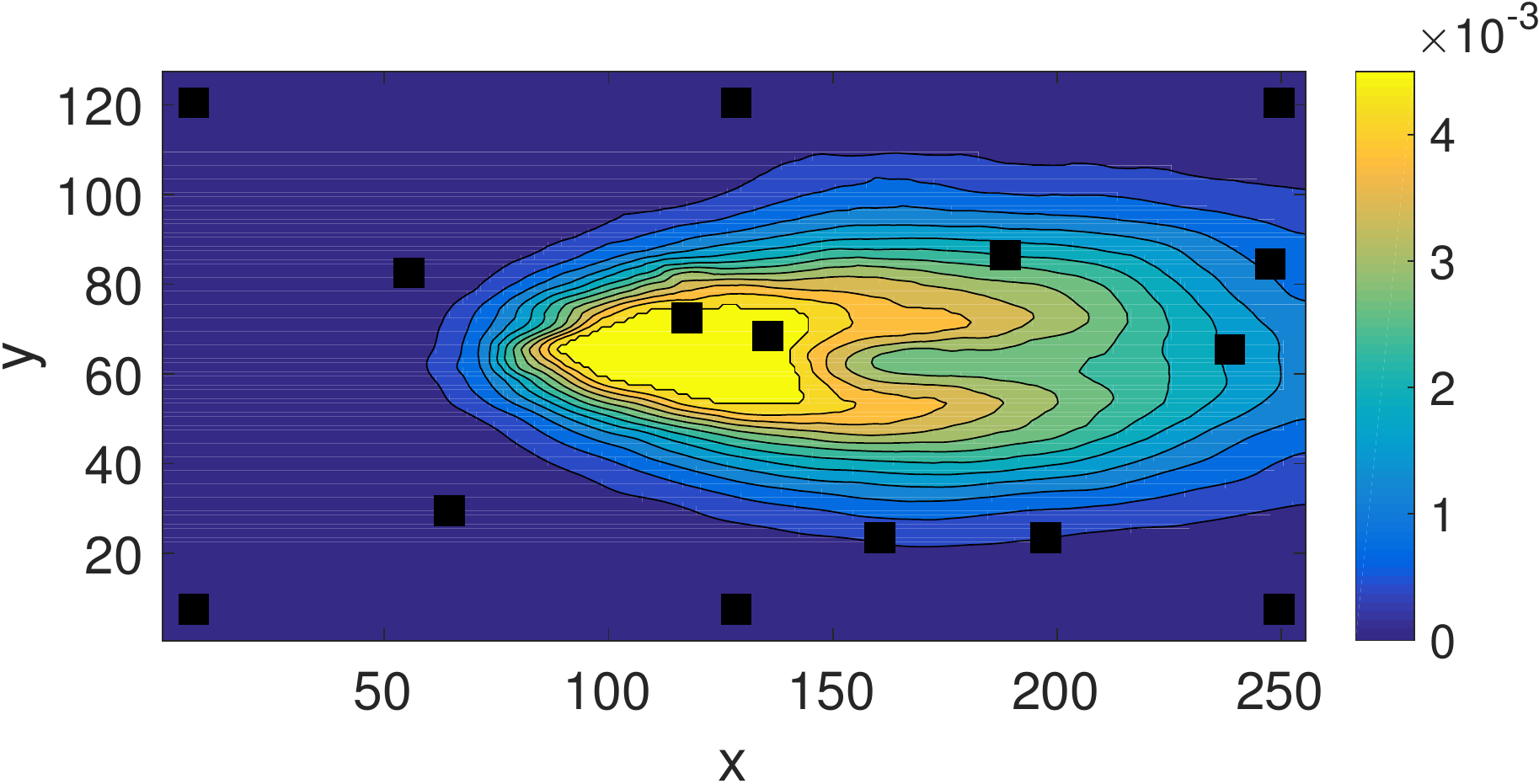}
    \caption{MLMC std.}
  \end{subfigure}
  \begin{subfigure}[b]{0.32\textwidth}
    \centering%
    \includegraphics[width=\textwidth]{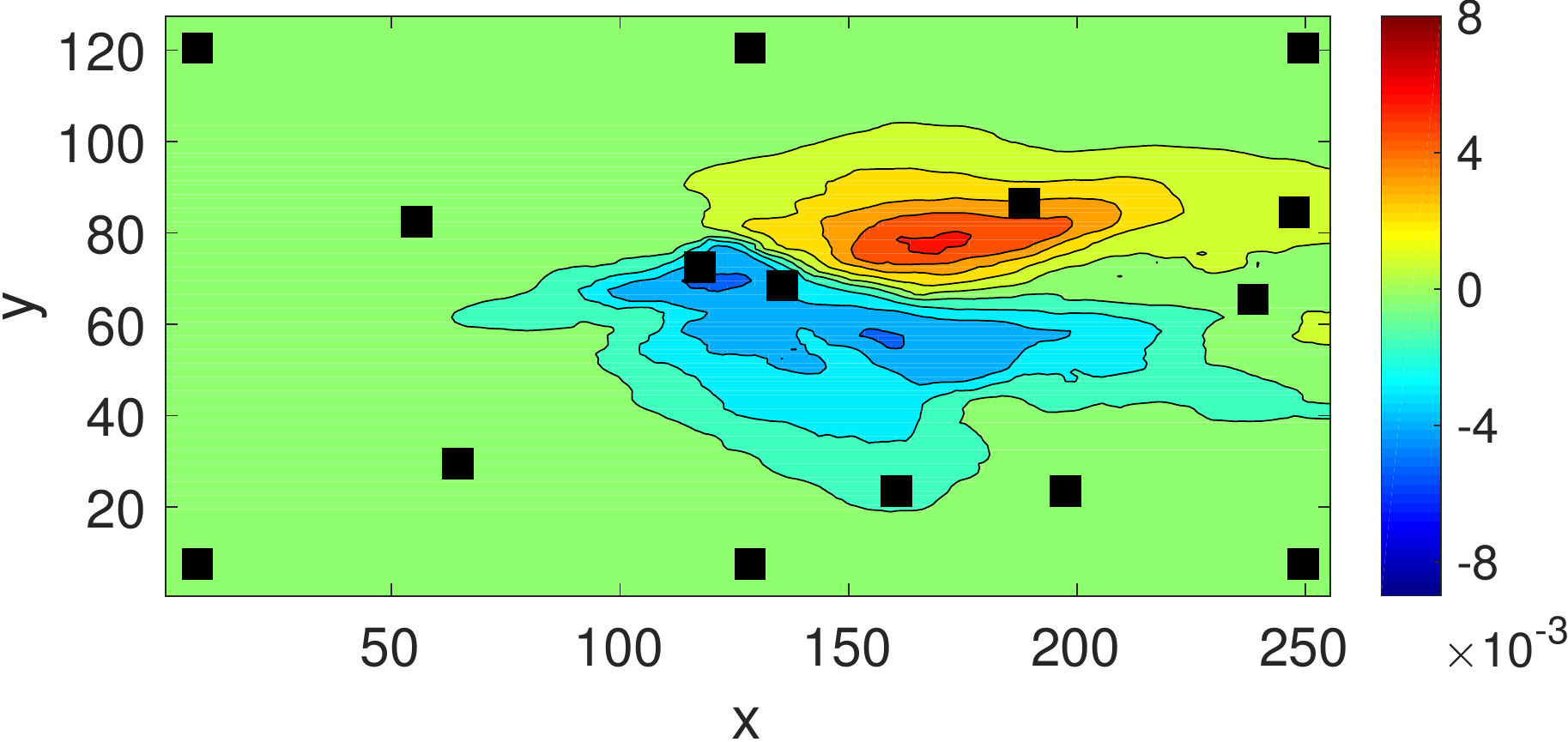}
    \caption{MLMC mean $-\tensor F$}
  \end{subfigure}
  \caption{Reconstruction of the solute concentration by Kriging (first row) and statistics of MLMC combining the ensemble $\{\hat{\tensor F}^m_L\}_{m=1}^{M_L}$ and $\{\hat{\tensor F}^m_H\}_{m=1}^{M_H}$ (second row).}
  \label{fig:adv_krig_mc}
\end{figure}

Figure~\ref{fig:adv_phik} shows $\tensor F_r$, $\hat s$, and $\tensor F_r-\bm F$
obtained with PhIK and CoPhIK. In this case, PhIK is more accurate than CoPhiK.
The reconstructed field $\tensor F_r$ from both methods are closer to the ground
truth than the Kriging results, as evident from smaller $\hat s$ and 
$\tensor F_r-\bm F$.
\begin{figure}[!h]
  \centering%
  \begin{subfigure}[b]{0.32\textwidth}
    \centering%
    \includegraphics[width=\textwidth]{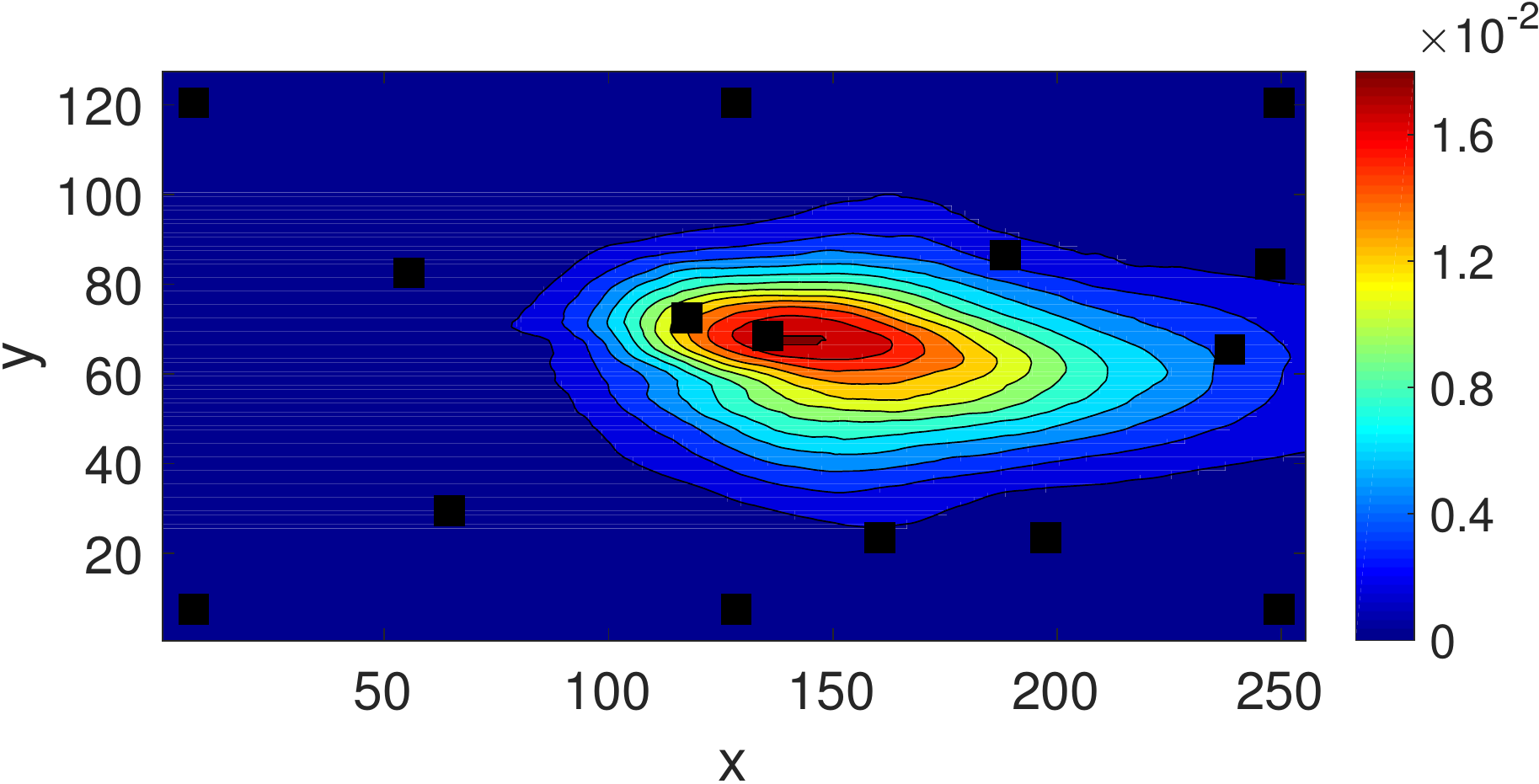}
    \caption{PhIK $\tensor F_r$}
  \end{subfigure}
  \begin{subfigure}[b]{0.32\textwidth}
    \centering%
    \includegraphics[width=\textwidth]{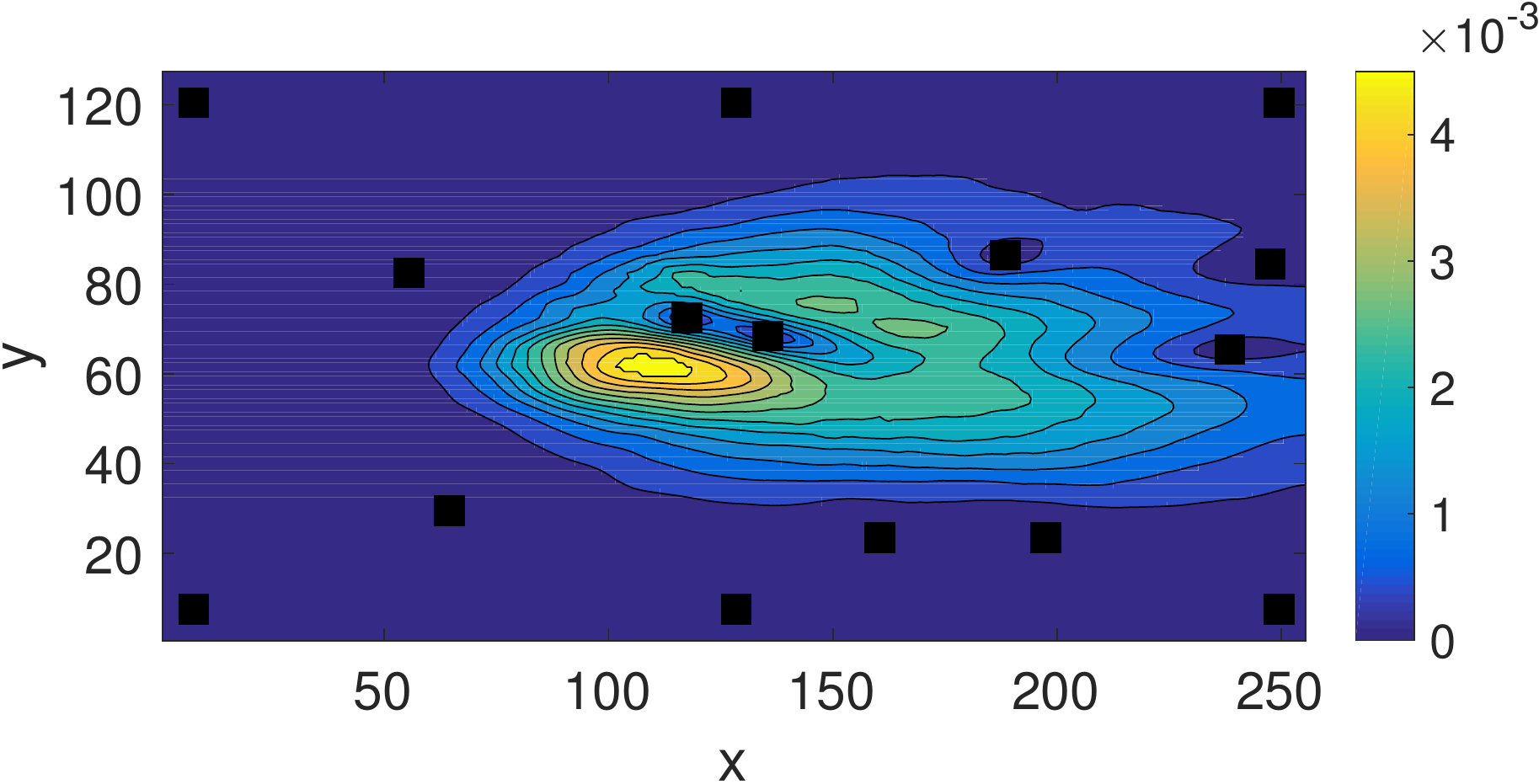}
    \caption{PhIK $\hat s$}
  \end{subfigure}
  \begin{subfigure}[b]{0.32\textwidth}
    \centering%
    \includegraphics[width=\textwidth]{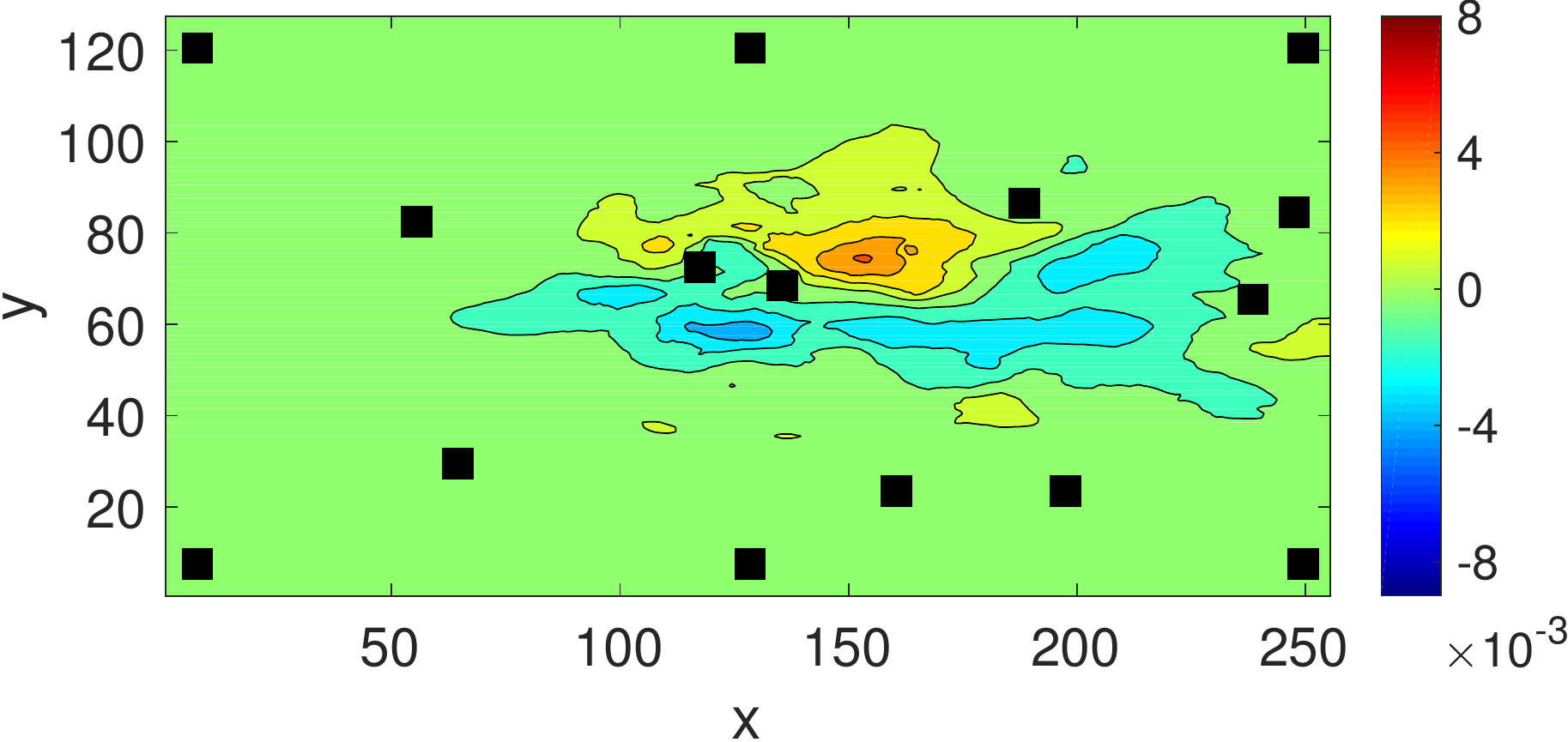}
    \caption{PhIK $\hat{\tensor F}_r - \tensor F$}
  \end{subfigure}
  \begin{subfigure}[b]{0.32\textwidth}
    \centering%
    \includegraphics[width=\textwidth]{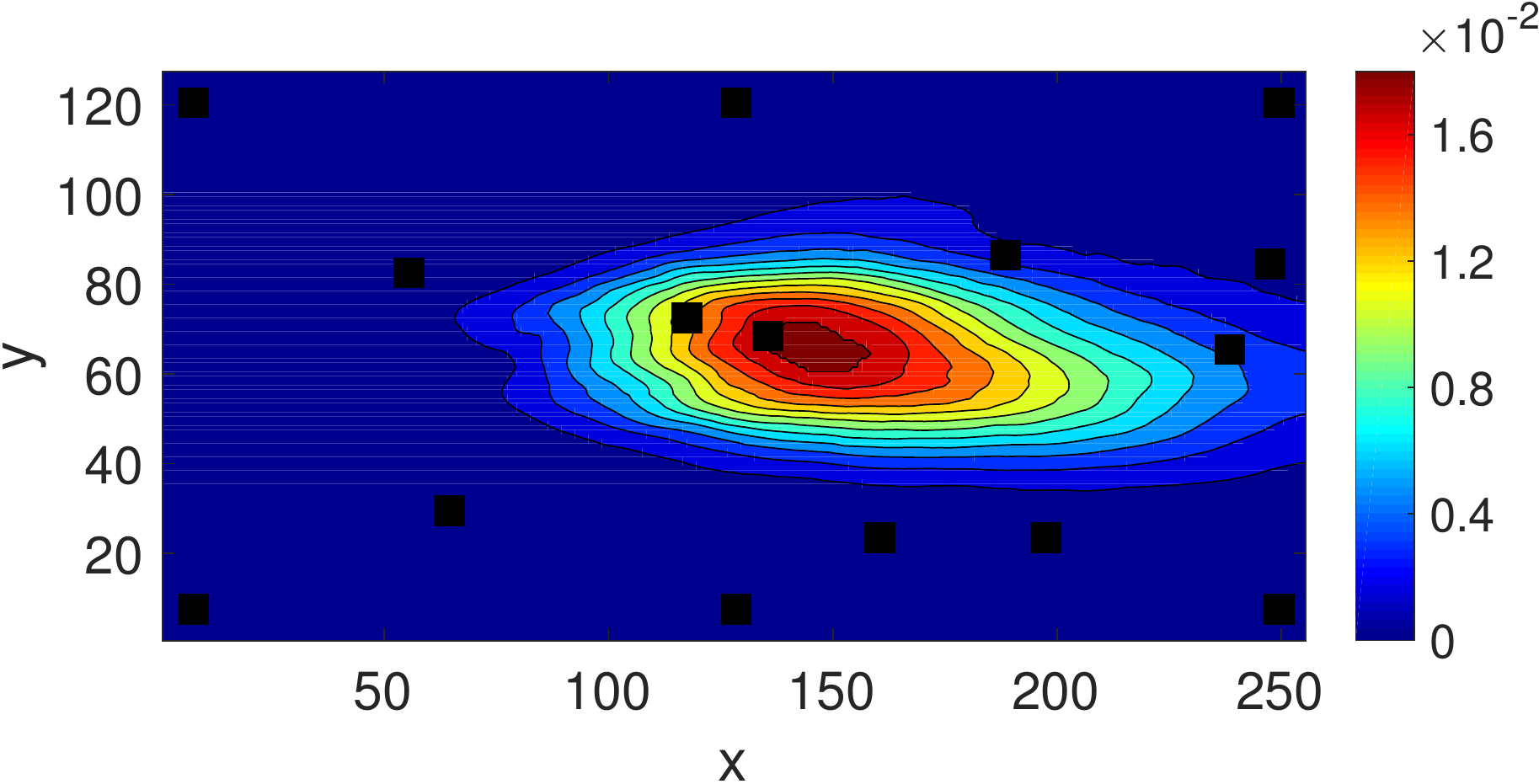}
    \caption{CoPhIK $\tensor F_r$}
  \end{subfigure}
  \begin{subfigure}[b]{0.32\textwidth}
    \centering%
    \includegraphics[width=\textwidth]{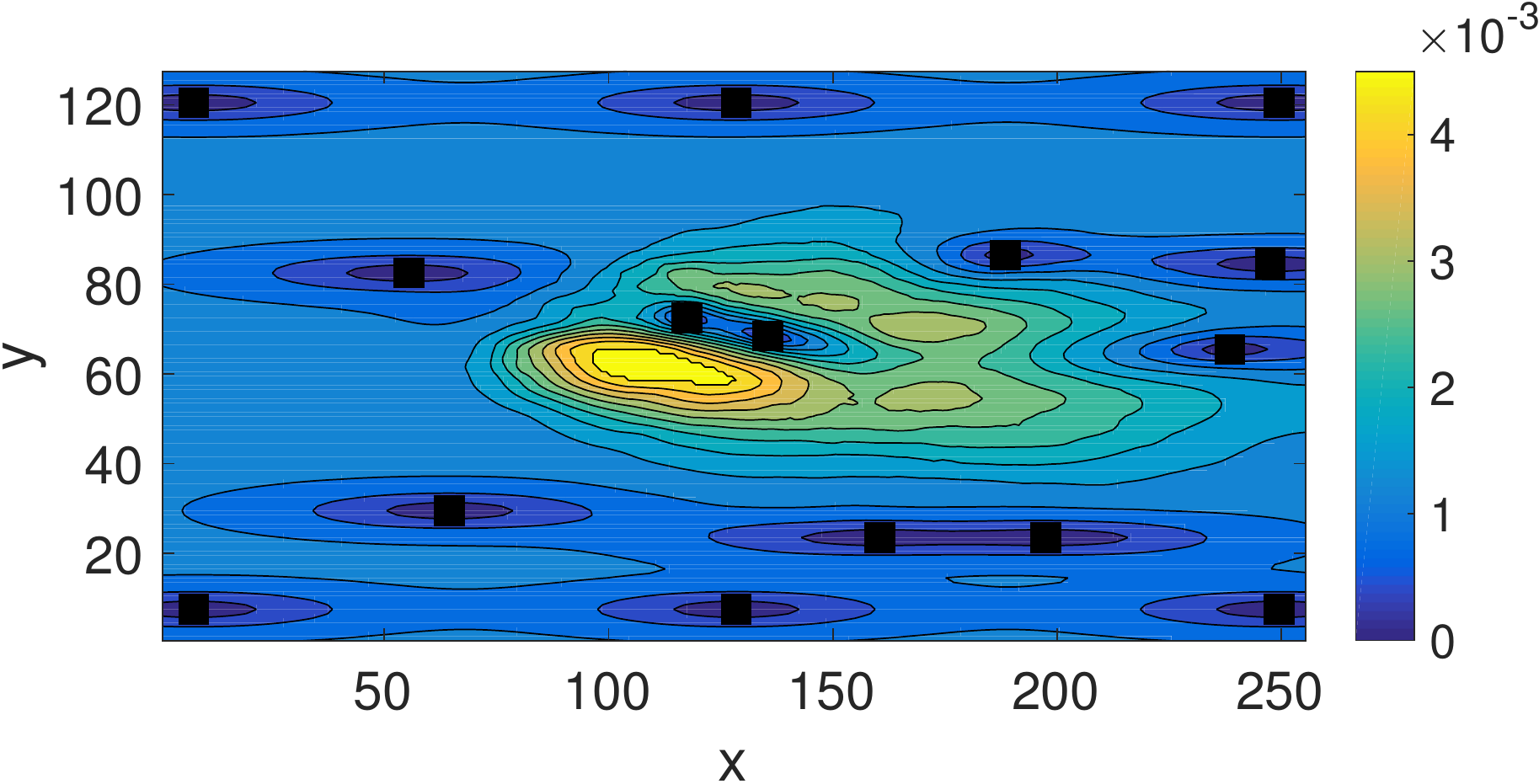}
    \caption{CoPhIK $\hat s$}
  \end{subfigure}
  \begin{subfigure}[b]{0.32\textwidth}
    \centering%
    \includegraphics[width=\textwidth]{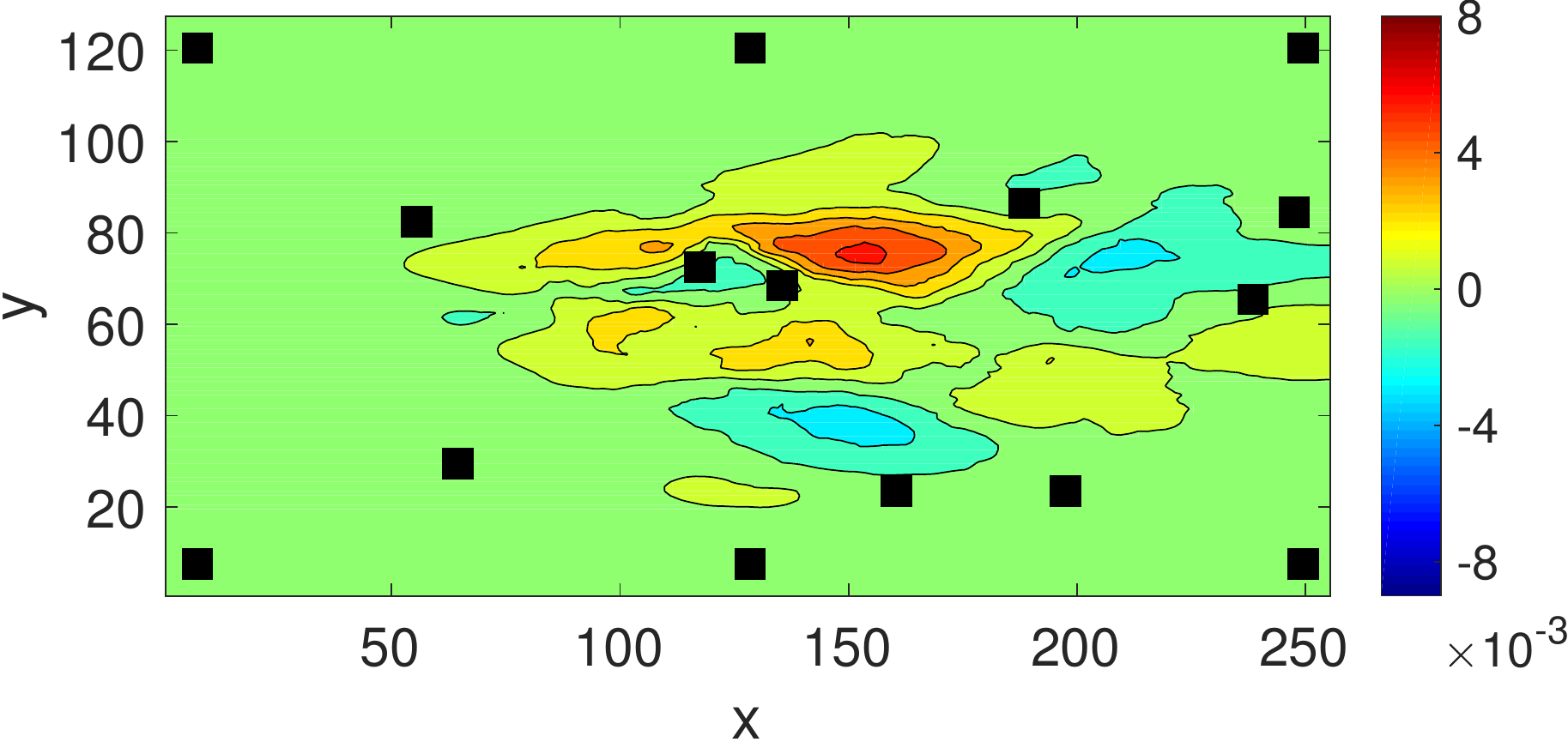}
    \caption{CoPhIK $\hat{\tensor F}_r - \tensor F$}
  \end{subfigure}
  \caption{Reconstruction of the solute concentration field using PhIK (first row) and CoPhIK (second row).}
  \label{fig:adv_phik}
\end{figure}

Finally, we employ the active learning algorithm~\ref{algo:act} to identify
additional observation locations. Figure~\ref{fig:adv_act} presents $15$ 
additional observation locations, indicated by black stars, identified using 
Kriging, PhIK and CoPhIK, and the resulting field reconstructions. Both PhIK and
CoPhIK outperforms Kriging, which can be seen qualitatively in terms of the
structure of the reconstructed plume and quantitative from the pointwise 
difference $\tensor F_r-\tensor F$. It can be seen that the additional 
observations identified by PhIK cluster around the plume, where the 
concentration is high, while Kriging distributes additional observations more
uniformly throughout the entire simulation domain. The behavior of CoPhIK is
between that of Kriging and PhIK, placing additional observations around the 
plume less tightly than PhIK but less spread out than Kriging.
\begin{figure}[!h]
  \centering%
  \begin{subfigure}[b]{0.32\textwidth}
    \centering%
    \includegraphics[width=\textwidth]{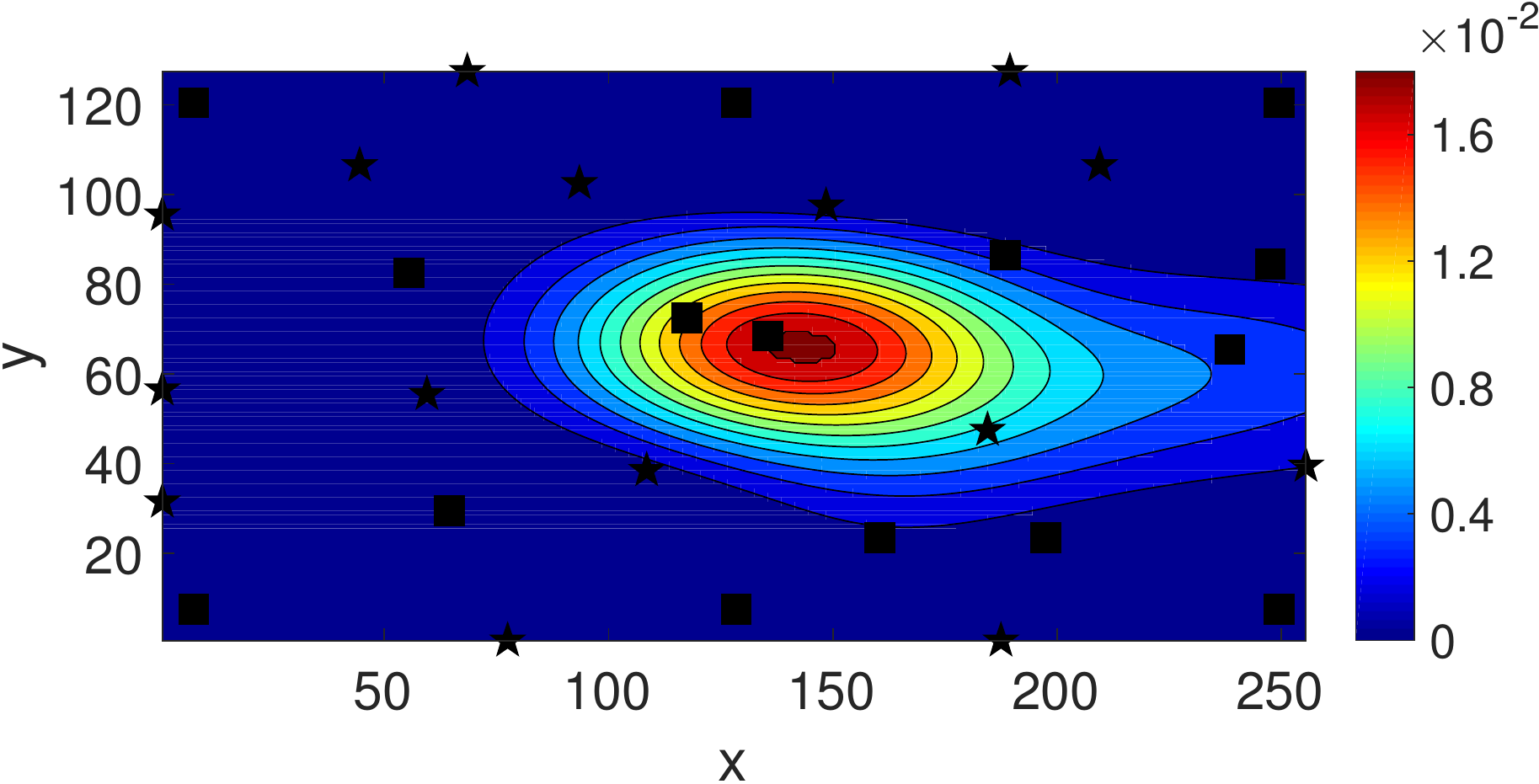}
    \caption{Kriging $\tensor F_r$}
  \end{subfigure}
  \begin{subfigure}[b]{0.32\textwidth}
    \centering%
    \includegraphics[width=\textwidth]{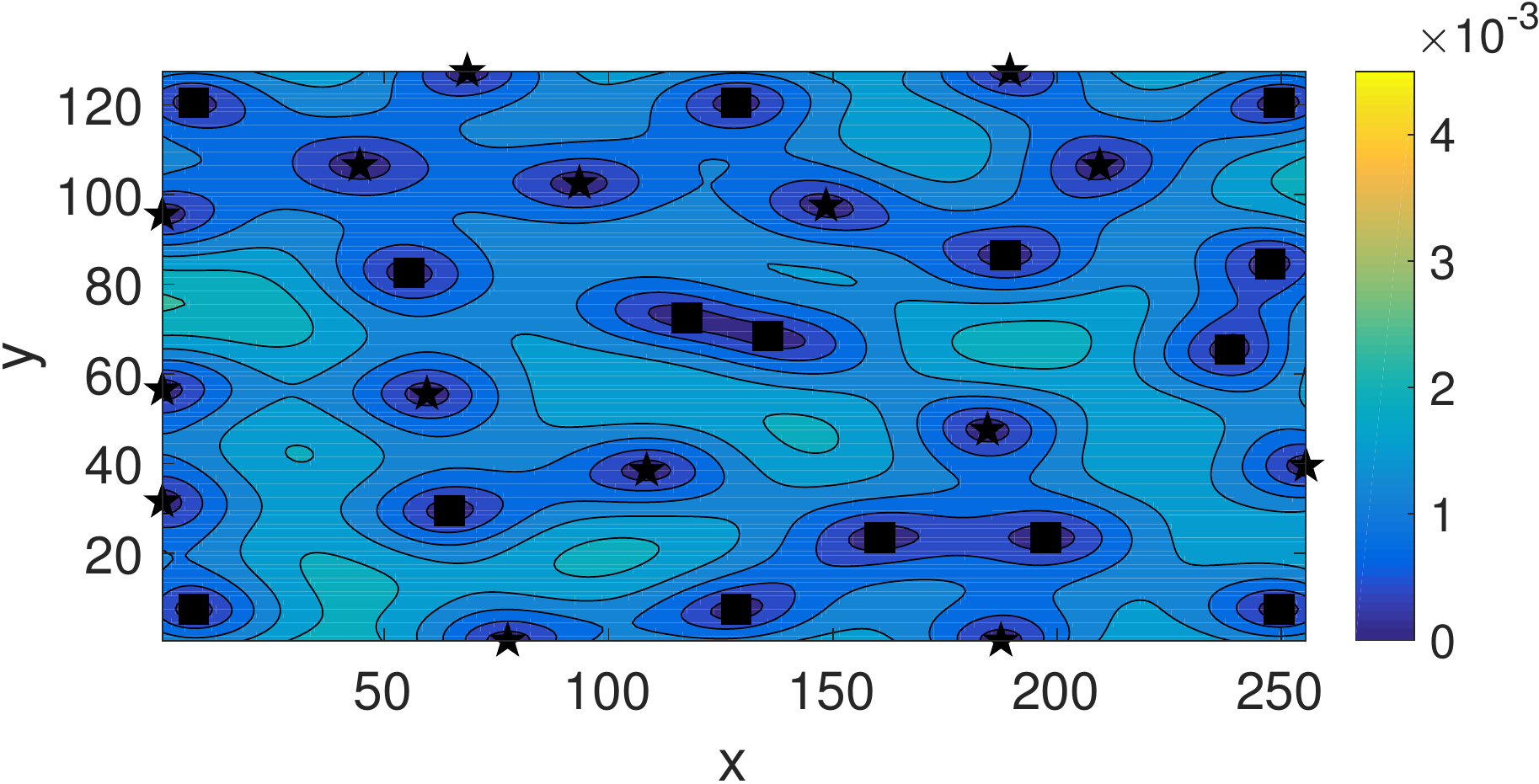}
    \caption{Kriging $\hat s$}
  \end{subfigure}
  \begin{subfigure}[b]{0.32\textwidth}
    \centering%
    \includegraphics[width=\textwidth]{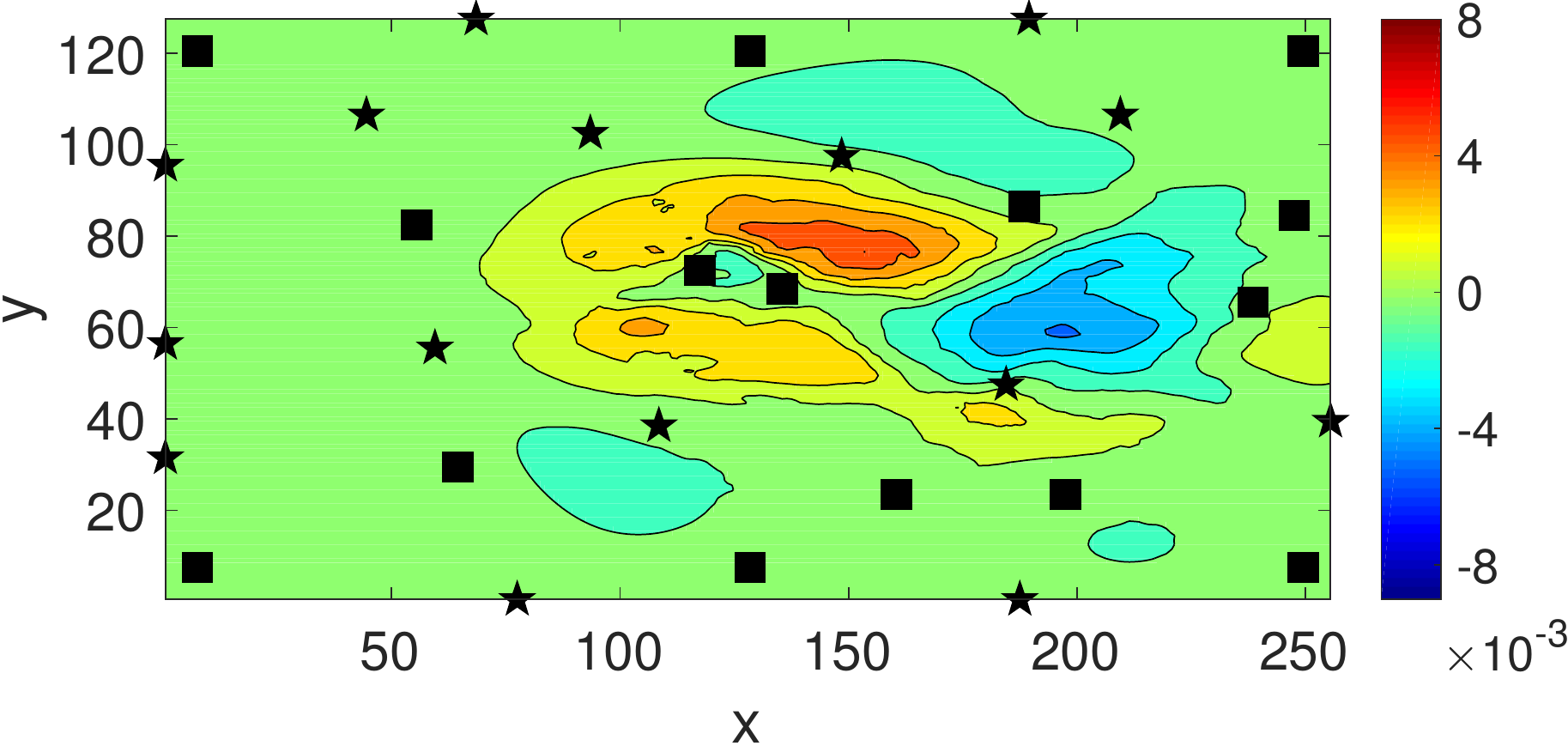}
    \caption{Kriging $\tensor F_r-\tensor F$}
  \end{subfigure}
  \begin{subfigure}[b]{0.32\textwidth}
    \centering%
    \includegraphics[width=\textwidth]{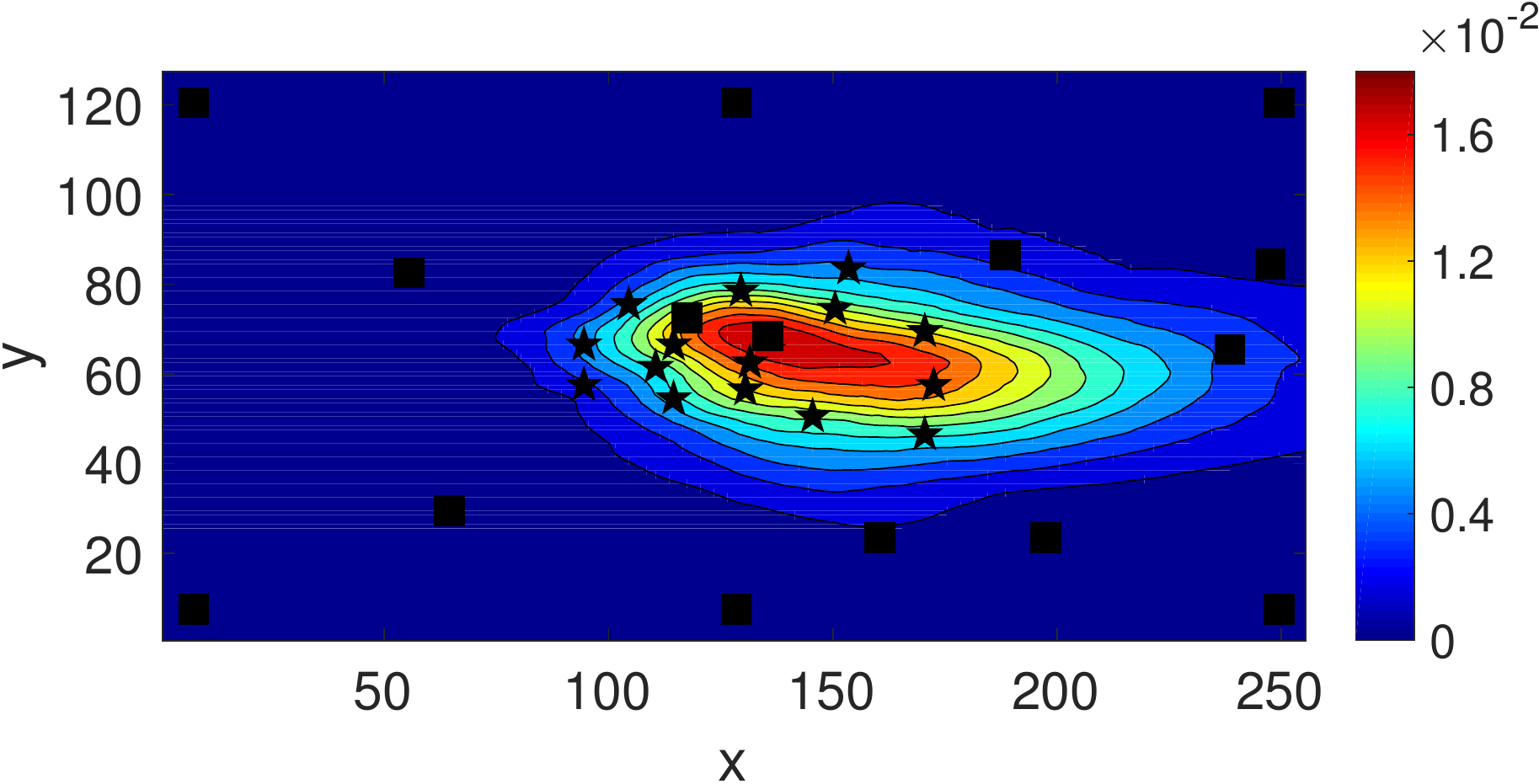}
    \caption{PhIK $\tensor F_r$}
  \end{subfigure}
  \begin{subfigure}[b]{0.32\textwidth}
    \centering%
    \includegraphics[width=\textwidth]{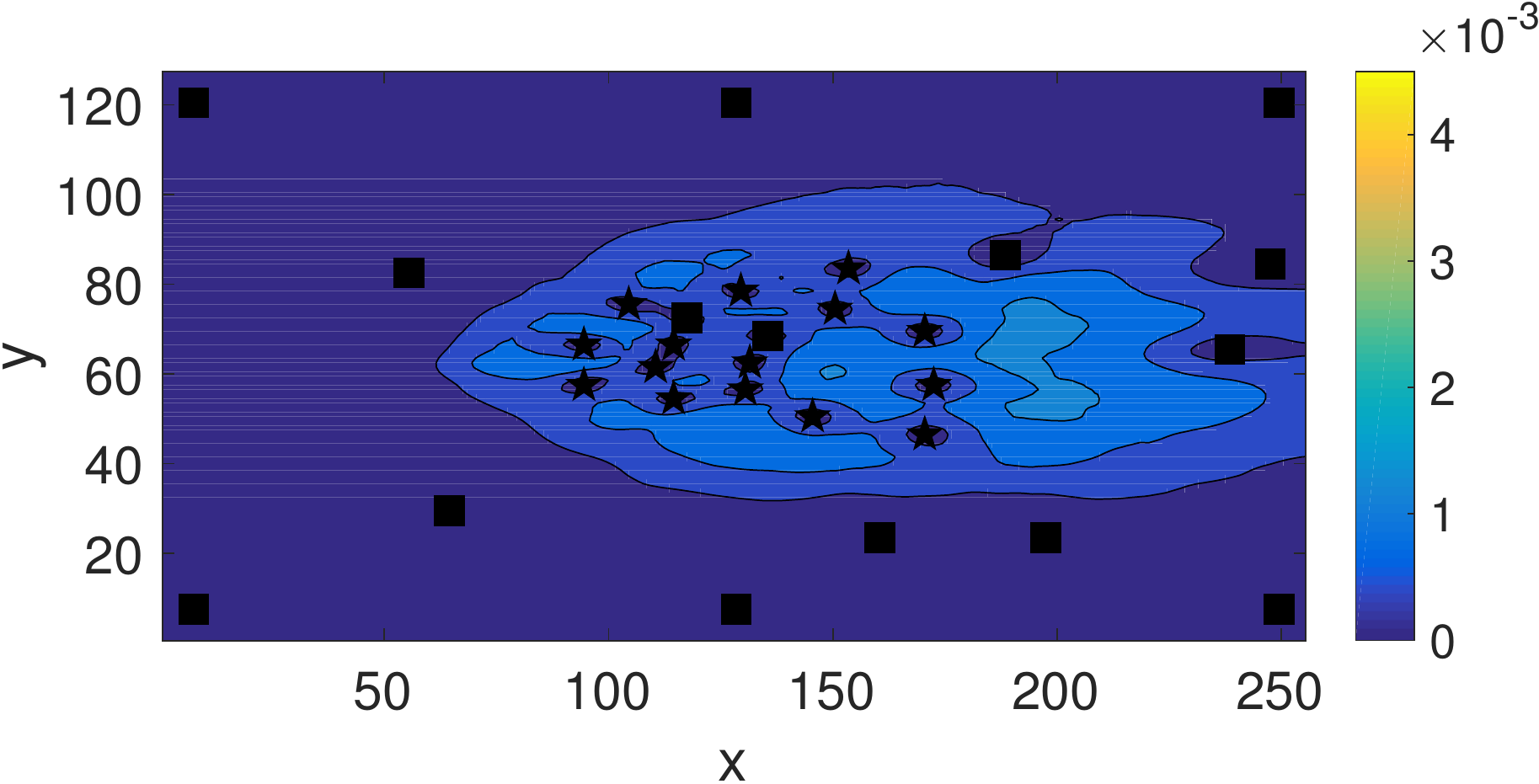}
    \caption{PhIK $\hat s$}
  \end{subfigure}
  \begin{subfigure}[b]{0.32\textwidth}
    \centering%
    \includegraphics[width=\textwidth]{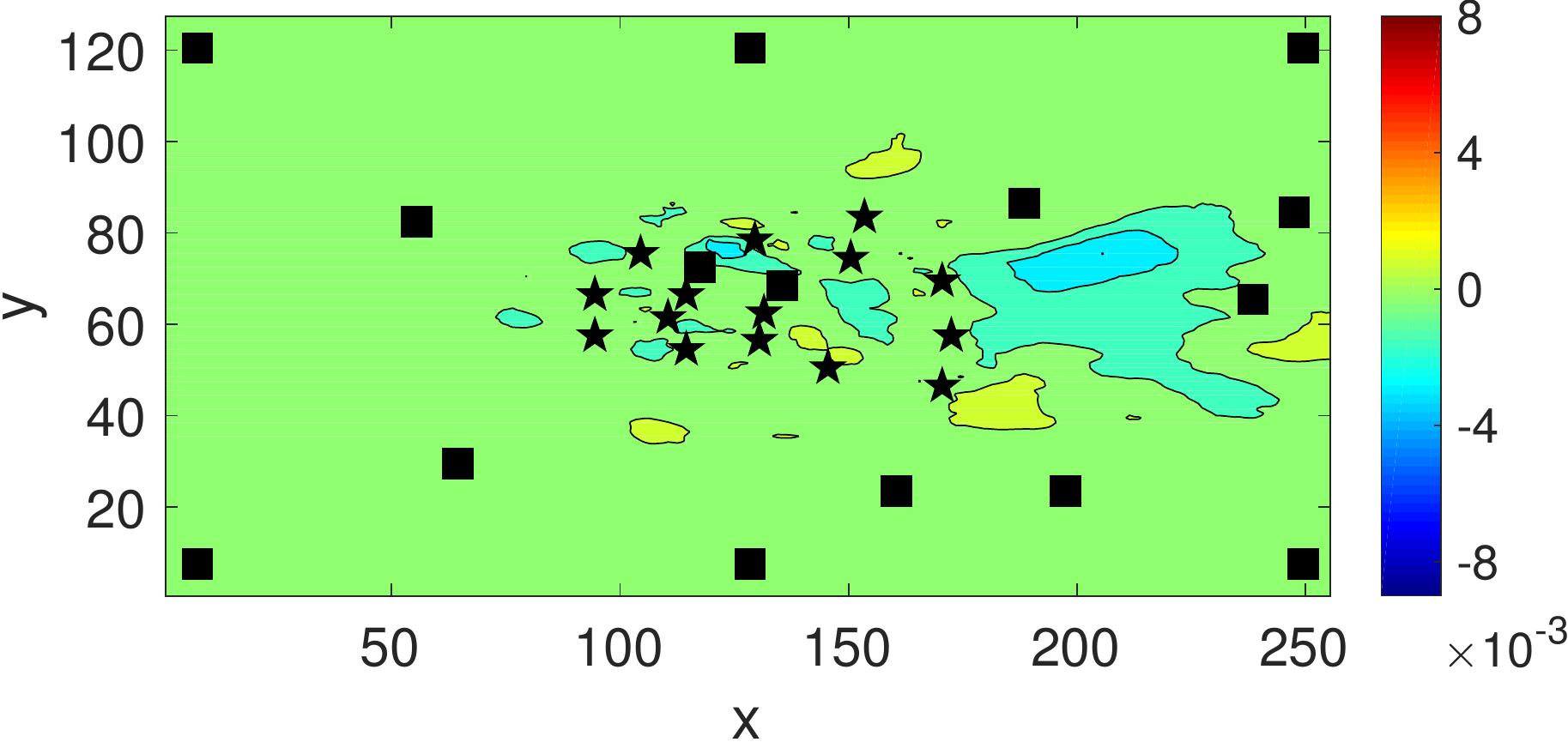}
    \caption{PhIK $\tensor F_r-\tensor F$}
  \end{subfigure}
  \begin{subfigure}[b]{0.32\textwidth}
    \centering%
    \includegraphics[width=\textwidth]{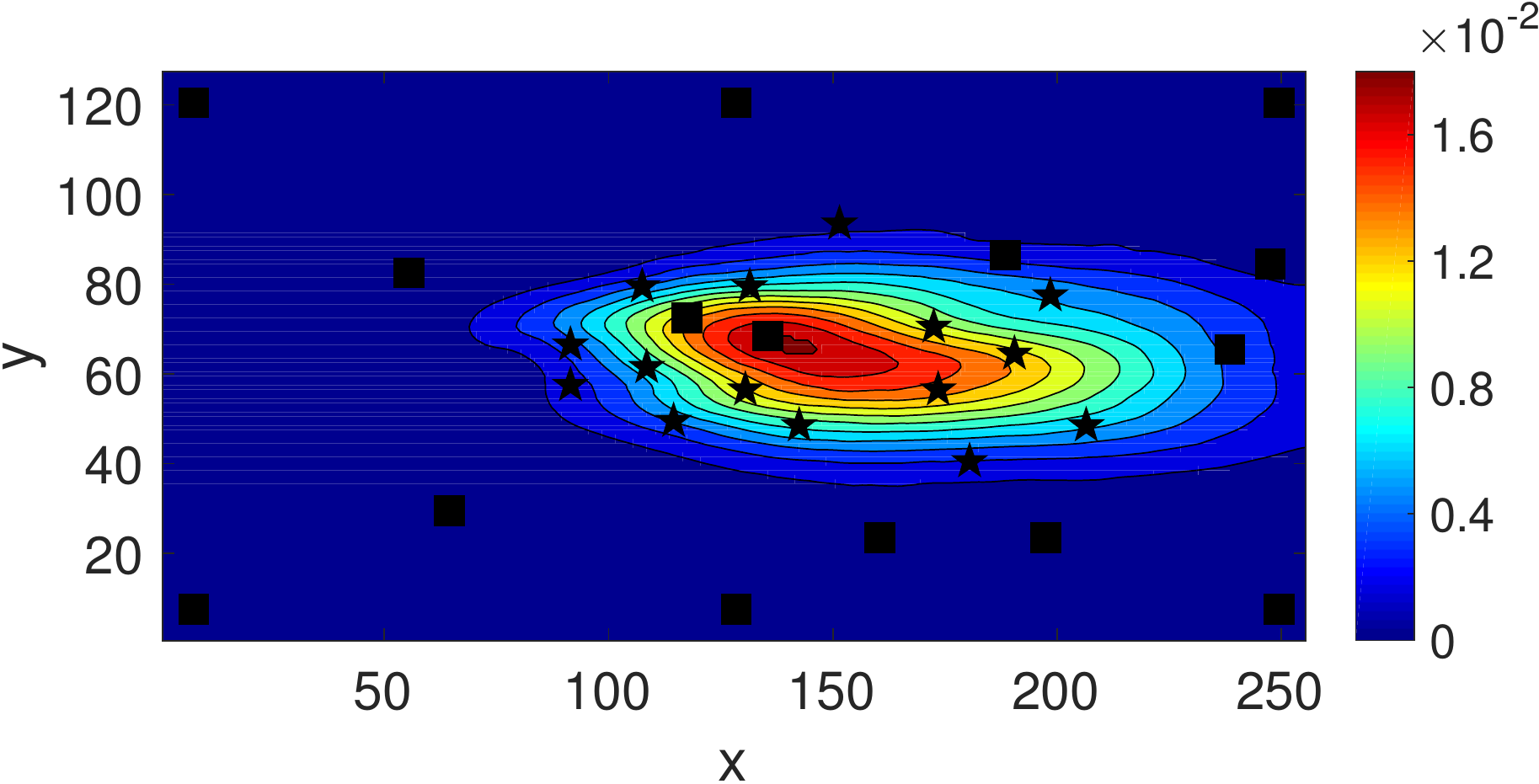}
    \caption{CoPhIK $\tensor F_r$}
  \end{subfigure}
  \begin{subfigure}[b]{0.32\textwidth}
    \centering%
    \includegraphics[width=\textwidth]{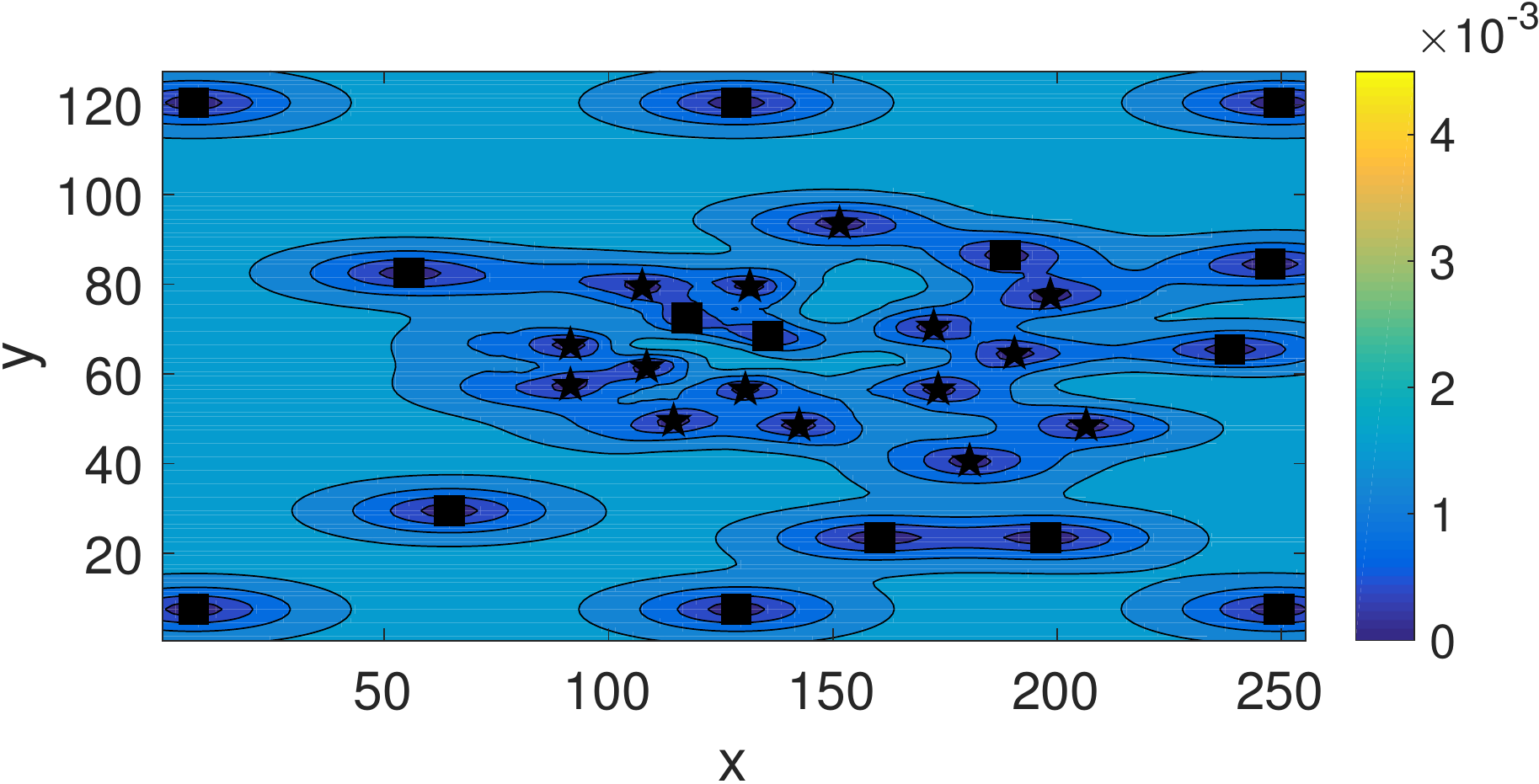}
    \caption{CoPhIK $\hat s$}
  \end{subfigure}
  \begin{subfigure}[b]{0.32\textwidth}
    \centering%
    \includegraphics[width=\textwidth]{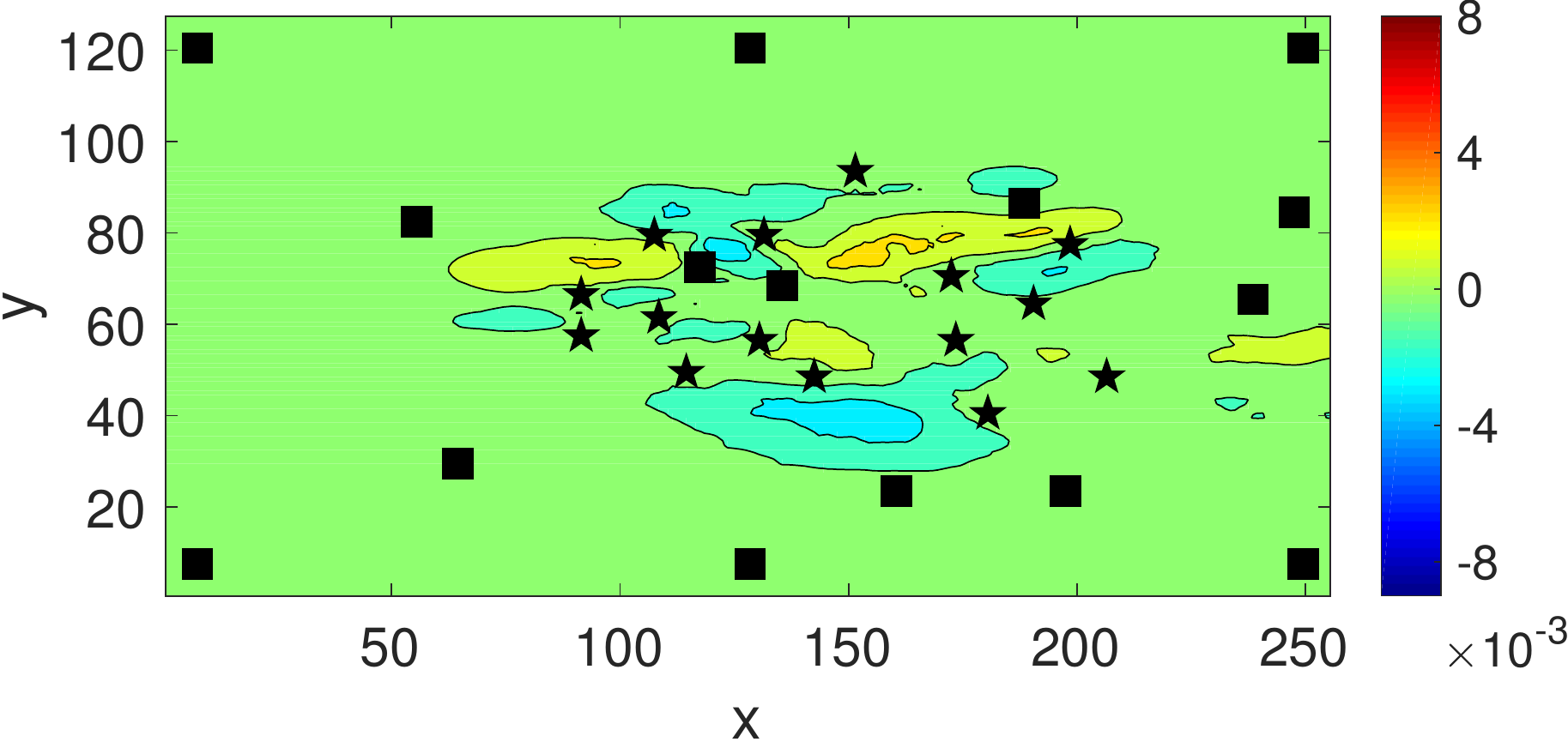}
    \caption{CoPhIK $\tensor F_r-\tensor F$}
  \end{subfigure}
  \caption{Reconstruction of the solute concentration field via active learning
  using Kriging (first row), PhIK (second row) and CoPhIK (third row). Black
squares are the locations of the original $15$ observation. Stars are $15$ 
newly added observations.}
  \label{fig:adv_act}
\end{figure}

Figure~\ref{fig:adv_rel_err} presents the relative error as a function of the 
number of additional observation locations identified via active learning. It 
can be seen that PhIK is more accurate than CoPhIK, especially when number of
observations is small. The difference between these two methods becomes smaller
as more observations are introduced. The error of Kriging decreases in general
with increasing number of observations, but is much larger than that of PhIK and
CoPhIK. For modified PhIK, the magnitude of $\Delta\mu$ is
$\mathcal{O}(10^{-6})$, so that its behavior is similar to that of PhIK.
\begin{figure}[!h]
  \centering%
  \includegraphics[width=0.4\textwidth]{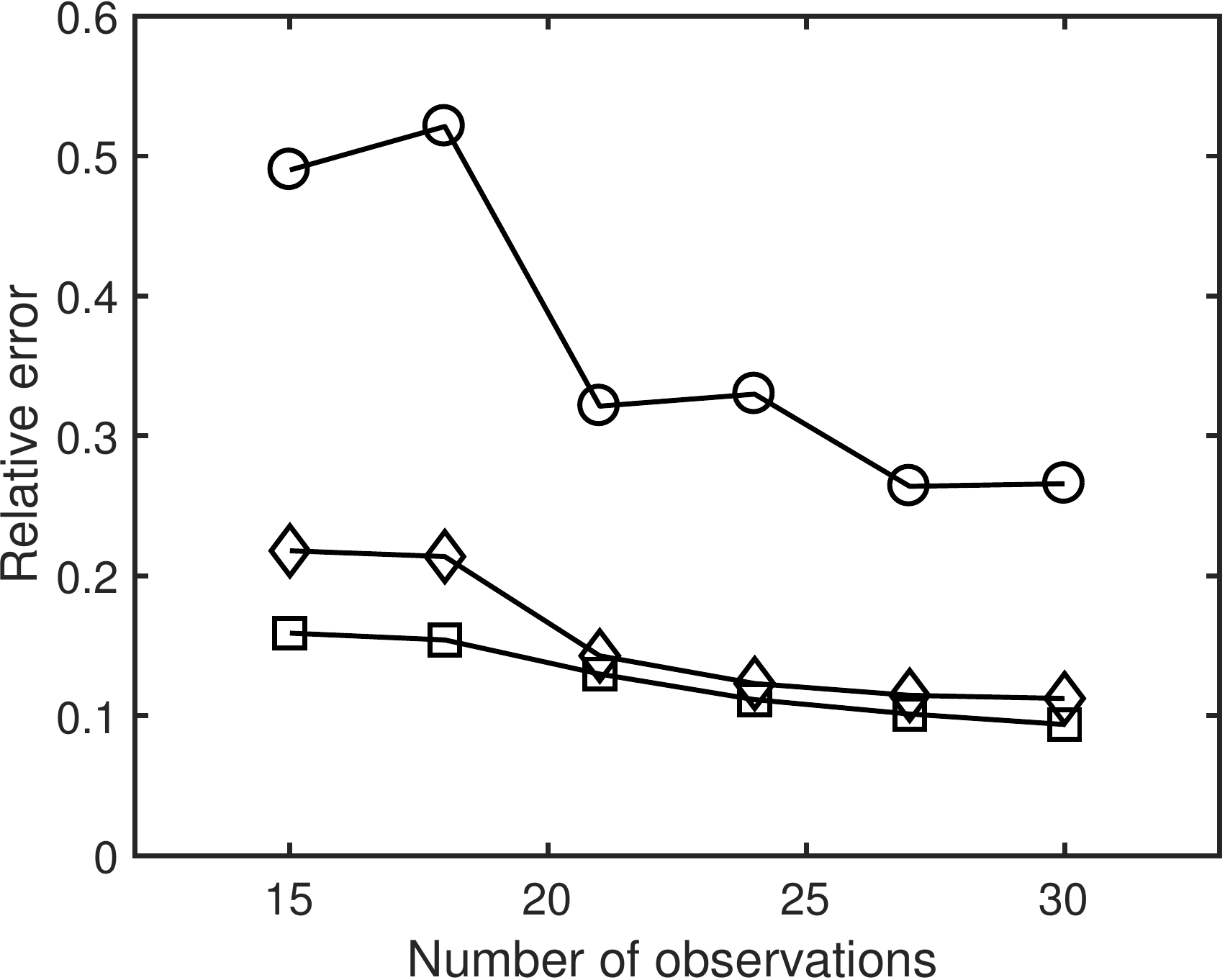}
  \caption{Relative error of reconstructed solute concentration 
  $\Vert\tensor F_r-\tensor F\Vert_F/\Vert\tensor F\Vert_F$ of Kriging
  (``$\circ$"), PhIK (``$\square$") and CoPhIK (``$\diamond$") using different
  numbers of total observations via active learning.}
  \label{fig:adv_rel_err}
\end{figure}

This example is different from the previous two in that a stationary
kernel is not suitable for reconstructing the reference solute concentration 
field. Similarly, a stationary kernel is not adequate for modeling the GP $Y_d$
in CoPhIK. In addition, in this case PhIK outperforms CoPhIK because the ground
truth is a realization of the stochastic model, i.e., the stochastic model in 
this case is accurate. This is different from the previous two examples where we
use incorrect stochastic physical models. A carefully chosen non-stationary 
kernel function would be necessary to improve the accuracy of Kriging and CoPhIK.



\section{Conclusion}
\label{sec:concl}

In this work, we propose CoPhIK, a CoKriging-based multifidelity method that 
uses the PhIK method to combine numerical simulations of physical systems with
accurate observations. The CoPhIK method first constructs a ``low-fidelity'' GP
$Y_{_L}$ via PhIK by estimating its mean and covariance function from the output
of a stochastic physical model reflecting partial knowledge of the system; then,
it models the discrepancy between high-fidelity data (e.g., observations of the
system) and the low-fidelity GP using auxiliary GP $Y_{_d}$. Whereas in PhIK the
(prior) mean and covariance function are entirely defined by the stochastic 
model outputs, CoPhIK incorporates high-fidelity data in constructing the prior
mean and covariance. In addition, we propose a modified version of PhIK by
introducing a correction term to the prior mean. We also provide upper bounds
for the error in enforcing physical constraints using CoPhIK and modified PhIK.
Finally, we demonstrate that an active learning algorithm in combination with
Kriging, PhIK and CoPhIK suggests very different locations for new observations,
and the two physics-informed methods result in significantly more accurate
predictions with reduced uncertainty.

The CoPhIK method presented in this work consists of a non-stationary part
$Y_{_L}$, and a stationary part $Y_{_d}$, in contrast with the ``data-driven''
Kriging method, for which the prior mean and covariance are estimated from data
only, usually requiring an assumption of stationarity. The accuracy of CoPhIK 
predictions and of enforcing physical constraints depends both on the accuracy
of the physical model and the selection of the kernel for $Y_{_d}$. One can 
further improve CoPhIK by employing a non-stationary model for $Y_{_d}$, thus
rendering the GP $Y_{_H}$ fully non-stationary. The choice of non-stationary 
kernel for $Y_{_d}$ is problem dependent, and it may be achieved by exploiting 
additional physical information whenever available.

The presented physics-informed methods are nonintrusive, and can utilize 
existing domain codes to compute the necessary ensembles. Therefore, these 
methods are suitable for large-scale complex applications for which physical 
models and codes are available.

\section*{Acknowledgments}
We thank Dr. Nathan Baker for fruitful discussion.
This work was supported by the U.S. Department of Energy (DOE), Office of 
Science, Office of Advanced Scientific Computing Research (ASCR) as part of the
Uncertainty Quantification in Advection-Diffusion-Reaction Systems. A portion of the 
research described in this paper was conducted under the Laboratory Directed 
Research and Development Program at Pacific Northwest National Laboratory (PNNL).
PNNL is operated by Battelle for the DOE under Contract DE-AC05-76RL01830.

\appendix

\section*{Appendices}

\renewcommand{\theequation}{A.\arabic{equation}}

\subsection*{A. Constructing GP $Y(\bm x)$ in PhIK using MLMC}
\label{subsec:mlmc}
For simplicity, we demonstrate the idea via two-level MLMC \cite{YangTT18}. We
use $u_{_L}^m(\bm x)$ ($m=1,...,M_L$) and $u_{_H}^m(\bm x)$ ($m=1,...,M_H$) to 
denote $M_L$ low-accuracy and $M_H$ high-accuracy realizations of the stochastic
model $u(\bm x;\omega)$ for the system. In this work, $u_{_L}^m$ are simulation 
results on coarse grids $\mathbb{D}_L$, and $u_{_H}^m$ are simulations results
on fine grids $\mathbb{D}_H$. We denote 
$\overline u(\bm x)=u_{_H}(\bm x)-u_{_L}(\bm x)$.
Here, when computing $\overline u$, we interpolate $u_L$ from $\mathbb{D}_L$ to
$\mathbb{D}_H$. The mean of $Y(\bm x)$ is estimated as
\begin{equation}
  \label{eq:mlmc_mean}
  \mexp{Y(\bm x)}=\mu(\bm x)\approx \mu_{_\mathrm{MLMC}}(\bm x)
  =\dfrac{1}{M_L}\sum_{m=1}^{M_L} u_{_L}^m(\bm x) +
      \dfrac{1}{M_H}\sum_{m=1}^{M_H}\overline u^m(\bm x).
\end{equation}
which is the standard MLMC estimate of the mean \cite{giles2008multilevel}.
The covariance function of $Y(\bm x)$ is estimated as: 
\begin{equation}
\label{eq:mlmc_cov}
  \begin{aligned}
    & \cov\left\{Y(\bm x), Y(\bm x')\right\} 
    \approx k_{_\mathrm{MLMC}}(\bm x, \bm x') \\
    = &\dfrac{1}{M_L-1} \sum_{m=1}^{M_L} \bigg(u_{_L}^m(\bm x)-
    \dfrac{1}{M_L}\sum_{m=1}^{M_L}u_{_L}^m(\bm x) \bigg) 
    \bigg(u_{_L}^m(\bm x')-\dfrac{1}{M_L}\sum_{m=1}^{M_L}u_{_L}^m(\bm x')\bigg)  \\
    & + \dfrac{1}{M_H-1} \sum_{m=1}^{M_H} \bigg(\overline u^m(\bm x)
      -\dfrac{1}{M_H}\sum_{m=1}^{M_H}\overline u^m(\bm x)\bigg)
  \bigg(\overline u^m(\bm x')-\dfrac{1}{M_H}\sum_{m=1}^{M_H}\overline u^m(\bm x')\bigg).
 \end{aligned}
\end{equation}
Finally, the MLMC-based PhIK model takes the form
\begin{equation}
\label{eq:mlmc_pred}
  \hat y(\bm x^*) = \mu_{_\mathrm{MLMC}}(\bm x^*) + 
  \bm c_{_\mathrm{MLMC}}^\trans\tensor{C}_{_\mathrm{MLMC}}^{-1}(\bm y-\bm\mu_{_\mathrm{MLMC}}), 
\end{equation}
where $\bm\mu_{_\mathrm{MLMC}}=\left(\mu_{_\mathrm{MLMC}}(\bm x^{(1)}), \dotsc, \mu_{_\mathrm{MLMC}}(\bm x^{(N)})\right)^\trans$. 
The matrix $\tensor C_{_\mathrm{MLMC}}$ and vector $\bm c_{_\mathrm{MLMC}}$ are
approximations of $\tensor C$ in Eq.~\eqref{eq:cov_matrix0} and $\bm c$ in 
Eq.~\eqref{eq:cov_vec} using $k_{_\mathrm{MLMC}}$ in Eq.~\eqref{eq:mlmc_cov}. 
The MSE of this prediction is
\begin{equation}
\label{eq:mse_mlmc}
  \hat s^2(\bm x^*) = \sigma^2_{_\mathrm{MLMC}}(\bm x^*)-\bm c_{_\mathrm{MLMC}}^\trans 
  \tensor{C}_{_\mathrm{MLMC}}^{-1}\bm c_{_\mathrm{MLMC}},
\end{equation}
where $\sigma_{_\mathrm{MLMC}}^2(\bm x^*)=k_{_\mathrm{MLMC}}(\bm x^*,\bm x^*)$.
If i.i.d. Gaussian noise is assumed in the observation, replace 
$\tensor C_{_\mathrm{MLMC}}$ with $\tensor C_{_\mathrm{MLMC}}+\delta^2\tensor I$,
where $\delta^2$ is the variance of the noise.

\subsection*{B. Active learning}
In this work, \emph{active learning} is a process of identifying locations for
additional observations that minimize the prediction error and reduce MSE or
uncertainty, 
e.g.,~\cite{cohn1996active,jones1998efficient,tong2001support,collet2015optimism}.
We use a greedy algorithm to add additional observations, i.e., to add new 
observations at the maxima of $s(\bm x)$, 
e.g.,~\cite{forrester2008engineering, raissi2017machine}. Then, we can make a 
new prediction $\hat y(\bm x)$ for $\bm x\in\mathbb{D}$ and compute a new 
$\hat s^2(\bm x)$ to select the next location for additional observation
(see Algorithm~\ref{algo:act}). 
\begin{algorithm}[!h]
  \caption{Active learning based on GPR}
  \label{algo:act}
  \begin{algorithmic}[1]
    \State Specify the locations $\bm X$, corresponding observations $\bm y$, and 
    the maximum number of observations $N_{\max}$ affordable. The number of 
    available observations is denoted as $N$.
    \While {$N_{\max}>N$}
    \State Compute the MSE $\hat s^2(\bm x)$ of MLE prediction $\hat y(\bm x)$ for 
    $\bm x\in \mathbb{D}$.
    \State Locate the location $\bm x_m$ for the maximum of $\hat s^2(\bm x)$ for
    $\bm x\in\mathbb{D}$. 
    \State Obtain observation $y_m$ at $\bm x_m$ and set 
    $\bm X = \{\bm X, \bm x_m\}, \bm y = (\bm y^\trans, y_m)^\trans, N=N+1$.
    \EndWhile
    \State Construct the MLE prediction of $\hat y(\bm x)$ on $\mathbb{D}$ using $\bm X$ 
    and $\bm y$.
  \end{algorithmic}
\end{algorithm}
This selection criterion is based on the statistical interpretation of the
interpolation. More sophisticated sensor placement algorithms can be found in
literature, e.g.,~\cite{jones1998efficient, krause2008near, garnett2010bayesian}, 
and PhIK or CoPhIK are complementary to these methods.


\bibliographystyle{plain}
\bibliography{ref}

\end{document}